\title{The phase diagram of approximation rates for deep neural networks}
\author{%
  Dmitry Yarotsky\\
  Skolkovo Institute of Science and Technology\\
  \texttt{d.yarotsky@skoltech.ru} \\
  \And
  Anton Zhevnerchuk\\
  Skolkovo Institute of Science and Technology\\
  \texttt{Anton.Zhevnerchuk@skoltech.ru} 
}
\pgfplotsset{compat=1.14}
\newtheorem{proposition}{Proposition}[section]
\newtheorem{lemma}{Lemma}[section]
\newtheorem{theorem}{Theorem}[section]
\newtheorem{remark}{Remark}[section]
\newtheorem{corol}{Corollary}[section]
\newcommand{\R}{\mathbb{R}}
\newcommand{\Z}{\mathbb{Z}}
\newcommand{\n}{\mathbf{n}}
\newcommand{\m}{\mathbf{m}}
\newcommand{\x}{\mathbf{x}}
\newcommand{\q}{\mathbf{q}}
\renewcommand{\k}{\mathbf{k}}
\newcommand{\kh}{\widehat{\mathbf{k}}}
\newcommand{\ft}{\widetilde{f}}
\newcommand{\wt}{\widetilde{w}}
\renewcommand{\a}{\widehat{a}}
\newcommand{\at}{\widetilde{a}}
\tikzset{cross/.style={cross out, draw=black, minimum size=2*(#1), inner sep=0cm, outer sep=0cm}, cross/.default={1cm}}
\tikzset{
    set arrow inside/.code={\pgfqkeys{/tikz/arrow inside}{#1}},
    set arrow inside={end/.initial=>, opt/.initial=},
    /pgf/decoration/Mark/.style={
        mark/.expanded=at position #1 with
        {
            \noexpand\arrow[\pgfkeysvalueof{/tikz/arrow inside/opt}]{\pgfkeysvalueof{/tikz/arrow inside/end}}
        }
    },
    arrow inside/.style 2 args={
        set arrow inside={#1},
        postaction={
            decorate,decoration={
                markings,Mark/.list={#2}
            }
        }
    },
}
\begin{document}
\maketitle

\begin{abstract}
We explore the phase diagram of approximation rates for deep neural networks and prove several new theoretical results. In particular, we generalize the existing result on the existence of deep discontinuous phase in ReLU networks to functional classes of arbitrary positive smoothness, and identify the boundary between the feasible and infeasible rates. Moreover, we show that all networks with a piecewise polynomial activation function have the same phase diagram. Next, we demonstrate that standard fully-connected architectures with a fixed width independent of smoothness can adapt to smoothness and achieve almost optimal rates. Finally, we consider deep networks with periodic activations (``deep Fourier expansion'') and prove that they have very fast, nearly exponential approximation rates, thanks to the emerging capability of the network to implement efficient lookup operations. 
\end{abstract}

\section{Introduction}

There is a subtle interplay between different notions of complexity for neural networks. One, most obvious, aspect of complexity is the network size measured in terms of the number of connections and neurons. Another is characteristics of the network architecture (e.g., shallow or deep). A third is the type of the activation function used in the neurons. Yet another, important but sometimes overlooked aspect is the precision of operations performed by neurons. All these complexities are connected by tradeoffs: if we fix a particular problem solvable by neural networks, then we have some freedom in decreasing one complexity at the cost of others. The question we address is: \emph{what are the limits of this freedom}? In the present paper we perform a systematic theoretical study of this question in the context of network expressiveness. We fix the classical approximation problem and explore the opportunities potentially present in solving it within different neural network scenarios. 

Specifically, suppose that we have a class $F$ of maps from the $d$-dimensional cube $[0,1]^d$ to $\mathbb R$,
and we want the network to approximate elements of $F$ in the uniform norm $\|\cdot\|_\infty$. We will make the standard assumption that $F$ is a Sobolev- or H\"older ball of smoothness $r>0$ (i.e., a ball of ``$r$ times differentiable functions'', see Section \ref{sec:prelim}). Then, for a particular type of approximation model, we examine the optimal \emph{approximation rate}, i.e. the relation beween the approximation accuracy and the required number $W$ of model parameters. Typically, this relation has the form of a power law
\begin{align}\label{eq:rate}
\|f - \ft_{W}\|_{\infty} = O(W^{-p}), \quad \forall f \in F,
\end{align}
where $\ft_W$ is an approximation of $f$ by a model with $W$ parameters, and $p$ is a constant (which we will also refer to as the \emph{rate}). In standard fully-connected networks, there is one parameter (weight) per each connection and neuron, so $W$ can be equivalently viewed as the size of the model. Our approach in this paper will be to analyze how the rates $p$ depend on various approximation conditions (e.g., network depth, activation functions, etc.). 

There are several important general ideas explaining which approximation rates $p$ we can reasonably expect in Eq.\eqref{eq:rate}. In the context of abstract approximation theory, we can forget (for a moment) about the network-based implementation of $\ft_{W}$ and just think of it as some approximate parameterization of $F$ by vectors $\mathbf w\in\mathbb R^W$. Let us view the approximation process $f\mapsto\ft_W$ as a composition of the \emph{weight assignment} map $f\mapsto \mathbf w_f\in \mathbb R^W$ and the \emph{reconstruction} map $\mathbf w_f\mapsto \ft_{W}\in\mathcal F,$ where $\mathcal F$ is the full normed space containing $F$. If both the weight assignment and reconstruction maps were linear, and so their composition $f\mapsto\ft_{W}$, the l.h.s. of Eq.\eqref{eq:rate} could be estimated by the \emph{linear $W$-width} of the set $F$ (see \cite{constrappr96}). For a Sobolev ball of $d$-variate functions $f$ of smoothness $r$, the linear $W$-width is asymptotically $\sim W^{-r/d}$, suggesting the approximation rate $p=\tfrac{r}{d}.$ Remarkably, this argument extends to \emph{non-linear} weight assignment and reconstruction maps under the assumption that the weight assignment is \emph{continuous}. More precisely, it was proved in \cite{continuous} that, under this assumption, $p$ in Eq.\eqref{eq:rate} cannot be larger than $\tfrac{r}{d}$.    

An even more important set of ideas is related to estimates of Vapnik-Chervonenkis dimensions of deep neural networks. The concept of expressiveness in terms of VC-dimension (based on finite set shattering) is weaker than expressiveness in terms of uniform approximation, but upper bounds on the VC-dimension directly imply upper bounds on feasible approximation rates. In particular, the VC-dimension of networks with piecewise-polynomial activations is $O(W^2)$ (\cite{goldberg1995bounding}), which implies that $p$ cannot be larger than $\tfrac{2r}{d}$ -- note the additional factor 2 coming from the power 2 in the VC bound.  We refer to the book \cite{anthony2009neural} for a detailed exposition of this and related results.

Returning to approximations with networks, the above arguments suggest that the rate $p$ in Eq.\eqref{eq:rate} can be up to $\frac{r}{d}$ assuming the continuity of the weight assignment, and up to $\frac{2r}{d}$ without assuming the continuity, but assuming a piecewise-polynomial activation function such as ReLU. We then face the constructive problem of showing that these rates can indeed be fulfilled by a network computation. One standard general strategy of proving the rate $p=\frac{r}{d}$ is based on polynomial approximations of $f$ (in particular, via the Taylor expansion).  A survey of early results along this line for networks with a single hidden layer and suitable activation functions can be found in \cite{pinkus1999review}. An interesting aspect of piecewise-linear activations such as ReLU is that the rate $p=\frac{r}{d}$ cannot be achieved with single-layer networks, but can be achieved with deeper networks implementing approximate multiplication and polynomials (\cite{yarsawtooth, liang2016why, petersen2018optimal, safran2017depth}).

\begin{figure}

\begin{subfigure}[b]{0.55\textwidth}
\centering
    \includegraphics[scale = 0.7, clip, trim=25mm 21mm 15mm 17mm ]{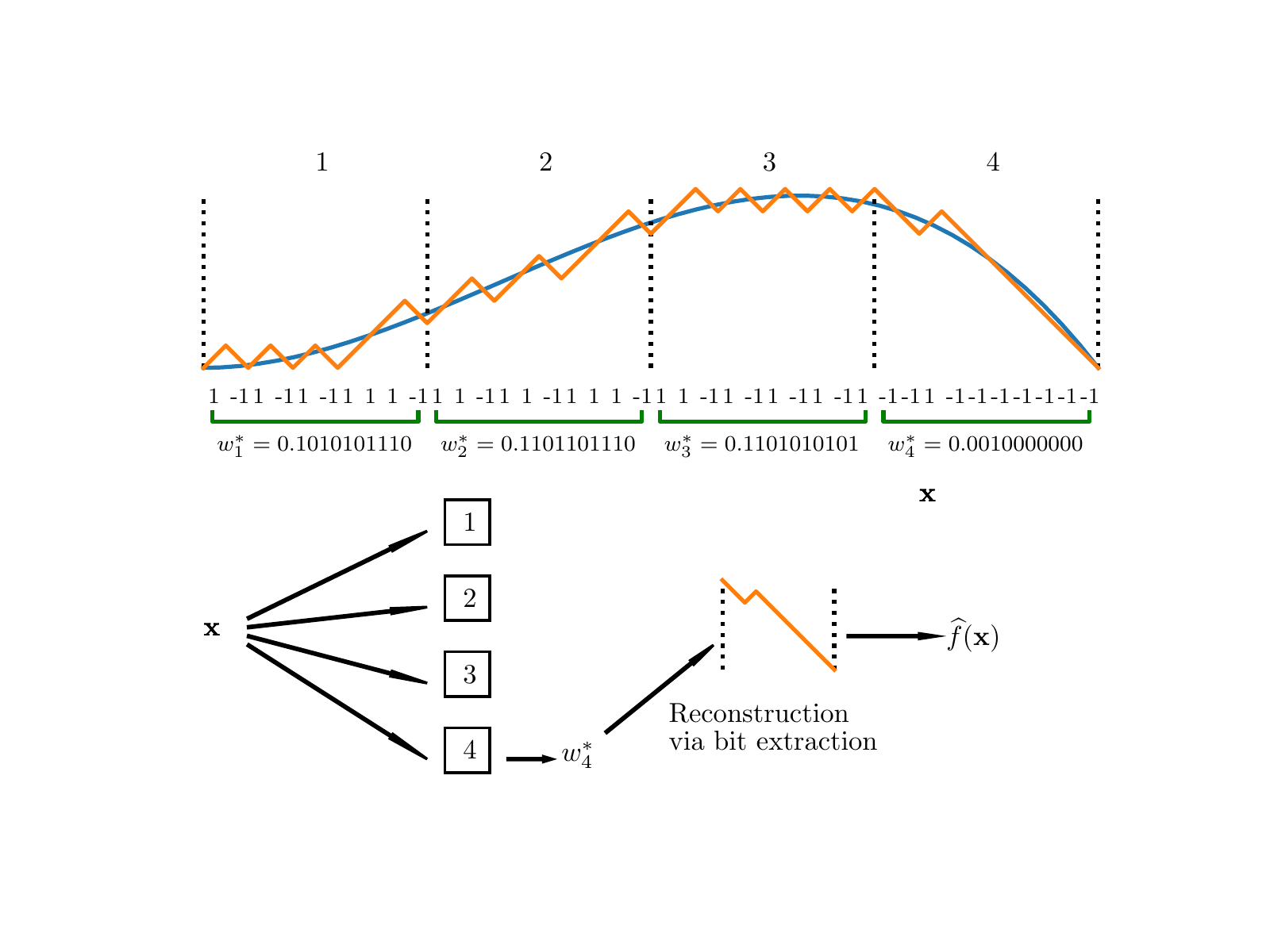}
    \caption{}
\end{subfigure}
\begin{subfigure}[b]{0.13\textwidth}
\hfill
\end{subfigure}
\begin{subfigure}[b]{0.3\textwidth}
\adjustbox{scale=1,right}{%
\begin{tikzcd}[scale=0.1, row sep=large]
&  w = 0.b_1b_2\ldots \arrow[dl, swap, "\lfloor 2w \rfloor"] \arrow[d,"2w-\lfloor 2w\rfloor"]
\\
b_1 & w_1=0.b_2b_3\ldots \arrow[dl, swap, "\lfloor 2w_1\rfloor"] \arrow[d,"2w_1-\lfloor 2w_1\rfloor"]
\\
b_2 & w_2=0.b_3b_4\ldots\arrow[dl, swap, "\lfloor 2w_2\rfloor"] \arrow[d,"2w_2-\lfloor 2w_2\rfloor"]
\\
\ldots & \ldots
\end{tikzcd}
}\caption{}
\end{subfigure}

\caption{\textbf{(a)} A high-rate approximation from  \cite{yaropt}. The domain $[0,1]^d$ is divided into  patches and an approximation to $f$ is encoded in each patch by a single network weight using a binary-type representation. Then, the network computes the approximation $\widetilde f(\mathbf x)$ by finding the relevant weight and decoding it using the bit extraction technique of \cite{bartlett1998almost}  (here, $d=1, r=1$, and $p=\tfrac{2r}{d}=2$). \textbf{(b)} Sequential bit extraction by a deep network \cite{bartlett1998almost}. (The floor function $\lfloor\cdot\rfloor$ can be approximated by ReLU with arbitrary accuracy via $\lfloor w\rfloor\approx \tfrac{1}{\delta}(w-1)_+-\tfrac{1}{\delta}(w-1-\delta)_+$ with a small $\delta$.)}\label{fig:approxdiscrete}

\end{figure}

It was shown in \cite{yaropt} that ReLU networks can also achieve rates $p$ beyond $\frac{r}{d}.$ The result of \cite{yaropt} is stated in terms of the modulus of continuity of $f$; when restricted to H\"older functions with constant $r\le 1$, it implies that on such functions ReLU networks can provide rates $p$ in the interval $(\tfrac{r}{d},\tfrac{2r}{d}]$, in agreement with the mentioned upper bound $\tfrac{2r}{d}$. The construction is quite different from the case $p=\tfrac{r}{d}$ and has a ``coding theory'' rather than ``analytic'' flavor, see Fig.\ref{fig:approxdiscrete}. 
In agreement with continuous approximation theory and existing VC bounds, the construction inherently requires discontinuous weight assignment (as a consequence of coding finitely many values) and network depth (necessary for the bit extraction part). In this sense, at least in the case of $r\le 1$ one can distinguish two qualitatively different ``approximation phases'': the shallow continuous one corresponding to $p=\tfrac{r}{d}$ (and lower values), and the deep discontinuous one corresponding to $p\in(\tfrac{r}{d},\tfrac{2r}{d}]$.  It was shown in \cite{petersen2018optimal, voigtlaender2019approximation} that the shallow rate $p=\tfrac{r}{d},$ but not faster rates, can be achieved if the network weights are discretized with the precision of $O(\log(1/\epsilon))$ bits, where $\epsilon$ is the approximation accuracy.  

We remark in passing that in recent years there has also been a substantial amount of related research on other aspects of deep network expressiveness: e.g. (just to give a few examples) on performance scaling with input dimension \cite{poggio2017and, montanelli2019new}, depth separation \cite{telgarsky2016benefits, eldan2016power}, generalization from finite training sets \cite{schmidt2020nonparametric}, approximation on manifolds  \cite{shaham2018provable}, approximation of discontious functions \cite{petersen2018optimal} and specific signal structures \cite{perekrestenko2018universal, grohs2019deep}. These topics are outside the scope of the present paper.

\paragraph{Contribution of this paper.}
The developments described above leave many questions open. One immediate question is whether and how the deep discontinuous approximation phase generalizes to higher values of smoothness ($r> 1$). Another natural question is how much the network architectures providing the maximal rate $p=\tfrac{2r}{d}$ depend on the smoothness class. Yet another question is how sensitive the phase diagram is with respect to changing ReLU to other activation functions. In the present paper we resolve some of these questions and, moreover, offer new perspectives on the tradeoffs between different aspects of complexity in neural networks. Specifically:
\begin{itemize}
    \item In Section \ref{sec:phasediag}, we prove that the approximation phase diagram indeed generalizes to arbitrary smoothness $r>0$, with the deep discontinuous phase occupying the region $\tfrac{r}{d}<p\le \tfrac{2r}{d}$.   
    \item In Section \ref{sec:adaptivity}, we prove that the standard fully-connected architecture with a sufficiently large constant width $H$ only depending on the dimension $d$, say $H=2d+10$, can implement approximations that are asymptotically almost optimal 
    for \emph{arbitrary} smoothness $r$. This property can be described as  ``universal adaptivity to smoothness'' exhibited by such architectures.  
    \item In Section \ref{sec:otheractiv}, we discuss how the ReLU phase diagram can change if ReLU is replaced by other activation functions. In particular, we prove that the deep discontinuos phase can be constructed for any activation that has a point of nonzero curvature. This implies that the phase diagram for any piecewise polynomial activation is the same as for ReLU.
    \item In Section \ref{sec:deepfourier} we consider what we call \emph{``deep Fourier expansion''} -- approximation by a deep network with a periodic activation function, which can be seen as a generalization of the usual Fourier series approximation. We prove that such networks can provide much faster, exponential  rates compared to the polynomial rates of ReLU networks. The key element of the proof is a new version of the bit extraction procedure replacing sequential extraction by a dichotomy-based lookup.     
    \item In Section \ref{sec:info} we analyze the distribution of information in the networks implementing the discussed modes of approximation. In particular, we show that in the deep discontinuous ReLU phase the total information $\epsilon^{-d/r}$ is uniformly distributed over the $\epsilon^{-1/p}$ encoding weights, with $\epsilon^{1/p-d/r}$ bits per weight, while in the ``deep Fourier'' it is all concentrated in a single encoding weight.   
\end{itemize}

\section{Preliminaries}\label{sec:prelim}
\paragraph{Smooth functions.}
The paper revolves about what we informally describe as ``functions of smoothness $r$'', for any $r>0$. It is convenient to precisely define them in terms of  H\"older spaces. Any $r>0$ can be uniquely represented as $r=k+\alpha$  with an integer $k\ge 0$ and $0<\alpha\le 1$. We define the respective H\"older space $\mathcal C^{k,\alpha}([0,1]^d)$ as the space of $k$ times continuously differentiable functions on $[0,1]^d$ having a finite norm
\begin{align*}
    \|f\|_{\mathcal{C}^{k, \alpha}([0,1]^d)} = \max \Big\{ \max_{\k: |\k| \leq k} \max_{\mathbf x \in [0, 1]^d} |D^{\k} f(\mathbf x)|,  \max_{\k: |\k| = k} \sup_{\substack{\mathbf x, \mathbf y \in [0, 1]^d, \\ \mathbf x \neq \mathbf y}} \dfrac{|D^{\k} f(\mathbf x) - D^{\k}f(\mathbf y)|}{\|\mathbf x -\mathbf y\|^{\alpha}} \Big \}.
\end{align*}
Here $D^{\mathbf k}f$ denotes the partial derivative of $f$. We choose the sets $F$ appearing in Eq.\eqref{eq:rate} to be the unit balls in these H\"older spaces and denote them by $F_{r,d}$.

\paragraph{Neural networks.}
We consider conventional feedforward neural networks with layouts given by directed acyclic graphs. Each hidden unit performs a computation of the form $\sigma(\sum_{k=1}^K w_kz_k+h),$ where $z_k$ are the signals from the incoming connections, and $w_k$ and $h$ are the weights associated with this unit. In addition to input units and hidden units, the network is assumed to have a single output unit performing a computation similar to that of hidden units, but without the activation function. In Sections \ref{sec:phasediag} and \ref{sec:adaptivity} we assume that the activation function is ReLU: $\sigma(x)=a_+\equiv \max(0,a).$ In general, we refer to networks with an activation function $\sigma$ as \emph{$\sigma$-networks}.

In  general, we do not make any special connectivity assumptions about the network architecture. The exception is Section \ref{sec:adaptivity} where we consider a particular family of architectures in which  hidden units are divided into a sequence of layers, and each layer has a constant number of units. Two units are connected if and only if they belong to neighboring layers. The input units are connected to the units of the first hidden layer and only to them; the output unit is connected to the units of the last hidden layer, and only to them. We refer to this as a \emph{standard deep fully-connected architecture with constant width} (see Fig.\ref{fig:standard_net}). 

Whenever we mention a \emph{piecewise linear} or \emph{piecewise polynomial} activation, we mean that $\mathbb R$ can be divided into \emph{finitely many} intervals on which the activation is linear or polynomial (respectively). Without this condition of finiteness, activations could be made drastically more expressive (e.g., by joining a dense countable subset of polynomials \cite{maiorov1999lower}).

\paragraph{Approximations.} In the accuracy--complexity relation \eqref{eq:rate} we assume that approximations $\widetilde f_W$ are obtained by assigning $f$-dependent weights to a network architecture $\eta_W$ \emph{common} to all $f\in F.$ In particular, this allows us to speak of the weight assignment map $G_W: f\mapsto \mathbf w_f\in\mathbb R^W$ associated with a particular architecture $\eta_W.$ We say that the weight assignment is continuous if this map is continuous with respect to the topology of uniform norm $\|\cdot\|_\infty$ on $F.$ We will be interested in considering different approximation rates $p$, and we interpret Eq.\eqref{eq:rate} in a precise way by saying that a rate $p$ can be achieved iff 
\begin{align}\label{eq:rate2}
\inf_{\eta_W, G_W}\sup_{f\in F}\|f - \ft_{\eta_W,G_W}\|_{\infty} \le c_{F,p}W^{-p},
\end{align}
where $\ft_{\eta_W,G_W}$ denotes the approximation obtained by the weight assignment $G_W$ in the architecture $\eta_W$. Here and in  the sequel we generally denote by $c_{a,b,\ldots}$ various positive constants possibly dependent on $a,b,\ldots$ (typically on smoothness $r$ and dimension $d$). Throughout the paper, we will treat $r$ and $d$ as fixed parameters in the asymptotic accuracy-complexity relations.

\section{The phase diagram of ReLU networks}\label{sec:phasediag}

\begin{figure}
\centering
\minipage[b][][b]{0.4\textwidth}
\includegraphics[scale = 0.45, clip, trim=15mm 0mm 10mm 0mm ]{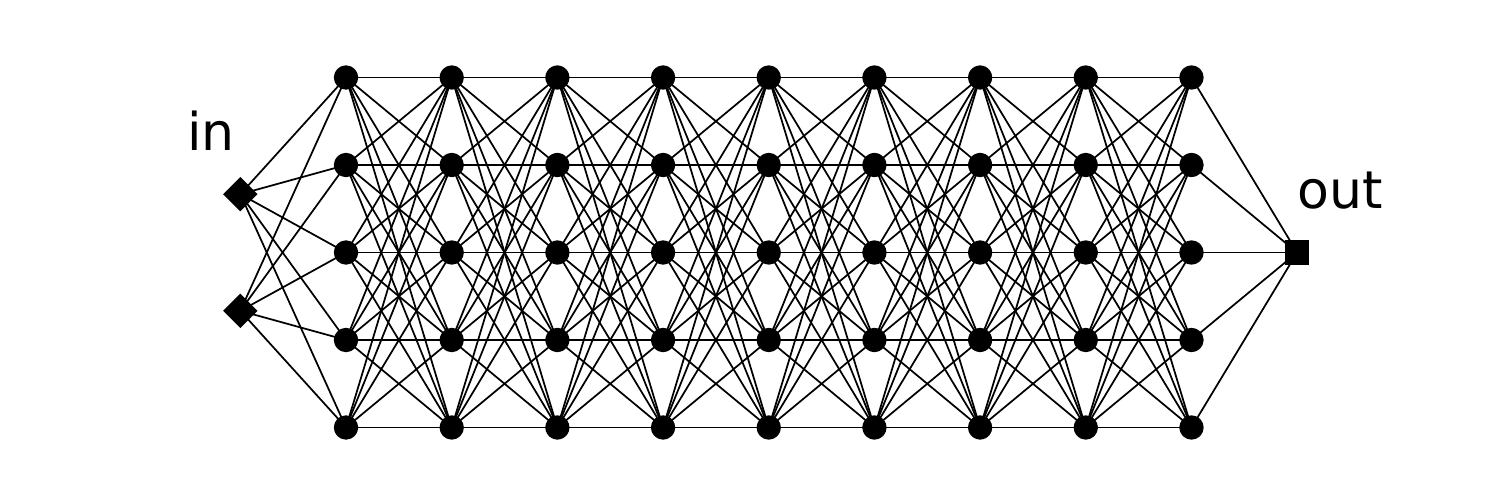}
\caption{A standard deep fully-connected architecture with  width 5.}\label{fig:standard_net}
\endminipage\hfill
\minipage[b][][b]{0.55\textwidth}
\centering
\begin{tikzpicture}[scale=0.8]
\begin{axis}
[
    xmin=0, xmax=14,
    ymin=0, ymax=8,
    xlabel={$p$ (rate)},
    ylabel={$r$ (smoothness)},
    xtick=\empty,
    ytick=\empty,
   width=15cm,
   height=8cm,
   scale=0.6
]

\path[draw=black,fill=green,line join=round,line cap=round,miter limit=10.00,line
    width=1.152pt] (0,0) -- (15,10) -- (0,10) -- (0,0);

\path[draw=black,fill=green!20,line join=round,line cap=round,miter limit=10.00,line
    width=1.152pt] (0,0) -- (18,12) -- (18,6) -- (0,0);
    
\path[draw=black,fill=red!50,line join=round,line cap=round,miter limit=10.00,line
    width=1.152pt] (0,0) -- (18,6) -- (18,0) --(0,0);

\node[align=center, text width=5cm, 
] at (11.3,5.5) {Deep NN\\ $L \sim W^{pd/r - 1}$ \\ discont. WA};
\node[align=center, text width=5cm, font=\footnotesize] at (3.5,6.1) {Shallow NN, \\ continuous \\ weight assignment};
\node[align=center, font=\footnotesize] at (12.0,2.5) {Infeasible};
\node[coordinate, pin={[pin edge={black}]350:{\color{black}$p=\frac{2r}{d}$}}] at (axis cs:6.0,2.0){};
\node[coordinate, pin={[pin edge={black}]180:{\color{black}$p=\frac{r}{d}$}}] at (axis cs:4.5,3){};
\end{axis}
\end{tikzpicture}
\caption{The phase diagram of approximation rates for ReLU networks. }\label{fig:phase_diagram}
\endminipage\hfill
\end{figure}

Our first main result is the phase diagram of approximation rates for ReLU networks, shown in Fig.\ref{fig:phase_diagram}. The  ``shallow continuous phase'' corresponds to $p=\tfrac{r}{d}$, the ``deep discontinuous phase'' corresponds to $\tfrac{r}{d}<p\le\tfrac{2r}{d}$, and the infeasible region corresponds to $p>\tfrac{2r}{d}.$ Our main new contribution is the exact location of the deep discontinuous phase for all $r>0$. The precise meaning of the diagram is explained by the following series of theorems (partly established in earlier works). 

\begin{theorem}[The shallow continuous phase]
The approximation rate $p = \frac{r}{d}$ in Eq.\eqref{eq:rate2} can be achieved by ReLU networks having $L \le c_{r,d} \log W$ layers, and with a continuous weight assignment.
\end{theorem}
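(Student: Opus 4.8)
The plan is to combine a localized polynomial approximation of $f$ with the classical ReLU implementation of approximate multiplication. Fix an integer $N$, partition $[0,1]^d$ into $N^d$ cells of side $1/N$, and set $k:=\lceil r\rceil-1$. On a patch of this size a polynomial of degree $k$ can approximate $f\in F_{r,d}$ with error $O(N^{-r})$ by a Jackson/Bramble--Hilbert-type estimate. The key point for the \emph{continuity} of the weight assignment is that the coefficients of this local polynomial must be chosen to depend \emph{linearly} on $f$ and continuously in $\|\cdot\|_\infty$; this rules out the naive Taylor expansion, whose coefficients are derivatives of $f$ and hence not continuous in the uniform norm. Instead I would use an averaged Taylor polynomial — equivalently, the $L^2$-orthogonal projection of $f$ onto polynomials of degree $\le k$ on each patch. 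Its coefficients (in the monomial basis rescaled to the patch) are fixed bounded linear functionals of $f$, uniformly bounded over $F_{r,d}$, and the Bramble--Hilbert lemma gives the local error $O(N^{-r})$ uniformly over $F_{r,d}$.

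Next I would glue the local polynomials into a global approximant $\widetilde f=\sum_{\nu}\phi_\nu P_\nu$, where $\{\phi_\nu\}$ is a continuous piecewise-linear partition of unity subordinate to the grid (a tensor product of one-dimensional tent functions), so that each point lies in the supports of at most $2^d$ of the $\phi_\nu$. Each $\phi_\nu$ is piecewise linear and is therefore computed exactly by a small ReLU subnetwork. Each monomial $(x-x_\nu)^\alpha$, $|\alpha|\le k$, occurring in $P_\nu$ is approximated by a binary tree of ReLU approximate-multiplication gadgets (each computing $xy$ on a fixed bounded box with error $\varepsilon$ using $O(\log\tfrac1\varepsilon)$ layers); multiplying by the coefficient $c_{\nu,\alpha}$ is a single linear weight; multiplying $\phi_\nu$ by the resulting polynomial uses one more gadget; and summing the $N^d$ products is linear. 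Choosing the internal accuracy $\varepsilon$ polynomially small in $N$ keeps all these errors $O(N^{-r})$, so $\|f-\widetilde f\|_\infty=O(N^{-r})$.

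It remains to count parameters and layers. The $N^d$ patch subnetworks run in parallel, so the total number of weights is $W=O(N^d)$, with the constant depending on $r,d$ through the number of monomials and the size of the multiplication gadget; hence $N^{-r}=O(W^{-r/d})$, which is exactly the bound \eqref{eq:rate2} for $p=\tfrac rd$. (No converse is needed here: the theorem only asserts achievability, and the matching impossibility of faster rates under continuous weight assignment is the cited result of \cite{continuous}.) The depth is dominated by the multiplication/polynomial part: a degree-$k$ monomial needs $O(\log k)$ rounds of multiplication, each of depth $O(\log\tfrac1\varepsilon)=O(\log N)=O(\log W)$, so $L\le c_{r,d}\log W$. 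Finally, the weight assignment is continuous because the only $f$-dependent weights are the coefficients $c_{\nu,\alpha}$, which enter the network linearly and depend continuously on $f$ in $\|\cdot\|_\infty$; everything downstream is a fixed linear map or a fixed multiplication gadget applied to inputs that depend continuously on $\{c_{\nu,\alpha}\}$ and remain in a fixed bounded box (the $c_{\nu,\alpha}$ being uniformly bounded on $F_{r,d}$), so the composed map $f\mapsto\widetilde f$ is continuous.

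The main obstacle is precisely this continuity requirement: one must replace Taylor coefficients by linear, sup-norm-continuous surrogates together with a Bramble--Hilbert-type error bound, and then check that threading these coefficients through the nonlinear multiplication gadgets preserves both continuity and the uniform $O(N^{-r})$ accuracy. Once the right local polynomial approximation is fixed, the remaining network construction is a routine assembly of known ReLU gadgets, and the logarithmic depth is forced only by the multiplication subroutine.
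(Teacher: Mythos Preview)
Your construction follows the right blueprint (local polynomials glued by a partition of unity, ReLU multiplication gadgets), and your attention to the continuity of the weight assignment is well-placed: replacing pointwise Taylor coefficients by averaged/projected ones is indeed needed, since $f\mapsto D^{\mathbf k}f(\mathbf x_0)$ is not $\|\cdot\|_\infty$-continuous even on $F_{r,d}$. The paper does not make this substitution explicit.

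However, there is a genuine gap in your weight count. You claim $W=O(N^d)$ with the constant depending on $r,d$ ``through the number of monomials and the size of the multiplication gadget,'' but the multiplication gadget does \emph{not} have constant size: to achieve an accuracy $\varepsilon$ polynomially small in $N$ it needs $O(\log(1/\varepsilon))=O(\log N)$ weights. Since you instantiate these gadgets separately in each of the $N^d$ patches, the total is $W=O(N^d\log N)$, which yields only error $O(W^{-r/d}\log^{r/d}W)$ --- the weaker bound from the original construction in \cite{yarsawtooth}, not the clean $O(W^{-r/d})$ that the theorem asserts via Eq.~\eqref{eq:rate2}.

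The paper's fix (Remark~\ref{rm:nolog}) is to isolate the polynomial-evaluation subnetwork from the coefficient supply so that it is implemented only $3^d$ times rather than $N^d$ times. One splits the $N$-grid into $3^d$ subgrids with pairwise disjoint patches; for each subgrid a piecewise-linear ``selector'' (an exact ReLU map of size $O(N^d)$, built from spike functions and requiring no multiplication gadgets) delivers the local coefficients for the patch containing $\mathbf x$; these are then fed into a \emph{single} shared polynomial-evaluation subnetwork of size $O(\log N)$. The total becomes $O(N^d)+3^dO(\log N)=O(N^d)$, removing the logarithmic factor. Your argument is essentially complete once you incorporate this sharing trick.
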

This result was proved in \cite{yarsawtooth} in a slightly weaker form, for integer $r$ and with error $O(W^{-r/d} \log^{r/d} W)$ instead of $O(W^{-r/d})$. The proof is based on ReLU approximations of local Taylor expansions of $f$. The extension to non-integer $r$ is immediate thanks to our definition of general $r$-smoothness in terms of H\"older spaces. The logarithmic factor $\log^{r/d} W$ can be removed by observing that the computation of the approximate Taylor polynomial can be isolated from determining its coefficients and hence only needs to be implemented once in the network rather than for each local patch as in \cite{yarsawtooth} (see Remark \ref{rm:nolog}; the idea of isolation of operations common to all patches is developed much further in the proof Theorem \ref{th:deepphase} below, and is applicable in the special case $p=\tfrac{r}{d}$).  

\begin{theorem}[Feasibility of rates $p>\tfrac{r}{d}$]\label{th:feasible}{}\hfill
\begin{enumerate}
\item Approximation rates $p > \frac{2r}{d}$ are infeasible for networks with piecewise-polynomial activation function and, in particular, ReLU networks;
\item Approximation rates $p \in (\frac{r}{d}, \frac{2r}{d}]$ cannot be achieved with continuous weights assignment;
\item If an approximation rate $p \in (\frac{r}{d}, \frac{2r}{d}]$ is achieved with ReLU networks, then the number of layers $L$ in $\eta_W$ must satisfy $L \geq c_{p,r,d} W^{pd/r - 1} / \log W$ for some $c_{p,r,d} > 0$.
\end{enumerate}
\end{theorem}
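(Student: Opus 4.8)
The plan is to deduce all three parts from a single lemma: if a network architecture $\eta_W$ (with $W$ weights) admits a weight assignment approximating every $f\in F_{r,d}$ to uniform accuracy $\epsilon$, then $\vc(\eta_W)\ge c_{r,d}\epsilon^{-d/r}$, where $\vc(\eta_W)$ denotes the VC-dimension of the Boolean class $\{\x\mapsto\mathbf 1[\ft_{\mathbf w}(\x)>0]:\mathbf w\in\R^W\}$. Granting this, Part 1 follows by inserting the $O(W^2)$ VC bound for piecewise-polynomial networks \cite{goldberg1995bounding}, and Part 3 by inserting the depth-refined bound $\vc(\eta_W)=O(WL\log W)$ known for ReLU (more generally, piecewise-linear) networks. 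Part 2 requires nothing new: by the continuous-weight-assignment bound of \cite{continuous}, a continuous weight assignment cannot realize any rate above $\tfrac rd$, so every $p\in(\tfrac rd,\tfrac{2r}{d}]$ is excluded outright.

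To prove the lemma I would use the familiar ``bumps at grid points'' shattering construction. Fix once and for all a $C^\infty$ function $\phi$ on $\R^d$ with $\phi(\mathbf 0)=1$, supported in $\{\|\x\|\le\tfrac12\}$ and vanishing to infinite order on the boundary of its support. Given small $\epsilon>0$, set $M=\lfloor(c_0/(3\epsilon))^{1/r}\rfloor$ for a suitable constant $c_0=c_0(\phi,d)$, let $\x_1,\dots,\x_N$ ($N=M^d$) be the centers of the cubes of the mesh-$\tfrac1M$ grid on $[0,1]^d$, and put $\phi_i(\x)=c_0M^{-r}\phi(M(\x-\x_i))$. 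The supports of the $\phi_i$ have pairwise disjoint interiors, so for each $\mathbf y\in\{-1,1\}^N$ the function $g_{\mathbf y}=\sum_iy_i\phi_i$ is $C^\infty$ with at most one summand active at any point; using $k+\alpha=r$ one checks $\|g_{\mathbf y}\|_{\mathcal C^{k,\alpha}}\le c_0\,C(\phi,d)$, so with $c_0=1/C(\phi,d)$ we get $g_{\mathbf y}\in F_{r,d}$, while $g_{\mathbf y}(\x_i)=y_ic_0M^{-r}$ with $c_0M^{-r}\ge3\epsilon$ by the choice of $M$. If $\eta_W$ approximates $g_{\mathbf y}$ to accuracy $\epsilon$, the approximant $\ft$ satisfies $\ft(\x_i)\ge2\epsilon>0$ when $y_i=1$ and $\ft(\x_i)\le-2\epsilon<0$ when $y_i=-1$. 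Letting $\mathbf y$ range over $\{-1,1\}^N$, all $2^N$ sign patterns on $\{\x_1,\dots,\x_N\}$ are realized by weight vectors in $\R^W$ (namely $G_W(g_{\mathbf y})$), so $\vc(\eta_W)\ge N\ge c_{r,d}\epsilon^{-d/r}$.

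Finishing is then routine arithmetic. If a rate $p$ is achieved, for each $W$ there is an $\eta_W$ attaining accuracy $\epsilon_W\le c_{F,p}W^{-p}$, whence $\vc(\eta_W)\ge c_{r,d}\epsilon_W^{-d/r}\ge c_{r,d,p}W^{pd/r}$. For Part 1, $\vc(\eta_W)=O(W^2)$ then forces $pd/r\le2$ for large $W$, i.e.\ $p\le\tfrac{2r}{d}$. For Part 3, the sharper $\vc(\eta_W)=O(WL\log W)$ gives $W^{pd/r}=O(WL\log W)$, i.e.\ $L\ge c_{p,r,d}W^{pd/r-1}/\log W$; note that this already blows up for any $p>\tfrac rd$ and is of near-linear order $W/\log W$ at the endpoint $p=\tfrac{2r}{d}$.

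I expect the only mildly delicate points to be the following. First, the Hölder bookkeeping for the glued bumps: this is where the infinite-order vanishing of $\phi$ at $\partial(\operatorname{supp}\phi)$ is used, ensuring that the $\mathcal C^{k,\alpha}$-seminorm of $g_{\mathbf y}$ is at most twice that of a single $\phi_i$, which equals $c_0M^{k+\alpha-r}[\phi]_{k,\alpha}=c_0[\phi]_{k,\alpha}$ independently of $M$. Second, for Part 3 one must make sure to invoke the depth-sensitive VC bound $O(WL\log W)$ rather than the cruder $O(W^2)$ or $O(W^2L^2)$ estimates — it is precisely this refinement that converts the qualitative fact ``$L\to\infty$'' into the quantitative rate of growth asserted in the theorem.
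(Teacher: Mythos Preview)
Your proposal is correct and follows essentially the same route as the paper: the paper simply cites \cite{continuous} for Part~2, the $O(W^2)$ VC bound of \cite{goldberg1995bounding} for Part~1, and the depth-refined VC bound of \cite{bartlett2019nearly} for Part~3, referring to \cite[Theorem~1]{yarsawtooth} for the conversion from VC upper bounds to approximation-rate lower bounds. Your ``bumps at grid points'' lemma is precisely that conversion spelled out, so you have filled in what the paper left as a pointer; the remaining arithmetic and the citations match one-to-one.
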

These statements follow from existing results on continuous nonlinear approximation (\cite{continuous} for statement 2) and from upper bounds on VC-dimensions of neural networks (\cite{goldberg1995bounding} for statement 1 and \cite{bartlett2019nearly} for statement 3), see \cite[Theorem~1]{yarsawtooth} for a derivation. The extensions to arbitrary $r$ are straightforward.

The main new result in this section is the existence of approximations with $p\in(\tfrac{r}{d}, \tfrac{2r}{d}]$:
\begin{theorem}[The deep discontinuous phase]\label{th:deepphase} For any $r>0,$ any rate $p \in (\frac{r}{d}, \frac{2r}{d}]$ can be achieved with deep ReLU networks with $L \le c_{r,d} W^{pd/r - 1}$ layers.
\end{theorem}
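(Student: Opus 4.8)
The plan is to generalize the coding-theoretic construction of \cite{yaropt} (Fig.~\ref{fig:approxdiscrete}) along two axes: replace its local piecewise-constant approximation — which only handles $r\le1$ — by a local \emph{multiscale piecewise-polynomial} approximation valid for every $r>0$, and encode the local data with an entropy-matched, level-adapted quantization so that the bit-extraction depth is exactly $\Theta(W^{pd/r-1})$ with no spurious logarithmic factor. Fix $p\in(\tfrac rd,\tfrac{2r}{d}]$ and set two scales: a coarse scale $\ell\asymp W^{-1/d}$ (so there are $M\asymp W$ coarse cubes of side $\ell$) and a fine scale $\delta\asymp W^{-p/r}$, with $n:=\ell/\delta\asymp W^{p/r-1/d}$, so $n^d\asymp W^{pd/r-1}$, which is $\le W$ precisely because $p\le\tfrac{2r}{d}$. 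Expand $f$ in a hierarchical B-spline system of order $m>r$ on $[0,1]^d$ and truncate it at scale $\delta$; by classical approximation theory (using $\|f\|_{\mathcal C^{k,\alpha}}\le1$) the truncation error is $\lesssim\delta^r\asymp W^{-p}$, each scale-$2^{-i}$ coefficient has magnitude $\lesssim2^{-ir}$, and at each point only $O_{m,d}(1)$ basis functions per level are nonzero.

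The encoding step is the crux. Quantize every level-$i$ coefficient to precision $\asymp\delta^r/(1+I-i)^2$, where $2^{-I}\asymp\delta$. Because the magnitudes decay geometrically across scales, the $\asymp n^d$ coefficients of a single coarse cube at scales finer than $\ell$ together require only $\sum_{i}(2^i\ell)^d\bigl((I-i)r+O(\log(I-i))\bigr)\lesssim C_{r,d}\,n^d$ bits — an \emph{amortized} $O_{r,d}(1)$ bits per fine coefficient, with no factor $\log(1/\delta)$. Pack these bits, level by level, into a single real ``cell weight'' $w_P$ per coarse cube ($M\asymp W$ of them); the $O(W)$ coarser coefficients get one weight each. (The resulting $f$-dependent weight assignment is, as it must be, discontinuous.)

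The network then splits into isolated blocks, all realizable with ReLU: a \emph{lookup} block producing $w(\mathbf x)=\sum_P w_P\,\mathbf 1[\mathbf x\in P]$ together with the active coarse coefficients, from $O(W)$ ReLU bumps in constant depth (boundary overlaps handled by the usual shifted-grid/partition-of-unity device); a \emph{monotone extraction} block that, having computed from $\mathbf x$ the within-$w(\mathbf x)$ offsets of the coefficients active at $\mathbf x$ — which are increasing in the level $i$ since level blocks are stored in order — performs a single left-to-right sweep of $w(\mathbf x)$ (each step a ``$2w-\lfloor2w\rfloor$'' operation, $\lfloor\cdot\rfloor$ being the ReLU surrogate of Fig.~\ref{fig:approxdiscrete}(b)), tapping off the relevant bits as it passes them, in depth $\lesssim(\text{bits of }w(\mathbf x))\lesssim n^d$ and width $O_{r,d}(1)$; and a shared \emph{evaluate-and-sum} block computing the $O(\log W)$ active piecewise-polynomial basis functions via approximate multiplications as in \cite{yarsawtooth} (to accuracy $\ll\delta^r$, depth $O(\log W)$) and taking the weighted sum. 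A routine error accounting — truncation $\lesssim\delta^r$, quantization $\lesssim\delta^r$ (summable schedule), floor/multiplication/bump surrogates pushed below $\delta^r$ at constant-factor cost — gives $\|f-\widetilde f\|_\infty\lesssim\delta^r\asymp W^{-p}$, while the parameter count is $\asymp M+n^d+\mathrm{polylog}(W)\asymp W$ and the depth is $\asymp n^d\asymp W^{pd/r-1}$.

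The main obstacle is exactly the encoding/extraction balance. A naive flat encoding stores $\Theta(\log(1/\delta))=\Theta(\log W)$ bits per fine coefficient, which inflates the sweep depth to $\Theta(n^d\log W)$ and forces $M=\Theta(W\log W)>W$ coarse cubes — failing the claimed bound by a logarithmic factor. Removing it needs both the multiscale representation with level-dependent quantization (making each fine coefficient worth only $O_{r,d}(1)$ bits, which hinges on the sharp decay $|c_{i,j}|\lesssim2^{-ir}$ and the truncation estimate $\lesssim\delta^r$ — both requiring spline order $>r$, hence the genuine need to go beyond piecewise constants when $r>1$) and the observation that the coefficients needed at $\mathbf x$ occupy monotonically increasing positions in $w(\mathbf x)$, so one pass replaces $\Theta(\log n)$ independent data-dependent shifts. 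The remaining pieces — the polynomial upgrade for $r>1$, the multiplication gadgets, and the boundary bookkeeping — are by now standard and carry over from the $p=\tfrac rd$ construction.
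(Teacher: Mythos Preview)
Your route is genuinely different from the paper's, and the entropy-matching half is sound. The paper approximates by Taylor polynomials at each fine $M$-knot and uses a \emph{predictive} encoding: traversing the $M$-knots of an $N$-patch, it stores per step only an $O(1)$-bit correction $B\in\{-3,\dots,3\}$ between the Taylor coefficients extrapolated from the previous knot and the true ones (Proposition~\ref{prop:adjacent}), so the bit-stream has length $O((M/N)^d)$ by construction. You instead use a multiscale B-spline expansion with \emph{hierarchical} quantization: level-$i$ coefficients have size $\lesssim 2^{-ir}$ and are quantized to level-adapted precision, so the geometric sum $\sum_i(2^i\ell)^d\bigl[(I-i)r+O(\log(I-i))\bigr]$ collapses to $O(n^d)$ bits per coarse cube. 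Both implement the Kolmogorov--Tikhomirov entropy bound; the paper's version is more elementary to wire up, while yours makes the ``amortized $O(1)$ bits per fine coefficient'' claim more transparent and closer to standard wavelet-type arguments.

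The gap is in your extraction block. To tap off only the $O(\log W)$ coefficients active at $\mathbf x$ you must compute their offsets inside $w(\mathbf x)$, and these offsets are integer-valued, jumping at every level-$i$ cell boundary for every $i$---computing them amounts to applying $\lfloor\cdot\rfloor$ to the \emph{input} coordinates at all fine scales. Wherever a ReLU floor surrogate lands in its transition window the offset is a non-integer, the bit-matching misfires, and the tapped value is garbage; since the bound is in $\|\cdot\|_\infty$ this cannot be dismissed on a null set. The paper never thresholds $\mathbf x$ inside the decoder: it decodes \emph{all} $(M/N)^d$ Taylor tuples with no $\mathbf x$-dependent branching, then selects the relevant one \emph{softly} via the continuous spikes $\phi(M\mathbf x-\mathbf m)$ in~\eqref{eq:parttaylor}, and a further $3^d$-subgrid device (Section~\ref{ss:logfactor}) reduces the polynomial evaluations to $O_{r,d}(1)$ to kill the log factor. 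Your scheme is repairable along the same lines---decode every coefficient at every level and select via level-wise spikes---but then the monotone tap-off is no longer doing the work, and you will need your own analogue of the subgrid trick at the finest scale to avoid re-introducing a $\log W$ factor from the $n^d$ basis-function evaluations.
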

This result was proved in \cite{yaropt} in the case $r\le 1$. We generalize this to arbitrary $r$ by combining the coding-based approach of \cite{yaropt} with Taylor expansions. 
We give a sketch of proof below; the full proof is given in Section \ref{sec:proofdeepphase}.

\emph{Sketch of proof.} We use two length scales for the approximation:  the coarser one $\tfrac{1}{N}$ and the finer one $\tfrac{1}{M}$, with $M\gg N.$ We start by partitioning the cube $[0,1]^d$ into $\sim N^d$ patches (particularly, simplexes) of linear size $\sim\tfrac{1}{N},$ and then sub-partitioning them into patches of linear size $\sim\tfrac{1}{M}.$ In each of the finer $M$-patches $\Delta_M$ we approximate the function $f\in F_{r,d}$ by a Taylor polynomial $P_{\Delta_M}$ of degree $\lceil r\rceil-1.$  Then, from the standard Taylor remainder bound, we have $|f(\mathbf x)-P_{\Delta_M}(\mathbf x)|=O(M^{-r})$ on $\Delta_M$. This shows that if $\epsilon$ is the required approximation accuracy, we should choose $M\sim \epsilon^{-1/r}.$

Now, if we tried to simply save the Taylor coefficients for each $M$-patch in the weights of the network, we would need at least $\sim M^{d},$ i.e. $\sim\epsilon ^{-d/r}$, weights in total. This corresponds to the classical rate $p=\tfrac{r}{d}$. In order to save on the number of weights and achieve higher rates, we collect Taylor coefficients of all $M$-patches lying in one $N$-patch and encode them in a single \emph{encoding weight} associated with this $N$-patch. Given $p>\tfrac{r}{d},$ we choose $N\sim \epsilon^{-1/(pd)},$ so that in total we create $\sim \epsilon^{-1/p}$ encoding weights, each containing information about $\sim (M/N)^d$, i.e. $\sim\epsilon^{-(d/r-1/p)}$, Taylor coefficients. The number of encoding weights then matches the desired complexity $W\sim\epsilon^{-1/p}$. 

To encode the Taylor coefficients we actually need to discretize them first. Note that to reconstruct the Taylor approximation in an $M$-patch with accuracy $\epsilon,$ we need to know the Taylor coefficients of order $k$ with precision $\sim M^{-(r-k)}$. We implement an efficient sequential encoding/decoding procedure for the approximate Taylor coefficients of orders $k<\lceil r\rceil$ for all $M$-patches lying in the given $N$-patch $\Delta_N$. Specifically, choose some sequence $(\Delta_{M})_t$ of the $M$-patches in $\Delta_N$ so that neighboring elements of the sequence correspond to neighboring patches. Then, the order-$k$ Taylor coefficients at $(\Delta_{M})_{t+1}$ can be determined with precision $\sim M^{-(r-k)}$ from the respective and higher order coefficients at $(\Delta_{M})_{t}$ using $O(1)$ predefined discrete values. This allows us to encode all the approximate Taylor coefficients in all the $M$-patches of $\Delta_N$ by a single $O((M/N)^d)$-bit number. 

To reconstruct the approximate Taylor polynomial for a particular input $\mathbf x\in\Delta_M\subset\Delta_N$, we sequentially reconstruct all the coefficients for the sequence $(\Delta_{M})_t$, and, among them, select the coefficients at the patch $(\Delta_{M})_{t_0}=\Delta_M$.  The sequential reconstruction can be done by a deep subnetwork with the help of the bit extraction technique \cite{bartlett1998almost}. The depth of this subnetwork is proportional to the number of $M$-patches in $\Delta_N$, i.e. $\sim (M/N)^d$, which is $\sim \epsilon^{-(d/r-1/p)}$ according to our definitions of $N$ and $M$. If $p\le\tfrac{2r}{d},$ then $\tfrac{d}{r}-\tfrac{1}{p}\le \tfrac{1}{p}$ and hence this depth is smaller or comparable to the number of encoding weights, $\epsilon^{-1/p}.$ However, if $p>\tfrac{2r}{d},$ then the depth is asymptotically larger than the number of encoding weights, so the total number of weights is dominated by the depth of the decoding subnetwork, which is $\gtrsim \epsilon^{-d/(2r)}$, and the approximation becomes less efficient than at $p=\tfrac{2r}{d}$. This explains why $p=\tfrac{2r}{d}$ is the boundary of the feasible region.

Once the (approximate) Taylor coefficients at $\Delta_M\ni \mathbf x$ are determined, an approximate  Taylor polynomial $\widetilde P_{\Delta_M}(\mathbf x)$ can be computed by a ReLU subnetwork implementing efficient approximate multiplications \cite{yarsawtooth}. \qed

\section{Fixed-width networks: universal adaptivity to smoothness}\label{sec:adaptivity}
The network architectures constructed in the proof of Theorem \ref{th:deepphase} to provide the faster rates $p\in(\tfrac{r}{d},\tfrac{2r}{d}]$ are relatively complex and $r$-dependent. We can ask if such rates can be supported by some simple conventional architectures. It turns out that we can achieve nearly optimal rates using standard fully-connected architectures with sufficiently large constant widths only depending on $d$:
\begin{theorem}\label{th:constwidth}
Let $\eta_W$ be standard fully-connected ReLU architectures with width $2d+10$ and $W$ weights. Then 
\begin{equation}\label{eq:constwidth}
\inf_{G_W}\sup_{f\in F_{r,d}}\|f-\ft_{\eta_W,G_W}\|_\infty\le c_{r,d} W^{-2r/d}\log^{2r/d} W.
\end{equation}
\end{theorem}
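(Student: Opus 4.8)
\emph{Plan of proof.} The plan is to realize, inside a standard constant‑width fully‑connected ReLU architecture, the deep network constructed for Theorem \ref{th:deepphase} at the top rate $p=\tfrac{2r}{d}$, and then to bound the width of the emulation by $2d+10$ uniformly in $r$. The starting observation is that this network is already ``almost sequential'': by Theorem \ref{th:deepphase} its depth is $\sim W$, so the bulk of the computation is intrinsically a long chain (the bit‑extraction part \cite{bartlett1998almost}), which a narrow deep network hosts directly. The only operations that are naturally \emph{wide} are (i) computing, from $\mathbf{x}\in[0,1]^d$, the indices of the coarse patch $\Delta_N\ni\mathbf{x}$ and fine patch $\Delta_M\ni\mathbf{x}$, and (ii) selecting the single encoding weight attached to $\Delta_N$. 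I would replace both by serial implementations: the patch localization via piecewise‑linear ``tent/bump'' subnetworks as in \cite{yarsawtooth} (which need only $O(1)$ depth and $O(d)$ width), and the selection by a linear scan, computing $\sum_{j}\chi_j(\mathbf{x})\,w_j$ term by term, where $\chi_j$ is a sharp piecewise‑linear indicator of $\Delta_N^{(j)}$ and $w_j$ is the corresponding encoding weight, stored as a genuine edge weight of the architecture. Likewise, the reconstruction of the discretized Taylor coefficients along the path $(\Delta_M)_t$, and the evaluation of the approximate Taylor polynomial $\widetilde P_{\Delta_M}(\mathbf{x})$ via the approximate‑multiplication gadget of \cite{yarsawtooth}, are carried out sequentially — one coefficient, one monomial at a time.

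For the width count I would pass through every layer $d$ channels carrying $\mathbf{x}$ (its coordinates are nonnegative, hence preserved by $t=(t)_+$), at most $d$ further channels carrying a working copy of $\mathbf{x}$ / the current patch centre / the patch indices, and a bounded pool of scalar registers: the running selection sum, the bit‑extraction residual $w_i$ and current bit $b_i$ (decoded via the ReLU approximation of $\lfloor\cdot\rfloor$ as in Fig.\ref{fig:approxdiscrete}), a step counter recording when $(\Delta_M)_t=\Delta_M$, the Taylor accumulator, and the scratch of the multiplication gadget. Each register that can be negative is doubled to survive a ReLU layer, giving width $2d+O(1)$; a finite bookkeeping shows $2d+10$ suffices. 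The essential point is that the \emph{number} of these registers is an absolute constant, independent of $r$: this forces the polynomial evaluation and the coefficient decoding to reuse a fixed set of registers rather than ever holding all $\binom{d+\lceil r\rceil-1}{d}$ Taylor coefficients of a patch simultaneously, and is precisely what makes the width bound $r$‑independent.

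For depth and error, choose $N,M$ as in the proof of Theorem \ref{th:deepphase} with $p=\tfrac{2r}{d}$, so that $M\sim\epsilon^{-1/r}$, $N^d\sim\epsilon^{-d/(2r)}$ and $(M/N)^d\sim\epsilon^{-d/(2r)}$. The serial network then has depth $L\sim N^d+(M/N)^d+c_{r,d}\log(1/\epsilon)\sim\epsilon^{-d/(2r)}$, the three terms coming from the selection scan, the bit‑extraction decode, and the Taylor evaluation; with constant width $W\sim L$ (up to the factor $(2d+10)^2$), so $\epsilon\sim W^{-2r/d}$. No continuity of $G_W$ is required — we are in the deep discontinuous phase, consistently with statement~2 of Theorem \ref{th:feasible} — so there is no obstruction to reading the encoding weights off the parameters. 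The logarithmic factor $\log^{2r/d}W$ in \eqref{eq:constwidth} is the standard overhead of squeezing a flexible construction into a rigid constant‑width, $r$‑independent layout: certain gadgets (notably the approximate‑multiplication step underlying the Taylor evaluation) cannot be amortized in the way of Remark \ref{rm:nolog} and are effectively replicated, so the width‑$(2d+10)$ network with $W$ weights realizes the approximation power of a general network with $\sim W/\log W$ weights, i.e. accuracy $\sim(W/\log W)^{-2r/d}=W^{-2r/d}\log^{2r/d}W$.

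The main obstacle, I expect, is the second paragraph: obtaining the clean $r$‑independent width bound $2d+10$. Bounding the width by $O(d)$ plus an $r$‑dependent constant is routine, but keeping the $r$‑dependence entirely out of the width forces the fully serial implementation of the polynomial arithmetic and of the coefficient decoding over all $\sim(M/N)^d$ fine patches, and one must check (a) that this serialization can be done with a fixed, $r$‑independent number of scalar registers; (b) that the resulting gadgets compose correctly inside a \emph{fully connected} constant‑width architecture, where every channel of one layer feeds every channel of the next, so that ``idle'' channels must be threaded through explicitly; and (c) that the extra depth incurred by all this serialization is only a logarithmic factor and not a power of $1/\epsilon$.
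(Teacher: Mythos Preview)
Your overall strategy matches the paper's: serialize the construction of Theorem~\ref{th:deepphase} at $p=\tfrac{2r}{d}$ into a width-$(2d+10)$ network, reserving $d$ channels for $\mathbf x$, a few for the encoding weight and bit extraction, and accumulating the output. Where your proposal diverges from the paper, and where it has a genuine gap, is the handling of the Taylor coefficients.

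You propose to keep the \emph{incremental} reconstruction of the coefficients along the path $(\Delta_M)_t$ (the update of \autoref{prop:adjacent}), while reusing a fixed, $r$-independent set of registers. This cannot be done directly: the update
\[
\at_{\m_{t+1},\k}=\sum_{n=0}^{\lceil r\rceil-1-|\k|}\frac{\a_{\m_t,(k_1+n,k_2,\ldots,k_d)}}{n!\,M^{n}}
\]
requires simultaneous access to \emph{all} higher-order coefficients $\a_{\m_t,\kh}$ at the previous knot, and there are $\binom{d+\lceil r\rceil-1}{d}$ of them. Reusing a bounded register pool means overwriting $\a_{\m_t,\kh}$ by $\a_{\m_{t+1},\kh}$ before it has been consumed by the lower-$|\k|$ updates; no ordering of the $\k$'s avoids this. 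The paper's actual resolution is to \emph{abandon} the incremental update and instead encode each Taylor coefficient directly in the encoding weight with $O(\log M)$ bits (rather than the $O(1)$-bit correction $B_{\n,\k,t}$), and to decode it \emph{afresh} at each $M$-knot. This is what makes the register count $r$-independent, since one never holds two coefficients at once.

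This change is also the true source of the logarithmic factor, which you attribute only vaguely to ``standard overhead''. With direct encoding, the bit string carried by the encoding weight has length $\sim(M/N)^d\log M$, and decoding it layer by layer already costs depth $\sim\epsilon^{-d/(2r)}\log(1/\epsilon)$; in addition the Taylor polynomial must be evaluated at each of the $(M/N)^d$ knots at cost $O(\log(1/\epsilon))$ per knot. Your depth count $L\sim N^d+(M/N)^d+c_{r,d}\log(1/\epsilon)\sim\epsilon^{-d/(2r)}$ is the \emph{incremental} depth and, if it were achievable with $r$-independent width, would give the clean rate $W^{-2r/d}$ without any log; your subsequent hand-wave that the log ``must'' appear as overhead is then inconsistent with your own depth analysis. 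In short: the identification of the obstacle is correct, but the mechanism you sketch for resolving it does not work, and the correct mechanism (decode-afresh instead of update) is exactly what produces the $\log^{2r/d}W$ in~\eqref{eq:constwidth}.
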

The rate in Eq.\eqref{eq:constwidth} differs from the optimal rate with $p=\tfrac{2r}{d}$ only by the logarithmic factor $\log^{2r/d} W$.
We give a sketch of proof of Theorem \ref{th:constwidth} below, and details are provided in Section \ref{sec:proofconstwidth}.

An interesting result proved in \cite{hanin2017approximating, lu2017expressive} (see also \cite{lin2018resnet} for a related result for ResNets) states that standard fully-connected ReLU architectures with a fixed width $H$ can approximate any $d$-variate continuous function if and only if $H\ge d+1$. Theorem \ref{th:constwidth} shows that with slightly larger widths, such networks can not only adapt to any function, but also adapt to its smoothness. The results of \cite{hanin2017approximating, lu2017expressive} also show that Theorem \ref{th:constwidth} cannot hold with $d$-independent widths.

\emph{Sketch of proof of Theorem \ref{th:constwidth}.} The proof is similar to the proof of Theorem \ref{th:deepphase}, but requires a different implementation of the reconstruction of $\widetilde f(\mathbf x)$ from encoded Taylor coefficients. The network constructed in Theorem \ref{th:deepphase} traverses $M$-knots of an $N$-patch and computes Taylor coefficients at the new $M$-knot by updating the coefficients at the previous $M$-knot. This computation can be arranged within a fixed-width network, but its width depends on $r$, since we need to store the coefficients from the previous step, and the number of these coefficients grows with $r$ (see \cite{yaropt} for the constant-width fully-connected implementation in the case of $r\le 1,$ in which the Taylor expansion degenerates into the 0-order approximation). 

To implement the approximation using an $r$-independent network width, we can decode the Taylor coefficients afresh at each traversed $M$-knot, instead of updating them. This is slightly less efficient and leads to the additional logarithmic factor in Eq.\eqref{eq:constwidth}, as can be seen in the following way. First, since we need to reconstruct the Taylor coefficients of degree $k$ with precision $O(M^{-(r-k)}),$ we need to store $\sim \log M$ bits for each coefficient in the encoding weight. Since $M\sim \epsilon^{-1/r},$ this means a $\sim\log (1/\epsilon)$-fold increase in the depth of the decoding subnetwork. Moreover, an approximate Taylor polynomial must be computed separately for each $M$-patch. Multiplications   can be implemented with accuracy $\epsilon$ by a fixed-width ReLU network of depth $\sim(\log(1/\epsilon))$ (see \cite{yarsawtooth}). Computation of an approximate polynomial of the components of the input vector $\mathbf x$ can be arranged as a chain of additions and multiplications in a network of constant width independent of the degree of the polynomial -- assuming the coefficients of the polynomial are decoded from the encoding weight and supplied as they become required. This shows that we can achieve accuracy $\epsilon$ with a network of constant width independent of $r$ at the cost of taking the larger depth $\sim\epsilon ^{-d/(2r)}\log(1/\epsilon)$ (instead of simply $\sim\epsilon ^{-d/(2r)}$ as in Theorem \ref{th:deepphase}). Since $W$ is proportional to the depth, we get $W\sim \epsilon ^{-d/(2r)}\log(1/\epsilon)$. By inverting this relation, we obtain Eq.\eqref{eq:constwidth}.  \qed

\section{Activation functions other than ReLU}\label{sec:otheractiv}
We discuss now how much the ReLU phase diagram of Section \ref{sec:phasediag} can change if we use more complex activation functions. We note first that statement 1 of Theorem  \ref{th:feasible} holds not only for ReLU, but for any piecewise-polynomial activation functions, so that the region $p>\tfrac{2r}{d}$ remains infeasible for any such activation. Also, since all piecewise-linear activation functions are essentially equivalent (see e.g. \cite[Proposition 1]{yarsawtooth}), the phase diagram for any piecewise-linear activation is the same as for ReLU. 

Our main result in this section states that Theorem \ref{th:deepphase} establishing the existence of the deep discontinuous phase remains valid for any activation  that has a point of nonzero curvature.

\begin{theorem}\label{th:deepsecond} Suppose that the activation function $\sigma$ has a point $x_0$ where the second derivative $\tfrac{d^2\sigma}{dx^2}(x_0)$ exists and is nonzero. Then,  any rate $p \in (\frac{r}{d}, \frac{2r}{d})$ can be achieved with deep $\sigma$-networks with $L \le c_{r,d} W^{pd/r - 1}$ layers.
\end{theorem}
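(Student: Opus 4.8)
The plan is to reduce Theorem~\ref{th:deepsecond} to Theorem~\ref{th:deepphase} by showing that a $\sigma$-network with a point of nonzero curvature can emulate the ReLU computations used in the proof of Theorem~\ref{th:deepphase}, at the cost of only a constant-factor blowup in width and depth (hence in $W$). The starting observation is standard: if $\sigma''(x_0)$ exists and is nonzero, then by a second-order Taylor expansion at $x_0$ the combination $\tfrac{1}{\delta^2}\bigl(\sigma(x_0+\delta z)-2\sigma(x_0)+\sigma(x_0-\delta z)\bigr)$ converges to $\sigma''(x_0)z^2$ as $\delta\to 0$, uniformly for $z$ in a bounded interval. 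Thus a single $\sigma$-neuron (used three times with shared input, or a width-3 block) approximates, after affine rescaling, the square function $z\mapsto z^2$ on any fixed compact set with any prescribed accuracy. From squaring and the polarization identity $zw=\tfrac14((z+w)^2-(z-w)^2)$ we get approximate multiplication, and from approximate multiplication plus affine maps we can emulate $\mathrm{ReLU}$ itself on a bounded domain — e.g. via a smooth/piecewise-polynomial surrogate, or by the identity $x_+=\tfrac12(x+|x|)$ with $|x|$ obtained from $\sqrt{x^2}$ approximated by a low-degree polynomial on a compact interval away from issues at $0$, with the small error near $0$ absorbed into the overall accuracy budget.

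First I would state and prove a lemma: for any $\epsilon>0$ and any $A>0$, there is a $\sigma$-network of width $O(1)$ and depth $O(1)$ that computes, on $[-A,A]$, a function within $\epsilon$ (in sup-norm) of $z\mapsto z^2$; and, as a corollary, a $\sigma$-network of width $O(1)$ and depth $O(1)$ computing multiplication $(z,w)\mapsto zw$ on $[-A,A]^2$ within $\epsilon$. I would be careful that the required $\delta$ and the rescaling constants depend only on $\sigma$, $A$, $\epsilon$ and not on the global network size, so that inserting these blocks does not spoil the asymptotic weight count. Next I would note that ReLU networks, as used in Theorem~\ref{th:deepphase}, only ever operate on signals confined to a bounded range (all intermediate quantities — patch indices, encoded weights, bit-extraction states, Taylor coefficients, outputs — live in an a priori bounded set determined by $r,d$ and the normalization of $F_{r,d}$), so a single global bound $A=A_{r,d}$ suffices. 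Every ReLU neuron in that construction can therefore be replaced by an $O(1)$-size $\sigma$-block that approximates it to accuracy $\epsilon'$, where $\epsilon'$ is chosen small enough (polynomially small in the target accuracy $\epsilon$, accounting for error propagation through the fixed-by-$W$... actually through the $O(L)$ layers) that the accumulated error stays $O(\epsilon)$.

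The one genuinely delicate point — and the step I expect to be the main obstacle — is the bit extraction / floor-function part of the construction. In the ReLU case, Figure~\ref{fig:approxdiscrete} recalls that $\lfloor w\rfloor$ is approximated by $\tfrac1\delta(w-1)_+-\tfrac1\delta(w-1-\delta)_+$, which is an honest step only in the limit $\delta\to0$; the ReLU proof handles this by keeping the binary expansions ``well-separated'' so that a crude approximate step function suffices. Emulating that approximate step with $\sigma$-blocks introduces a second layer of approximation, and one must check that the separation margins in the encoding scheme of Theorem~\ref{th:deepphase} are still large enough to tolerate it — equivalently, that the approximate multiplications and approximate steps can be made accurate enough without the block sizes growing with $W$. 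I would resolve this by choosing the coding radix and margins (which are free $O(1)$ parameters in the construction) conservatively, so that the cumulative error of $O(L)$ emulated operations — each $\mathrm{poly}(\epsilon)$-accurate — remains below the margin; since $L\le c_{r,d}W^{pd/r-1}$ and $W$ is polynomial in $1/\epsilon$, demanding per-block accuracy $\epsilon^{C}$ for a suitable constant $C=C_{p,r,d}$ suffices, and this only changes the $O(1)$ block sizes by constant factors. Finally I would remark why the theorem excludes the endpoint $p=\tfrac{2r}{d}$: at the boundary the depth $L$ already matches the number of encoding weights up to constants, leaving no slack for the extra constant-factor depth blowup from emulation to be absorbed without a logarithmic loss, whereas for $p$ strictly inside the interval the strict inequality $\tfrac{d}{r}-\tfrac1p<\tfrac1p$ gives room. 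Assembling the lemma, the global-boundedness observation, the neuron-by-neuron replacement with controlled error propagation, and the parameter-budget check yields the claimed $\sigma$-network with $W$ weights and $L\le c_{r,d}W^{pd/r-1}$ layers achieving rate $p$.
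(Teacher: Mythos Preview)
Your proposal has a genuine gap in two linked places. First, the claim that an $O(1)$-size $\sigma$-block approximates ReLU to any prescribed accuracy is false. Through the second-derivative trick, a fixed-size $\sigma$-subnetwork realizes (arbitrarily well) a polynomial of \emph{bounded degree}, and the best degree-$n$ polynomial approximation of $|x|$ on $[-1,1]$ has error $\Theta(1/n)$; unlike the squaring block, there is no free parameter whose limit yields exact ReLU. Your suggestion to approximate $\sqrt{x^2}$ by a ``low-degree polynomial \ldots away from issues at $0$'' does not help here, because the crucial uses of ReLU in the decoder are precisely threshold operations whose argument runs across the discontinuity. Second, and more importantly, your error budget treats the accumulated error through $O(L)$ emulated operations as additive, but in the bit-extraction loop it is multiplicative: the shift $w\mapsto Bw-(\text{extracted digit})$ (e.g.\ $B=7$ in the construction of Section~\ref{sec:proofdeepphase}) has Lipschitz constant $B>1$, so after $D\sim(M/N)^d\sim W^{pd/r-1}$ steps the per-step accuracy must be $\sim B^{-D}=\exp(-cW^{pd/r-1})$, not $\epsilon^{C}$ for any constant $C$. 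In the ReLU proof this never arises because the piecewise-linear approximate floor is \emph{exactly} locally constant, so no error propagates; a polynomial surrogate can never be locally constant.

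The paper's proof avoids a neuron-by-neuron emulation and instead builds the two delicate pieces directly from polynomials. For the sign/threshold it iterates the cubic $u(x)=\tfrac12 x(3-x^2)$, obtaining $|u_n(x)-\operatorname{sgn}(x)|\le(1-|x|)^{2^{n/2}}$, i.e.\ doubly-exponential convergence on inputs bounded away from $0$, so depth $n=O(\log D)$ already gives error $<B^{-D}$. For the decoder it replaces the ReLU bit-extraction altogether by a polynomial dynamical system $v(x)=2-3x^2$: for any binary string of length $D$ there is an interval of initial conditions of length $\ge 6^{-D}$ whose orbit under $v$ lands in prescribed half-intervals, so the sequential ``$\times B$ and threshold'' is never needed. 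The selector that loads the correct encoding weight with precision $6^{-D}$ is then built from $u_n$ with $n=O(\log W)$; absorbing this logarithmic overhead forces the choice $N\sim W^{(1-\delta)/d}$ and is the actual reason the endpoint $p=\tfrac{2r}{d}$ is excluded, not the constant-factor blowup you suggest.
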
 
The proof is given in Section \ref{sec:deepsecondproof}; its idea is to reduce the approximation by $\sigma$-networks to deep polynomial approximations. Then, we can follow the lines of the proof of Theorem \ref{th:deepphase} with some adjustments (in particular, we replace the usual bit extraction dynamic as in Fig.\ref{fig:approxdiscrete}(b)  by a polynomial dynamical system). We remark that in general, if constrained by degree, polynomials poorly approximate ReLU and other piecewise linear functions \cite{telgarsky2017neural}, but in our setting the polynomials are constrained by their compositional complexity rather than degree, in which case a polynomial approximation of ReLU can be much more accurate.

Combined with Statement 1 of Theorem  \ref{th:feasible}, Theorem \ref{th:deepsecond} implies, in particular, that the phase diagram for general piecewise polynomial activation functions is the same as for ReLU: 

\begin{corol} Let $\sigma$ be a continuous piecewise polynomial activation function. Then the rates $p<\tfrac{2r}{d}$ are feasible for $\sigma$-networks, and the rates $p>\tfrac{2r}{d}$ are infeasible.
\end{corol}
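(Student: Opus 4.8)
The plan is to read the corollary off results already established: the infeasibility of $p>\tfrac{2r}{d}$ from Theorem~\ref{th:feasible}(1), and the feasibility of $p<\tfrac{2r}{d}$ from Theorems~\ref{th:deepphase} and~\ref{th:deepsecond}, after a short case distinction on $\sigma$. So this is essentially bookkeeping over the theorems of Sections~\ref{sec:phasediag} and~\ref{sec:otheractiv}.

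The infeasible half needs no work: Theorem~\ref{th:feasible}(1) already states that rates $p>\tfrac{2r}{d}$ are infeasible for every piecewise-polynomial activation, hence in particular for our $\sigma$.

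For the feasible half I would first reduce to proving feasibility of every rate in the open interval $(\tfrac r d,\tfrac{2r}{d})$: if $p\le\tfrac r d$, fix some $p'\in(\tfrac r d,\tfrac{2r}{d})$; then an approximation with $\|f-\ft_W\|_\infty\le c_{F,p'}W^{-p'}$ also satisfies $\|f-\ft_W\|_\infty\le c_{F,p'}W^{-p}$ for all $W\ge 1$. So fix $p\in(\tfrac r d,\tfrac{2r}{d})$ and split into two cases. If $\sigma$ is piecewise linear, then, as recalled at the start of Section~\ref{sec:otheractiv}, $\sigma$-networks and ReLU networks are equivalent up to constant factors in the number of weights (\cite[Proposition~1]{yarsawtooth}), so Theorem~\ref{th:deepphase} delivers the rate $p$ for $\sigma$-networks with $L\le c_{r,d}W^{pd/r-1}$ layers. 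If $\sigma$ is piecewise polynomial but not piecewise linear, then one of its finitely many polynomial pieces is a polynomial of degree $\ge 2$ on a non-degenerate interval $I$; on $\mathrm{int}\,I$ the second derivative $\sigma''$ coincides with a nonzero polynomial, which vanishes at only finitely many points, so there is $x_0\in\mathrm{int}\,I$ with $\sigma''(x_0)\neq 0$. Theorem~\ref{th:deepsecond} then applies and again gives the rate $p$. This exhausts all $p<\tfrac{2r}{d}$.

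I do not anticipate a genuine obstacle; the only point deserving attention is that Theorem~\ref{th:deepsecond} cannot be invoked when $\sigma$ is piecewise linear, since then $\sigma''$ vanishes wherever it is defined — this is precisely why the piecewise-linear case must be routed through the ReLU equivalence rather than through Theorem~\ref{th:deepsecond}. It is also worth noting that the corollary deliberately leaves the borderline rate $p=\tfrac{2r}{d}$ unaddressed for a general piecewise polynomial $\sigma$, consistently with the fact that Theorem~\ref{th:deepsecond} supplies only the open interval $(\tfrac r d,\tfrac{2r}{d})$ in the non-piecewise-linear case.
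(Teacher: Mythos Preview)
Your proposal is correct and follows the same approach the paper intends: the paper derives the corollary simply by ``combining Statement 1 of Theorem~\ref{th:feasible} with Theorem~\ref{th:deepsecond}'' together with the earlier remark that piecewise-linear activations are equivalent to ReLU. Your write-up is in fact more explicit than the paper's one-line justification, since you spell out the case split (piecewise linear via the ReLU equivalence, otherwise via a point of nonzero curvature) that the paper leaves implicit; this is a genuine improvement in clarity, as Theorem~\ref{th:deepsecond} alone does not cover the piecewise-linear case.
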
 

A remarkable class of functions that can be seen as a far-reaching generalization of polynomials are the Pfaffian functions \cite{khovanskii}. Level sets of these functions admit bounds on the number of their connected components that are similar to analogous bounds for algebraic sets, and this is a key property in establishing upper bounds on VC dimensions of networks. In particular, it was proved in \cite{karpinski1997polynomial} that the VC-dimension of networks with the standard sigmoid activation function $\sigma(x)=1/(1+e^{-x})$ is upper-bounded by $O(W^2k^2),$ where $k$ is the number of computation units (see also \cite[Theorem 8.13]{anthony2009neural}). Since $k\le W$, the bound $O(W^2k^2)$ implies the slightly weaker bound $O(W^4)$. Then, by mimicking the proof of statement 1 of Theorem \ref{th:feasible} and replacing there the bound $O(W^2)$ for piecewise-polynomial activation by the bound $O(W^4)$ for the standard sigmoid activation, 
we get
\begin{theorem}\label{th:sigmoid} For networks with the standard sigmoid activation function $\sigma=1/(1+e^{-x})$, the rates $p>\tfrac{4r}{d}$ are infeasible.
\end{theorem}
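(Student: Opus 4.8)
The plan is to follow the route indicated right before the statement: feed the VC-dimension bound of \cite{karpinski1997polynomial} for sigmoidal networks into the standard argument (as carried out in \cite[Theorem~1]{yarsawtooth}) that converts a polynomial VC bound into an upper bound on feasible approximation rates.

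The first ingredient is the VC input. By \cite{karpinski1997polynomial} (see also \cite[Theorem~8.13]{anthony2009neural}), for $\sigma$ the standard logistic sigmoid, the class of binary classifiers $\x\mapsto\operatorname{sgn}\big(g_{\mathbf w}(\x)-\theta\big)$, where $g_{\mathbf w}$ is computed by a $\sigma$-network with $k$ computation units and $W$ weights and $\theta\in\R$ is a threshold, has VC-dimension $O(W^2k^2)$. Since $k\le W$ (there is at least one weight per unit), this degrades to $O(W^4)$. Note that turning the real-valued network of the approximation problem into such a classifier only appends an affine/threshold output node, changing $W$ and $k$ by $O(1)$, which does not affect the $O(W^4)$ bound.

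The second ingredient is the localization/shattering reduction. Suppose an architecture $\eta_W$ with weight assignment $G_W$ satisfies $\sup_{f\in F_{r,d}}\|f-\ft_{\eta_W,G_W}\|_\infty\le\epsilon$. Fix a $C^\infty$ bump $\phi$ supported in $(0,1)^d$ that attains its maximum at the center, and for an integer $m$ place $m^d$ rescaled, disjointly supported copies $\phi_{\mathbf i}(\x)=h\,\phi(m\x-\mathbf i)$, $\mathbf i\in\{0,\dots,m-1\}^d$, centered at the grid points $\x_{\mathbf i}$. The $\mathcal C^{k,\alpha}$-seminorms of $\phi_{\mathbf i}$ scale like $h\,m^{k+\alpha}=h\,m^r$, so there is $c_{r,d}>0$ such that with $h=c_{r,d}m^{-r}$ the function $g_\beta:=\sum_{\mathbf i}\beta_{\mathbf i}\phi_{\mathbf i}$ lies in $F_{r,d}$ for every $\beta\in\{0,1\}^{m^d}$; moreover $g_\beta(\x_{\mathbf i})$ equals a fixed value $a_m\sim m^{-r}$ when $\beta_{\mathbf i}=1$ and $0$ otherwise. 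If $\epsilon<\tfrac12 a_m$, then thresholding $\ft_{\eta_W,G_W(g_\beta)}$ at $\tfrac12 a_m$ recovers every $\beta_{\mathbf i}$; since $G_W(g_\beta)$ is an admissible weight vector for $\eta_W$, the associated classifier family realizes all $2^{m^d}$ dichotomies of $\{\x_{\mathbf i}\}$, hence $m^d\le\vc\le c\,W^4$.

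The third step is to conclude: whenever $m^d>c\,W^4$, the architecture $\eta_W$ cannot approximate all of $F_{r,d}$ to accuracy below $\tfrac12 a_m\sim m^{-r}$. Taking $m$ of order $W^{4/d}$ just above this threshold gives $\inf_{\eta_W,G_W}\sup_{f\in F_{r,d}}\|f-\ft_{\eta_W,G_W}\|_\infty\ge c'_{r,d}W^{-4r/d}$ for all large $W$, so for any $p>\tfrac{4r}{d}$ the bound in Eq.\eqref{eq:rate2} fails for large $W$, i.e.\ the rate $p$ is infeasible. All three steps are routine; the only points needing a little care are checking that attaching the threshold node keeps us inside the hypothesis class of \cite{karpinski1997polynomial} (it does) and tracking the normalization constants in the bump construction, so in effect the theorem is a corollary of \cite{karpinski1997polynomial} via the template of \cite[Theorem~1]{yarsawtooth}. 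I do not expect a genuine obstacle here, only bookkeeping.
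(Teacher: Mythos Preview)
Your proposal is correct and follows essentially the same approach as the paper: the paper's proof consists precisely of invoking the $O(W^2k^2)\le O(W^4)$ VC bound of \cite{karpinski1997polynomial} and then ``mimicking the proof of statement 1 of Theorem~\ref{th:feasible}'' (i.e.\ the bump-grid shattering argument from \cite[Theorem~1]{yarsawtooth}), which is exactly what you spell out. Your care in noting that appending the threshold node keeps the architecture within the Karpinski--Macintyre hypothesis class is the only nontrivial bookkeeping, and it is handled correctly.
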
 
It appears that there remains a significant gap between the upper and lower VC dimension bounds for networks with $\sigma(x)=1/(1+e^{-x})$ (see a discussion in \cite[Chapter 8]{anthony2009neural}). Likewise, we do not know if the approximation rates up to $p= \frac{4r}{d}$ are indeed feasible with this $\sigma$.

All the above results ignore both precision and magnitude of the network weights. In fact, the rates $p>\tfrac{2r}{d}$ can be excluded for rather general activation functions if we put some mild constraints on the growth of the weights. In Section \ref{sec:weightbounds} we explain this point using a covering number bound from \cite[Theorem 14.5]{anthony2009neural}.

\section{``Deep Fourier expansion''}\label{sec:deepfourier}

Note that the usual Fourier series expansion $f(\mathbf x)\sim\sum_{\mathbf n\in\mathbb Z^d}a_{\mathbf n}e^{2\pi i\mathbf n\cdot\mathbf x}$ for a function $f$ on $[0,1]^d$ can be viewed as a neural network with one hidden layer, the $\sin$ activation function, and predefined weights in the first layer. Standard convergence bounds for Fourier series (see e.g. \cite{jackson1930theory}) correspond to the shallow continuous rate $p=\tfrac{r}{d}$, in agreement with the linearity of the standard assignment of Fourier coefficients. We can ask what happens to the expressiveness of this approximation if we generalize it by removing all constraints on the architecture and weights,  i.e., consider a general deep network with the $\sin$ activation function.

It turns out that such a  model is drastically more expressive than both standard Fourier expansion and deep ReLU networks. The key factor in this is the \emph{periodicity} of the activation function $\sigma=\sin$; the particular form of $\sigma$ is not that important. Our main result below assumes that the network can use both ReLU and $\sigma$ as activation functions; we refer to these networks as \emph{mixed ReLU/$\sigma$} networks. 

\begin{theorem}\label{th:sin}
Fix $r,d$. Let $\sigma:\mathbb R\to\mathbb R$ be a Lipschitz periodic function with period $T$. Suppose that $\sigma(x)>0$ for $x\in(0,{T}/{2})$ and $\sigma(x)<0$ for $x\in({T}/{2},T)$, and also that $\max_{x\in\mathbb R}\sigma(x)=-\min_{x\in\mathbb R}\sigma(x).$  Then: 
\begin{enumerate}
\item For any number $W$, we can find a mixed ReLU/$\sigma$ network architecture $\eta_W$ with $W$ weights, and a corresponding weight assignment $G_W$, such that
\begin{align}\label{eq:ratesin}
\sup_{f\in F_{r,d}}\|f - \ft_{\eta_W,G_W}\|_{\infty} \le \exp\big(-c_{r,d}W^{1/2}\big)
\end{align}
with some $r,d$-dependent constant $c_{r,d}>0.$
\item Moreover, the above architecture $\eta_W$ has only one weight whose value depends on $f\in F_{r,d}$; for all other weights the assignment $G_W$ is $f$-independent.
\end{enumerate}
\end{theorem}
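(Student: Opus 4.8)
The plan is to build on the skeleton of the proof of Theorem~\ref{th:deepphase} (and of \cite{yaropt}): approximate $f$ by local Taylor polynomials, discretize their coefficients, and store them in network weights. Two changes produce the exponential rate. First, instead of distributing the coded data over $\sim\epsilon^{-1/p}$ encoding weights, we pack the data of \emph{all} $\sim\epsilon^{-d/r}$ fine patches into a \emph{single} real weight $w_f$; this is harmless since weight magnitudes are unconstrained, and it will immediately give statement~2, because every other weight in the construction is $f$-independent. Second, instead of recovering the relevant coefficients by the sequential bit extraction of \cite{bartlett1998almost} (whose depth is proportional to the number of extracted bits, hence $\sim\epsilon^{-d/r}$), we use the periodicity of $\sigma$ to implement a \emph{dichotomy-based lookup} of depth only $\sim\log(1/\epsilon)$. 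Conceptually, ReLU networks are capped at polynomial rates by the $O(W^2)$ VC bound, whereas rescaling the argument of a periodic $\sigma$ costs nothing, which is what opens the door to exponential accuracy.

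Concretely, fix a target accuracy $\epsilon$ and set $M\sim\epsilon^{-1/r}$. Partition $[0,1]^d$ into $\sim M^d$ patches of side $\tfrac1M$; on each patch take the order-$(\lceil r\rceil-1)$ Taylor polynomial of $f$ at the center, so the pointwise error is $O(M^{-r})$, and discretize its order-$k$ coefficients to precision $M^{-(r-k)}$, i.e.\ $O(\log M)$ bits each; since there are $O_{r,d}(1)$ coefficients per patch, all the data fits into a binary string of length $O(M^d\log M)$, which we read as a number $w_f\in[0,1)$ arranged into $M^d$ consecutive blocks of $\ell=O_{r,d}(\log M)$ bits, with a fixed run-length-limited guard structure built in (see the obstacle below). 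A fixed ReLU/$\sigma$ subnetwork then reads $\mathbf x$ and outputs the $m=O_{r,d}(\log M)$ binary digits $i_1,\dots,i_m$ of the patch index $\iota(\mathbf x)=\sum_{i=1}^d\lfloor Mx_i\rfloor M^{i-1}$: indeed the $j$-th binary digit of $\lfloor Mx_i\rfloor$ is the sign-threshold of $\pm\sigma\bigl(\tfrac{TM}{2^{j+1}}x_i\bigr)$ — periodicity of $\sigma$ makes the enormous integer part of $\tfrac{M}{2^{j+1}}x_i$ irrelevant, and the hypotheses $\sigma>0$ on $(0,\tfrac T2)$, $\sigma<0$ on $(\tfrac T2,T)$ pick out exactly that digit, using a single modest weight.

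The core step is the lookup. We compute $z:=2^{\ell\,\iota(\mathbf x)}w_f = w_f\prod_{s=1}^m\bigl(1+i_s(K_s-1)\bigr)$ with $K_s:=2^{\ell 2^{s-1}}$, where each factor is $1$ (if $i_s=0$) or the fixed $f$-independent constant $K_s$ (if $i_s=1$), by the $m$ exact ReLU gating steps $z_s=z_{s-1}+\mathrm{gate}\bigl(i_s,(K_s-1)z_{s-1}\bigr)$, where $\mathrm{gate}(b,t)=(t-2C(1-b))_+-(-t-2C(1-b))_+$ equals $t$ for $b=1$ and $0$ for $b=0$ once the single large constant $C$ dominates all intermediate magnitudes. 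Then, for $t=1,\dots,\ell$, the $t$-th bit of block $\iota(\mathbf x)$ is the sign-threshold of $\pm\sigma(T2^{t-1}z)$ — again periodicity discards the huge integer part of $2^{t-1}z=2^{\ell\iota(\mathbf x)+t-1}w_f$ and leaves precisely the wanted fractional bit, with the guard structure guaranteeing $|\sigma|$ is bounded below by a fixed $\mu>0$ at each readout; the normalization $\max\sigma=-\min\sigma$ is used only to threshold symmetrically. A fixed linear map restores the discretized Taylor coefficients from these $O(\log M)$ bits, and a fixed ReLU subnetwork of depth $O(\log M)$ and width $O_{r,d}(1)$ evaluates the corresponding polynomial in $\mathbf x$ to error $O(M^{-r})$ with the approximate-multiplication gadget of \cite{yarsawtooth}. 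The discontinuity of $\mathbf x\mapsto\lfloor Mx_i\rfloor$ is handled as in \cite{yaropt}: run $d+1$ shifted-grid copies of the whole construction — all their data packed into the same single weight $w_f$, at only a constant cost — and combine the outputs by a fixed piecewise-linear partition of unity, so that at every $\mathbf x$ at least one copy is evaluated deep in the interior of its grid. Summing the error contributions (Taylor remainder, coefficient discretization, approximate multiplications, everything else exact) gives $\|f-\ft_{\eta_W,G_W}\|_\infty=O(M^{-r})=O(\epsilon)$ uniformly over $F_{r,d}$; a routine (if lengthy) count shows the network has depth and width both $O(\log(1/\epsilon))$, hence $W=O\bigl((\log(1/\epsilon))^2\bigr)$, and inverting this relation yields $\epsilon\le\exp(-c_{r,d}W^{1/2})$, which is statement~1. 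Statement~2 is immediate, since the only weight above that depends on $f$ is $w_f$.

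The main obstacle is that Step~3 is only exact if the gating bits $i_s$ come out exactly in $\{0,1\}$: multiplying by the doubly-exponentially large constants $K_s$ amplifies any error in $i_s$ catastrophically, so every input $\mathbf x$ must be \emph{strictly} interior to whichever shifted grid is used at that point. Hence the shifted-grid/partition-of-unity mechanism has to be set up so that at every $\mathbf x\in[0,1]^d$ some grid is strictly interior, and simultaneously the encoding of $w_f$ must carry enough run-length/guard structure that all the $\sigma$-sign-readouts (both in Step~2 and Step~3) enjoy a single uniform margin $|\sigma|\ge\mu>0$. Verifying that these two requirements can be met together without degrading the $O(M^{-r})$ accuracy — essentially a coding-theory bookkeeping exercise — is the technical heart of the argument; once it is in place, the periodicity of $\sigma$ is precisely what converts the lookup from linear to logarithmic depth, which is the entire source of the exponential improvement over the polynomial rates of Section~\ref{sec:phasediag}.
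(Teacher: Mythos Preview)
Your proposal is correct and reaches the same $W=O(\log^2(1/\epsilon))$ asymptotic, but the core decoding step differs genuinely from the paper's. You store $w_f$ as an explicit binary concatenation of all patch data and access the relevant block by \emph{multiplying} $w_f$ by the shift factor $2^{\ell\iota(\mathbf x)}=\prod_s K_s^{i_s}$ via exact ReLU gates, then read off the $\ell$ bits by applying $\sigma$ at dyadic scales. The paper instead treats each of the $R\sim\log(1/\epsilon)$ output bits as an abstract Boolean function $A:\{0,1\}^K\to\{0,1\}$ and proves (Lemma~\ref{lem:sindecoding}) that for \emph{any} such $A$ one can choose an encoding weight $w_A$ so that the controlled dynamical system $w\mapsto w$ or $w\mapsto\sigma(a_kw)$ (branched by the $K$ patch bits) terminates with the correct sign; the resulting $R$ classifier weights are then packed into a single weight in a separate step. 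Your encoding is fully constructive (the bits of $w_f$ \emph{are} the data), whereas the paper's lemma is an existence argument; on the other hand the paper's iterates stay in $[-1,1]$ throughout (only the fixed scale factors $a_k$ grow doubly exponentially), while your intermediate values $z_s$ and the gate constant $C$ are themselves of order $2^{M^d\log M}$, so your argument relies more visibly on unbounded exact real arithmetic. Two smaller divergences: the paper simplifies by reducing to $r\le 1$ (piecewise-constant approximation already gives the stated bound), avoiding the Taylor machinery you carry; and the partition of unity at scale $\tfrac1M$ must itself be built with the periodic $\sigma$ (the paper's filters $\Psi_0,\Psi_1$), not with the piecewise-linear spikes of \cite{yaropt} as your phrase ``piecewise-linear partition of unity'' suggests, since the latter would cost $\Theta(M^d)$ weights and destroy the rate.
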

In contrast to the previously considered power law rates \eqref{eq:rate2}, the rate \eqref{eq:ratesin} is exponential and corresponds to $p=\infty$, so that the ReLU-infeasible sector $p>\tfrac{2r}{d}$ is fully feasible for mixed ReLU/periodic networks. Moreover, statement 2 of the above theorem means that all information about the approximated function $f$ can be encoded in a single network weight.

\begin{figure}
\begin{subfigure}[b]{0.45\textwidth}
\centering
\includegraphics[scale = 0.4, clip, trim=0mm 60mm 0mm 40mm ]{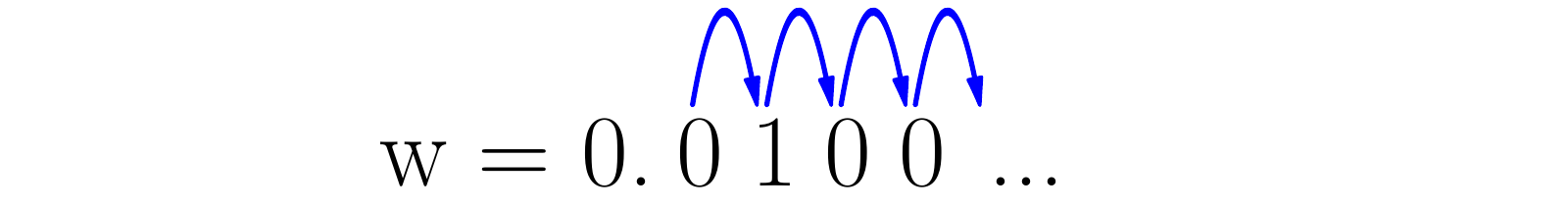}
    \caption{With ReLU}
\end{subfigure}
\begin{subfigure}[b]{0.5\textwidth}
\adjustbox{scale=1,right}{%
\includegraphics[scale = 0.4, clip, trim=0mm 14mm 0mm 80mm ]{sinBitExtraction.pdf}
}
\caption{With a periodic activation $\sigma$}
\end{subfigure}
\caption{Standard sequential (a) and dichotomy-based (b) bit extraction. Bit extraction is used to decode information from network weights and is crucial in achieving non-classical rates $p>\tfrac{r}{d}.$ Standard bit extraction (\cite{bartlett1998almost}, see Fig.\ref{fig:approxdiscrete}) is available with the threshold or ReLU activation functions. The bits are decoded one-by-one, which requires a significant networks depth and caps feasible rates at $p=\tfrac{2r}{d}.$ In contrast, ``deep Fourier expansion'' of Theorem \ref{th:sin} is essentially based on a more efficient dichotomy-based lookup that becomes available if neurons can implement a periodic activation function (see Section \ref{sec:sketchsin}).}\label{fig:sinextraction}
\end{figure}   

The sketch of proof of Theorem \ref{th:sin} is given in Section \ref{sec:sketchsin}, and details are provided in Section \ref{sec:proofsin}. The main idea of the network design is to compute each digit of the output using a dynamical system controlled by the digits of the input.
The faster rate can be interpreted as resulting from an efficient, dichotomy-based lookup that can be performed in networks including both ReLU and a periodic activation, see Fig.\ref{fig:sinextraction}. 

It is well-known that some exotic activation functions allow to achieve rates even higher than those we have discussed. For example, a result of \cite{maiorov1999lower} based on the Kolmogorov Superposition Theorem  (\cite[p. 553]{constrappr96}) shows the existence of a strictly increasing analytic activation function $\sigma$ such that any $f\in C([0,1]^d)$ can be approximated with arbitrary accuracy by a three-layer $\sigma$-network with only $9d+3$ units. However, in contrast to these results, our Theorem \ref{th:sin} holds for a very simple and general class of activation functions.

\section{Distribution of information in the network}\label{sec:info}

\begin{table}
\begin{center}
\renewcommand{\arraystretch}{1.5}
\begin{tabular}{lccc}
\toprule
\textrm{Approximation} & \textrm{Shallow ReLU  } & \textrm{Deep ReLU}& \textrm{``Deep Fourier''} 
\\
\midrule 
Rate ($p$) & $p=\tfrac{r}{d}$ &  $p\in(\tfrac{r}{d},\tfrac{2r}{d}]$ & $p=\infty$\\
Weight assignment & continuous & discontinuous & discontinuous \\
Network depth ($L$) & $\log(1/\epsilon)$ & $\epsilon^{1/p-d/r}$ & $\log(1/\epsilon)$\\
Number of weights, total ($W$) 
 & $\epsilon^{-d/r}$ & $\epsilon^{-1/p}$ & $\log^2(1/\epsilon)$\\
Number of encoding weights & $\epsilon^{-d/r}$ & $\epsilon^{-1/p}$ & 1 \\
Bits / encoding weight & $\log(1/\epsilon)$ & $\epsilon^{1/p-d/r}$ & $\epsilon^{-d/r}\log(1/\epsilon)$  \\
\bottomrule
\end{tabular}
\end{center}
\caption{Summary of the examined approximation modes. $\epsilon$ stands for the approximation accuracy $\|f-\widetilde f\|_\infty$  achieved uniformly on the H\"older ball $F_{r,d}$. The expressions in the bottom four rows show the orders of magnitude for various network characteristics w.r.t. $\epsilon$.}\label{tab:info}
\end{table}

It is interesting to examine how information about the approximated function $f$ is distributed in the network (see Table \ref{tab:info}). The classical theorem of Kolmogorov \cite{KolmogorovTikhomirov} shows that the $\epsilon$-entropy of the H\"older ball $F_{r,d}$ scales as $\epsilon^{-d/r}$ at small $\epsilon.$ This means that any family of networks achieving accuracy $\epsilon$ on this ball must include at least $\epsilon^{-d/r}$ bits of information about $f\in F_{r,d}.$ This imposes constraints on the magnitude and/or precision of network weights: if the network is small and the weights have a limited space of values, the network simply cannot contain the necessary amount of information  (\cite{bolcskei2017memory,petersen2018optimal, voigtlaender2019approximation}). 

Classical linear models or ``weakly nonclassical'' models such as shallow ReLU networks contain $\epsilon^{-d/r}$ weights, and a weight precision of $O(\log (1/\epsilon))$ bits is sufficient to accomodate the total $\epsilon$-entropy $\epsilon^{-d/r}$ (\cite{voigtlaender2019approximation}). In contrast, the models in the ``deep discontinuous ReLU'' phase contain much fewer weights and accordingly need a much higher weight precision. Specifically, it follows from the proofs of Theorems \ref{th:deepphase} and \ref{th:deepsecond} that the number of encoding weights in a network with rate $p\in (\tfrac{r}{d},\tfrac{2r}{d}]$ is $\sim\epsilon^{-1/p}$, while each encoding weight must be specified with accuracy $c^{\epsilon^{1/p-d/r}}$ with some constant $c>0,$ i.e. must have $\sim\epsilon^{1/p-d/r}$ bits.      

In the ``deep Fourier'' model, the encoding weight is unique. In the end of Section \ref{sec:sketchsin} we roughly estimate the information contained in this weight as $\epsilon^{-d/r}\log(1/\epsilon)$, again in agreement with the $\epsilon$-entropy $\epsilon^{-d/r}$ of the H\"older ball $F_{r,d}$.

\section{Discussion}
Our results highlight tradeoffs between complexity of the network size and complexity of activations and/or arithmetic operations: the size can be decreased substantially at the cost of the other complexities. In addition to the increased precision of network operations, this requires the weight assignment to be discontinuous with respect to the fitted function $f$. While we do not discuss learning aspects in this paper, this discontinuity suggests that such networks should be hard to train by usual gradient-based methods, and would probably require other types of fitting algorithms.

The mentioned complexity tradeoffs are not unlimited: we have shown that for all piecewise polynomial activations the feasible rates span the sector $p\le \tfrac{2r}{d}$. We do not know if this remains true for other standard nonpolynomial activations such as the standard sigmoid. This question seems to be essentially rooted in the optimality of Khovanskii's fewnomial bounds, which is a long-standing problem in algebraic geometry \cite{haas2002simple, dickenstein2007extremal}.

We have introduced the ``deep Fourier'' model -- a hypothetical computational model assuming that the neurons can perfectly compute a periodic function of their inputs. This model allows to achieve exponential approximation rates while storing all information in a single weight. This result is purely theoretical; it doesn't seem possible to implement such a model using practical technologies. Rather, we see the main interest of this result in the theoretical demonstration of a huge network size reduction compared to the usual shallow Fourier expansion, and in the associated novel bit extraction mechanism.

\section{Broader impact}
Not applicable.

\section{Acknowledgments and Funding Transparency Statement}
We thank  Christoph Schwab for suggesting an extension of Theorem \ref{th:sin} to general periodic activations. We also thank the anonymous reviewers for several useful comments and suggestions. The research was not supported by third parties. The authors are not aware of any conflict of interest associated with this research.

\bibliographystyle{unsrt}
\bibliography{references}

\appendix
\section{Theorem \ref{th:deepphase}: proof details}\label{sec:proofdeepphase}
We follow the paper \cite{yaropt} where Theorem \ref{th:deepphase} was proved for $r\le 1$, and generalize it to arbitrary $r>0$ using the strategy explained in Section \ref{sec:phasediag}. 
Given $p \in (\frac{r}{d}, \frac{2r}{s}]$ we show that it is possible to construct a network architecture with $W$ weights and $L = O(W^{pd/r - 1})$ layers which approximates every $f \in {F}_{r, d}$ with error $O(W^{-p})$. In \autoref{rm:nolog} we deal with the case $p = \frac{r}{d}$.

We start by describing the space partition and related constructions. Then we give an overview of the network structure. Finally, we describe in more detail the network computation of the Taylor approximations, which is the main novel element of Theorem \ref{th:deepphase}.

\subsection{Space partitions}

For an integer $N \geq 1$ we denote by $\mathcal{P}_N$ a standard triangulation of $\R^d$ into simplexes:
\begin{align*}
\Delta_{N, \n, \rho} = \left\{\mathbf x \in \R^d : 0 \leq x_{\rho(1)} - \frac{\n_{\rho(1)}}{N} \leq \dots \leq x_{\rho(d)} - \frac{\n_{\rho(d)}}{N} \right\},
\end{align*}
where $\n \in \Z^d$ and $\rho$ is a permutation of $d$ elements. The vertices of these simpixes are the points of the grid $(\Z / N)^d$. We call the set of all the vertices \emph{the $N$-grid} and a particular vertex \emph{an $N$-knot}. For an $N$-knot we call the union of simplexes it belongs to \emph{an $N$-patch}. We denote a set of all $N$-knots $\mathbf{K}_{N}$.

Let $\phi: \R^d \to \R$ be the ``spike'' function defined as the continuous piecewise linear function such that:
\begin{enumerate}
\item $\phi$ is linear on every simplex from the triangulation $\mathcal{P}_1$;
\item $\phi(0) = 1$, $\phi(\n) = 0$ for all other $\n \in \Z^d$.
\end{enumerate}
The function $\phi(\mathbf x)$ can be computed by a feed-forward ReLU network with $O(d^2)$ weights (see \cite[Section~4.2]{yaropt} for details). We treat $d$ as a constant, so we can say that $\phi(\mathbf x)$ can be computed by a network with a constant number of weights. Note that for integer $N$ and $\n \in Z^d \cap [0, N]^d$, the function $\phi(N\mathbf x - \n)$ is a continuous piecewise linear function which is linear in each simplex from $\mathcal{P}_N$, is equal to 1 at $\mathbf x=\frac{\n}{N}$, and vanishes at all other N-knots of $(\Z / N)^d$. 

It is convenient to keep in mind two following simple propositions:
\begin{proposition}\label{prop:linear_interpolation}
Suppose we have $K$ $N$-knots $\frac{\n_1}{N}, \dots, \frac{\n_K}{N}$, $\n_i \in \Z^d$ and corresponding numbers $\ell_1, \dots, \ell_K$. Then the function
\begin{align*}
g(\mathbf x) = \sum_{k=1}^K \ell_k \phi(N\mathbf x - \n_k)
\end{align*}
has the following properties:
\begin{enumerate}
\item $g(\mathbf x)$ is linear on each simplex from $\mathcal{P}_N$; 
\item $g\left( \frac{\n_k}{N} \right) = \ell_k$ for $k=1, \dots N$. For other $N$-knots $\frac{\n}{N}$, $h$ is zero: $h\left( \frac{\n}{N} \right) = 0$;
\item $g(\mathbf x)$ can be computed exactly by a network with $O(K)$ weights and $O(1)$ layers.
\end{enumerate}
\end{proposition}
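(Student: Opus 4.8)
The plan is to check the three claims directly from the defining properties of the spike function $\phi$ together with the remark preceding the proposition, which already records that for integer $N$ the function $\phi(N\mathbf x-\n_k)$ is linear on each simplex of $\mathcal P_N$, equals $1$ at $\tfrac{\n_k}{N}$, and vanishes at every other $N$-knot.

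For claim 1, each summand $\ell_k\phi(N\mathbf x-\n_k)$ is affine on every simplex $\Delta\in\mathcal P_N$ by that remark, and a finite sum of functions affine on $\Delta$ is again affine on $\Delta$; hence $g$ is linear on each simplex of $\mathcal P_N$. For claim 2, evaluate $g$ at an arbitrary $N$-knot $\tfrac{\n}{N}$ with $\n\in\Z^d$: since $N\tfrac{\n}{N}-\n_k=\n-\n_k\in\Z^d$, property 2 in the definition of $\phi$ gives $\phi(\n-\n_k)=1$ when $\n=\n_k$ and $\phi(\n-\n_k)=0$ otherwise; as the listed knots are distinct, $g(\tfrac{\n}{N})$ equals $\ell_k$ when $\tfrac{\n}{N}=\tfrac{\n_k}{N}$ and equals $0$ at all remaining $N$-knots.

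For claim 3, I would take $K$ parallel copies of the ReLU subnetwork computing $\phi$ (which has $O(1)$ weights and $O(1)$ layers, since $d$ is fixed, by \cite[Section~4.2]{yaropt}), absorb the affine map $\mathbf x\mapsto N\mathbf x-\n_k$ into the input-layer weights of the $k$-th copy, let all copies read the common input $\mathbf x$, and sum their outputs in a single output unit with weights $\ell_1,\dots,\ell_K$; this computes $g$ exactly with $O(K)$ weights and $O(1)$ layers. The whole argument is routine verification: the only mildly delicate points are the harmless bookkeeping of folding the affine preprocessing into the first layer and the standing assumption that the $\tfrac{\n_k}{N}$ are distinct (so that each contributes to exactly one $N$-knot in claim 2), and I do not expect any genuine obstacle.
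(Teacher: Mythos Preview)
Your proposal is correct. The paper does not actually supply a proof of this proposition: it is introduced as one of ``two following simple propositions'' that are ``convenient to keep in mind,'' and the authors rely entirely on the properties of $\phi$ stated just before (linearity on each simplex of $\mathcal P_N$, the value pattern at $N$-knots, and the $O(d^2)$-weight ReLU implementation from \cite[Section~4.2]{yaropt}). Your verification of the three claims from exactly these properties is the intended routine check, so there is nothing to compare.
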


\begin{proposition}\label{prop:constant_interpolation}
Suppose we have $K$ $N$-knots $\frac{\n_1}{N}, \dots, \frac{\n_K}{N}$, $\n_i \in \Z^d$ and corresponding numbers $s_1, \dots, s_K$. Suppose also that $N$-patches associated with $\frac{\n_1}{N}, \dots, \frac{\n_K}{N}$ are disjoint. Then there exists function $h(\mathbf x)$ with the following properties:
\begin{enumerate}
\item $h(\mathbf x)$ is linear on each simplex from $\mathcal{P}_N$; 
\item For $k=1, \dots N$, $h\left( \x \right) = s_k$ at an $N$-patch associated with $\frac{\n_i}{N}$; 
\item $h(\mathbf x)$ can be computed exactly by a network with $O(K)$ weights and $O(1)$ layers.
\end{enumerate}
\end{proposition}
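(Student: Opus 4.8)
The plan is to deduce the statement from Proposition~\ref{prop:linear_interpolation} by exhibiting $h$ as the piecewise-linear interpolant of appropriate values prescribed at the $N$-knots. (I read property~2 of the statement in the obvious intended way: $h(\x)=s_k$ on the $N$-patch associated with $\frac{\n_k}{N}$, for $k=1,\dots,K$.) For each $k$ let $V_k\subset\mathbf K_N$ be the set of $N$-knots lying in the $N$-patch of $\frac{\n_k}{N}$; equivalently, $V_k$ consists of $\frac{\n_k}{N}$ together with all $N$-knots that share a simplex of $\mathcal P_N$ with it. Since $d$ is treated as a constant, $|V_k|\le c_d$ for an absolute constant $c_d$. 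The construction is to let $h$ take the value $s_k$ at every $N$-knot in $V_k$ and the value $0$ at all remaining $N$-knots, and then extend linearly over each simplex of $\mathcal P_N$.

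The first thing I would check is that this prescription is consistent, i.e. that $V_1,\dots,V_K$ are pairwise disjoint — and this is precisely where the hypothesis that the $N$-patches are disjoint enters. Indeed, if an $N$-knot $\frac{\m}{N}$ belonged to $V_j\cap V_k$ with $j\ne k$, then $\frac{\m}{N}$ would be a vertex of a simplex $S$ that also has $\frac{\n_j}{N}$ as a vertex and of a simplex $S'$ that also has $\frac{\n_k}{N}$ as a vertex; then $S$ lies in the $N$-patch of $\frac{\n_j}{N}$ and $S'$ lies in the $N$-patch of $\frac{\n_k}{N}$, so $\frac{\m}{N}\in S\cap S'$ would lie in both patches, contradicting their disjointness. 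Hence the value attached to each $N$-knot is unambiguous.

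Next I would apply Proposition~\ref{prop:linear_interpolation} with the knot list $\bigcup_{k=1}^K V_k$ (which has $\sum_k|V_k|\le c_dK=O(K)$ elements) and the value $s_k$ attached to each knot of $V_k$. This produces a function $h$ that is linear on every simplex of $\mathcal P_N$, matches the prescribed values, and is computable by a network with $O(K)$ weights and $O(1)$ layers, which gives properties~1 and~3. For property~2, fix $k$ and let $S$ be any simplex contained in the $N$-patch of $\frac{\n_k}{N}$. By definition of this patch, $\frac{\n_k}{N}$ is a vertex of $S$, and the remaining $d$ vertices of $S$ share $S$ with $\frac{\n_k}{N}$, so all $d+1$ vertices of $S$ lie in $V_k$ and carry the value $s_k$. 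Since $h|_S$ is linear and equals $s_k$ at every vertex of $S$, it is identically $s_k$ on $S$; taking the union over all simplexes $S$ making up the $N$-patch of $\frac{\n_k}{N}$ gives $h\equiv s_k$ on the whole patch.

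There is no serious obstacle in this argument; the only genuinely load-bearing point is the combinatorial observation of the second paragraph — that disjointness of the $N$-patches forces the enlarged knot sets $V_k$ to be pairwise disjoint — which is exactly what makes the vertex labelling, and hence $h$, well defined. Everything else is bookkeeping together with a direct appeal to Proposition~\ref{prop:linear_interpolation}.
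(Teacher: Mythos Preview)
Your proof is correct and follows essentially the same approach as the paper: assign the value $s_k$ to every $N$-knot in the $N$-patch of $\tfrac{\n_k}{N}$, use the disjointness hypothesis to see that this is consistent, and then invoke \autoref{prop:linear_interpolation}. The paper's proof is a one-sentence version of yours; your additional verification that all vertices of each simplex in the patch receive the value $s_k$ (hence $h\equiv s_k$ there) and the explicit $|V_k|\le c_d$ bound are just spelled-out details of the same argument.
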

\begin{proof}
Follows directly from \autoref{prop:linear_interpolation}. We assign value $s_k$ to all $N$-knots in $N$-patch associated with $\frac{\n_k}{N}$ and apply \autoref{prop:linear_interpolation}. Since $N$-patches of interest are disjoint, each $N$-knot has at most one assigned value.
\end{proof}

\subsection{The filtering subgrids}\label{ss:overviewandpartitioning}

Given the total number of weights $W$, we set $N = W^{1/d}$. We will assume without loss of generality that $N$ is integer. We consider triangulation $\mathcal{P}_N$ of $[0,1]^d$ on length scale $\tfrac{1}{N}$.

It is convenient to split the $N$-grid into $3^d$ disjoint subgrids with the $3\times$ grid spacing:
$$\mathbf  N_{\mathbf q} = \{\tfrac{\n}{N}: \n \in \left(\mathbf q + (3\Z)^d\right) \cap [0, N]^d\},\quad \mathbf q \in \{0, 1, 2\}^d.$$
Clearly, each subgrid contains $O(N^d)$ knots. Note that $N$-patches associated with $N$-knots in $\mathbf  N_{\mathbf q}$ are disjoint. It means, in particular, that any point $\mathbf{x} \in [0,1]^d$ lies in at most one such $N$-patch. It also means that \autoref{prop:constant_interpolation} is applicable to $\mathbf  N_{\mathbf q}$. We will use this observation in \autoref{ss:singlegrid} for constructing an efficient approximation in a neighbourhood of $\mathbf  N_{\mathbf q}$ for a single $\q$. We call the union of these $N$-patches \emph{a domain of $\mathbf  N_{\mathbf q}$}.

We compute the full approximation $\widetilde f$ as a sum
\begin{align}\label{eq:qdecomp}
\ft(\x) &= \sum_{\q \in \{0,1,2\}^d} \wt_{\q}(\x) \ft_{\q}(\x).
\end{align}

Function $\ft_{\q}(\x)$ computes $f(\x)$ with error $O(W^{-p})$ for every $\x$ in the domain of $\mathbf  N_{\mathbf q}$. For $\x$ out of the domain of $\mathbf  N_{\mathbf q}$ it computes some garbage value. We describe $\ft_{\q}(\x)$ in \autoref{ss:singlegrid}. The final approximation $\ft(\x)$ is a weighted sum of $\ft_{\q}(\x)$ with weights $\wt_{\q}(\x)$. We choose such functions $\wt_{\q}(\x)$, that $\wt_{\q}(\x)$ vanishes outside the domain of $\mathbf  N_{\mathbf q}$ and
\begin{align*}
\sum_{\q \in \{0,1,2\}^d} \wt_{\q}(\x) \equiv 1.
\end{align*}
It follows that $\ft(\x)$ is a weighted sum (with weights with the sum 1) of terms approximating $f(\x)$ with error $O(W^{-p})$. Consequently, $\ft(\x)$ approximates $f(x)$ with error $O(W^{-p})$.

Function $\wt_{\q}(\x)$ is given by applying \autoref{prop:linear_interpolation} to $N$-knots from $\mathbf  N_{\mathbf q}$ with all values $\ell_1, \ell_2, \dots, \ell_{|\mathbf  N_{\mathbf q}|}$ equals to 1. Clearly, $\wt_{\q}(\x)$ vanishes outside the domain of $\mathbf  N_{\mathbf q}$. Sum $\sum_{\q \in \{0,1,2\}^d} \wt_{\q}(\x)$ is linear on each simplex from $\mathcal{P}_N$ and equals to 1 at all $N$-knots, because each $N$-knot belongs to exactly one set $\mathbf  N_{\mathbf q}$. Consequently, this sum equals to 1 for every $\x \in [0,1]^d$. It follows from \autoref{prop:linear_interpolation} that network implementing $\wt_{\q}(\x)$ has $O(N^d) = O(W)$ weights and $O(1)$ layers.

Multiplication $\wt_{\q}(\x) \ft_{\q}(\x)$ is implemented approximately, with error $O(W^{-p})$, by network given by \cite[Proposition~3]{yarsawtooth} and requires $O(\log W)$ additional weights.

\subsection{The approximation for a subgrid}\label{ss:singlegrid}

Here we describe how we construct $\ft_{\q}(\x)$ for a single $\q \in \{0, 1, 2\}^d$. Remind that $\ft_{\q}(\x)$ computes accurate approximation for $f(\x)$ only on the domain of $\mathbf  N_{\mathbf q}$.

For any $N$-knot $\frac{\n}{N}$ in $\mathbf  N_{\mathbf q}$ we consider a cube with center at $\frac{\n}{N}$ and edge $\frac{2}{N}$:
\begin{align*}
\left\{\x \in \R^{d} : \max_{1 \leq i \leq d} \left|\x_{i} - \frac{\n_i}{N} \right| \leq \frac{1}{N} \right\}.
\end{align*}
We call such cube \emph{an $N$-cube} and denote it by $\mathbf C_{\n}$. Note that $\mathbf C_{\n} = \tfrac{\n}{N} + \mathbf C_{\mathbf{0}}$.

Remind that the domain of $\mathbf  N_{\mathbf q}$ consists of $|\mathbf  N_{\mathbf q}|$ disjoint $N$-patches associated with $N$-knots from $\mathbf  N_{\mathbf q}$. Each $\x$ from the domain of $\mathbf  N_{\mathbf q}$ belongs to exactly one such $N$-patch. We call this patch \emph{an $N$-patch for $\x$} and associated $N$-knot \emph{an $N$-knot for $\x$}. Let us denote an $N$-knot for $\x$ by $\tfrac{\n_{\q}(\x)}{N}$.

We set $M = W^{p/r}$. Note that $M^{-r} = W^{-p}$ and, therefore, we need to construct an approximation of error $O(M^{-r})$. We will assume without loss of generality that $M$ is integer and $M$ is divisible by $N$. Then $\mathcal{P}_M$ is a subpartition of $\mathcal{P}_N$. We define $M$-knot and $M$-patch similarly to $N$-knot and $N$-patch. We denote a set of all $M$-knots by $\mathbf{K}_{M}$. Note that there are $O\left((M / N)^d\right)$ $M$-knots in each $N$-patch and $N$-cube. See Fig.\ref{fig:mesh} for an illustration of all described constructions.

\begin{figure}
    \centering
    \includegraphics[scale = 0.45, clip, trim=0mm 0mm 40mm 0mm ]{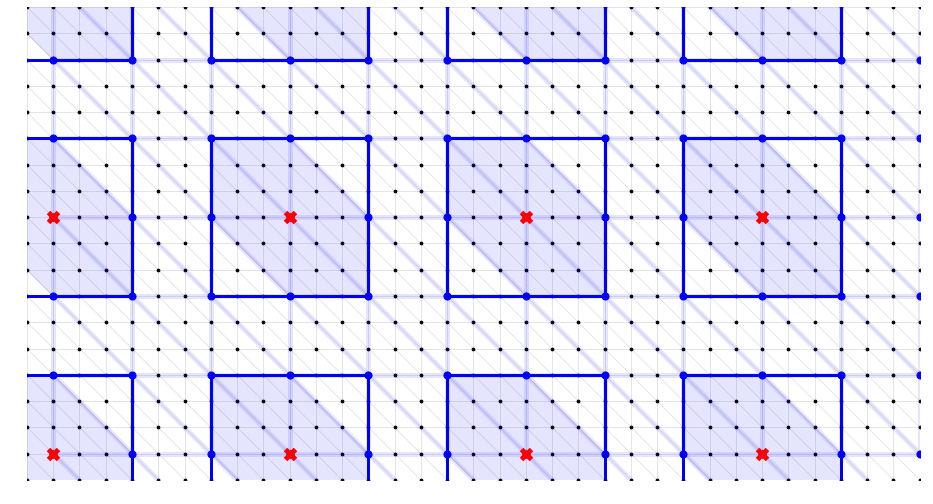}
    \caption{The partitions $\mathcal{P}_{N}$ and $\mathcal{P}_{M}$ for $d = 2$ and $\tfrac{M}{N} = 3$. The small black dots are the $M$-knots, and the thin black edges show the triangulation $\mathcal{P}_{M}$. The large blue dots are the $N$-knots; the light blue edges show the triangulation $\mathcal{P}_{N}$. The red crosses show the points of the subgrid $\mathbf{N}_{\q}$. The filled blue region is the domain of $\mathbf{N}_{\q}$. The bold blue squares show the $N$-cubes $\mathbf{C}_{\n}$ for the points of $\mathbf{N}_{\q}$.}
    \label{fig:mesh}
\end{figure}

Suppose that $\x$ lies in an $M$-patch associated with an $M$-knot $\tfrac{\m}{M}$. Consider a Taylor polynomial $P_{\m / M}(\x)$ at $\tfrac{\m}{M}$ of order $\lceil r \rceil - 1$. Standard bounds for the remainder of Taylor polynomial imply that it approximates $f(\x)$ with error $O(M^{-r})$ uniformly for $f \in F_{r,d}$. Taylor polynomial at $\tfrac{\m}{M}$ (and actually any polynomial) can be implemented with error $O(M^{-r})$ by a network with $O(\log M)$ weights and layers. We refer reader to \cite[Proposition~3]{yarsawtooth} and a proof of \cite[Theorem~1]{yarsawtooth} for details.

We can approximate $f(\x)$ with error $O(M^{-r})$ with a weighted sum of Taylor polynomials $P_{\m / M}(\x)$ at all $M$-knots:
\begin{align}\label{eq:fulltaylor}
\ft(\x) = \sum_{\tfrac{\m}{M} \in \mathbf{K}_{M}} \phi\left(M \x - \m\right) P_{\m / M}(\x).
\end{align}
Note that $\phi\left(M \x - \m\right)$ vanishes outside an $M$-patch associated with $\tfrac{\m}{M}$ and 
\begin{align*}
\sum_{\tfrac{\m}{M} \in \mathbf{K}_{M}} \phi\left(M \x - \m\right) \equiv 1.
\end{align*}

There are $M^d$ terms in \eqref{eq:fulltaylor} and calculating single term requires $O(\log M)$ weights. So, the total number of weights needed to implement \eqref{eq:fulltaylor} is $O(M^{d} \log M) = O(W^{pd / r} \log W)$. It is clearly infeasible for $p > \frac{r}{d}$. For $p = \frac{r}{d}$ it leads to approximation error $O(W^{-r/d} \log^{r/d} W)$ and makes a statement of \cite[Theorem~1]{yarsawtooth}. Note that in this construction Taylor coefficients at $M$-knots are the weights of network.

Note that terms of \eqref{eq:fulltaylor} are nonzero only for $M$-knots in an $N$-cube for $\x$. Suppose that $\x$ lies in the domain of $\mathbf{N}_{\q}$ and, therefore, has well defined $N$-knot $\tfrac{\n_{\q}(\x)}{N}$. For such $\x$ we can write
\begin{align}\label{eq:parttaylor}
\begin{split}
\ft_{\q}(\x) &= \sum_{\tfrac{\m}{M} \in \mathbf{K}_{M} \cap \mathbf{C}_{\n_{\q}(x)}} \phi\left(M \x - \m\right) P_{\m / M}(\x) \\
&= \sum_{\tfrac{\m}{M} \in \mathbf{K}_{M} \cap \mathbf{C}_{\mathbf{0}}} \phi\left(M \left(\x - \tfrac{\n_{\q}(x)}{N}\right) - \m\right) P_{\m / M + \n_{\q}(\x) / N}(\x)
\end{split}
\end{align}
There are only $(M / N)^d = W^{pd/r - 1}$ terms in \eqref{eq:parttaylor}. Therefore, if we know $\tfrac{\n_{\q}(\x)}{N}$ and Taylor coefficients for $\tfrac{\m}{M} + \tfrac{\n_{\q}(\x)}{N}$, then $\ft_{\q}(\x)$ can be implemented with error $O(M^{-r})$ by a network with $O\left((M / N)^d \log M\right) = O(W^{pd/r - 1} \log W)$ weights.

For $\x$ in the domain of $\mathbf{N}_{\q}$ it holds that $\ft_{\q}(\x) = \ft(\x)$. It follows that $\ft_{\q}(\x)$ indeed approximates $f(\x)$ with error $O(M^{-r}) = O(W^{-p})$ on the domain of $\mathbf{N}_{\q}$.

If $\x$ lies in the domain of $\mathbf{N}_{\q}$, then we can compute a single coordinate of $\tfrac{\n_{\q}(x)}{N}$ with a network given by \autoref{prop:constant_interpolation}. We need to take $\frac{\n_i}{N} \in \mathbf{N}_{\q}$ and set $s_i$ to be a corresponding coordinate of $\n_i$. We compute $\tfrac{\n_{\q}(x)}{N}$ by applying this observation to all coordinates. Constructed network has $O(|\mathbf  N_{\mathbf q}|) = O(N^d) = O(W)$ weights and $O(1)$ layers.

In \autoref{ss:coefficientscoding} we show, that (approximated) Taylor coefficients for $(M / N)^d$ $M$-knots $\tfrac{\m}{M} + \tfrac{\n}{N}$, $\tfrac{\m}{M} \in \mathbf{K}_{M} \cap \mathbf{C}_{\mathbf{0}}$ can be computed by a network with $O\left((M / N)^d\right)$ weights and layers from $c_{r, d} \leq 2(d + 1)^{\lceil r \rceil - 1}$ $\n$-dependent values. We call this values \emph{encoding weights for $\n$}.

In \autoref{ss:coefficientscoding} we describe how we construct encoding weights for a particular function $f$ and an $N$-knot $\tfrac{\n}{N}$. We show that using approximated Taylor coefficients computed from encoding weights instead of real ones leads to error bounded by $O(M^{-r}) = O(W^{-p})$. For $\x$ in the domain of $\mathbf{N}_{\q}$ we can calculate encoding weights for $\n_{\q}(\x)$ by a network given by \autoref{prop:constant_interpolation}.

Let us finalize a structure of network computing $\ft_{\q}(\x)$. For $\x$ in the domain of $\mathbf{N}_{\q}$ it
\begin{enumerate}
\item Computes $\n_{\q}(\mathbf x)$ and encoding weights for $\n_{q}(\mathbf x)$. This step is implemented by applying \autoref{prop:constant_interpolation} and requires $O(N^d) = O(W)$ weights and $O(1)$ layers;
\item Given encoding weights for $\n_{q}(x)$, computes (approximated) Taylor coefficients for all $M$-knots $\tfrac{\m}{M} + \tfrac{\n_{\q}(\x)}{N}$, $\tfrac{\m}{M} \in \mathbf{K}_{M} \cap \mathbf{C}_{\mathbf{0}}$. This step requires a network with $O\left((M / N)^d\right) = O(W^{pd/r - 1})$ weights and layers and described in \autoref{ss:coefficientscoding};
\item Given (approximated) Taylor coefficients achieved at the previous step,  computes an approximation for $P_{\m / M + \n_{\q}(\x) / N}$ for all $M$-knots $\tfrac{\m}{M} + \tfrac{\n_{\q}(\x)}{N}$, $\tfrac{\m}{M} \in \mathbf{K}_{M} \cap \mathbf{C}_{\mathbf{0}}$. The approximation with error $O(M^{-r}) = O(W^{-p})$ for a single $P_{\m / M + \n_{\q}(\x) / N}$ can be implemented by a network with $O(\log M) = O(\log W)$ weights and layers. Total number of weights needed at this step is, therefore, $O\left(|\mathbf{K}_{M} \cap \mathbf{C}_{\mathbf{0}}| \log M\right) = O\left((M / N)^d \log M\right) = O(W^{pd/r - 1} \log W)$. Computation for different $M$-knots can be done in parallel, so the total number of layers is still $O(\log W)$;
\item Given $\n_{\q}(\x)$, computed at first step, computes $\phi\left(M \left(\x - \tfrac{\n_{\q}(x)}{N}\right) - \m\right)$ for all $M$-knots $\tfrac{\m}{M} + \tfrac{\n_{\q}(\x)}{N}$, $\tfrac{\m}{M} \in \mathbf{K}_{M} \cap \mathbf{C}_{\mathbf{0}}$. It requires $O\left(|\mathbf{K}_{M} \cap \mathbf{C}_{\mathbf{0}}|\right) = O\left((M / N)^d\right) = O(W^{pd/r - 1})$ weights and $O(1)$ layers;
\item Combines outputs of steps 3 and 4 in the final approximation with \eqref{eq:parttaylor}. Multiplication with accuracy $O(M^{-r})$ can be implemented by a network with $O(\log M)$ weights and layers, so this step requires $O(|\mathbf{K}_{M} \cap \mathbf{C}_{\mathbf{0}}| \log M) = O\left((M / N)^d \log M\right) = O(W^{pd/r - 1} \log W)$ weights and $O(\log M) = O(\log W)$ layers.
\end{enumerate}

Clearly we can pass forward values achieved at early steps without increasing an asymptotic for needed number of weights and layers. 

If we sum up the total number of weights needed at each step, we obtain $O\left(W + W^{pd/r - 1} \log W\right)$. For $\tfrac{r}{d} < p < \tfrac{2r}{d}$ it is equivalent to $O(W)$ and matches the desired approximation rate. For $p = \tfrac{2r}{d}$ it is equivalent to $O(W \log W)$ and leads to the desired approximation rate up to a logarithmic factor. We show how to deal with it in \autoref{ss:logfactor}.

The total number of needed layers is $O(W^{pd/r - 1})$ and matches the desired.

\subsection{Encoding and decoding Taylor coefficients}\label{ss:coefficientscoding}

It is known that $\sim \epsilon^{-d/r}$ bits are needed to specify a function $f \in F_{r,d}$ with accuracy $\epsilon$ \cite{KolmogorovTikhomirov}. It follows from the bounds for Kolmogorov $\varepsilon$-entropy of $F_{r,d}$ derived in \cite[\S~4]{KolmogorovTikhomirov}. Here we describe how this specification can be implemented by a neural network.

First we introduce some notation. Suppose we have an $M$-knot $\tfrac{\m}{M}$. Taylor expansion $P_{\m / M}(\x)$ of $f(\x)$ at $\tfrac{\m}{M}$ is given by
\begin{align*}
P_{\m / M}(\x) &= \sum_{\k : |\k| \leq \lceil r \rceil - 1} \dfrac{D^{\k}f\left(\frac{\m}{M}\right)}{\k !} \left( \x - \dfrac{\m}{M}\right)^{\k}.
\end{align*}
We use usual convention $\k! = \prod_{i=1}^d k_i$ and $\left( \x - \tfrac{\m}{M}\right)^{\k} = \prod_{i=1}^d \left( x_i - \tfrac{m_i}{M}\right)^{k_i}$. We denote 
\begin{align*}
a_{\m, \k} = D^{\k}f\left(\frac{\m}{M}\right).
\end{align*}

We denote an approximated Taylor coefficients to be defined further in this section by $\a_{\m, \k}$. Corresponding approximated Taylor expansion is given by
\begin{align*}
\widehat{P}_{\m / M}(\x) &= \sum_{\k : |\k| \leq \lceil r \rceil - 1} \dfrac{\a_{\m, \k}}{\k !} \left( \x - \dfrac{\m}{M}\right)^{\k}.
\end{align*}

For any $\x$ in the $M$-patch associated with $\tfrac{\m}{M}$
\begin{align*}
\left| f(\x) - P_{\m / M}(\x) \right| \leq c_{r, d} M^{-r},
\end{align*}
for all $f \in F_{r, d}$ and some constant $c_{r, d}$, which does not depend on $M$ and $\m$.

We first show how we construct encoding weights associated with an $N$-knot $\tfrac{\n}{N}$. Our construction is quite similar to one from the proof of \cite[Theorem~XIV]{KolmogorovTikhomirov}, where bounds for Kolmogorov $\varepsilon$-entropy of $F_{r,d}$ were derived. Then we discuss how approximated Taylor coefficients at $M$-knots in the $N$-cube $\mathbf{C}_{\n}$ are computed from encoding weights by a network.

Our goal is to construct such approximated Taylor coefficients $\a_{\m, \k}$, that for any $\x$ in the $M$-patch associated with $\tfrac{\m}{M}$ holds $|\widehat{P}_{\m / M}(\x) - P_{\m / M}(\x)| \leq c_{r, d} M^{-r}$ for some $M$-independent constant $c_{r, d}$. The following proposition states sufficient condition on such $\a_{\m, \k}$.
\begin{proposition}\label{prop:approxbound}
Suppose that 
\begin{align}\label{eq:sufcond}
|a_{\m, \k} - \a_{\m, \k}| \leq M^{|\k| - r} \quad \forall \, \k: |\k| \leq \lceil r \rceil - 1.
\end{align}
Then for any $\x$ in an $M$-patch associated with $\tfrac{\m}{M}$
\begin{align*}
\left| \widehat{P}_{\m / M}(\x) - P_{\m / M}(\x) \right| \leq (d + 1)^{\lceil r \rceil - 1} M^{-r}.
\end{align*}
\end{proposition}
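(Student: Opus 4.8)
The plan is to bound the difference $\widehat P_{\m/M}(\x)-P_{\m/M}(\x)$ termwise using the hypothesis \eqref{eq:sufcond} together with the fact that $\x$ lies in the $M$-patch associated with $\tfrac{\m}{M}$, which forces $|x_i-\tfrac{m_i}{M}|\le\tfrac{1}{M}$ for each coordinate (the $M$-patch is contained in the $M$-cube of half-edge $\tfrac1M$ about $\tfrac{\m}{M}$), hence $|(\x-\tfrac{\m}{M})^{\k}|\le M^{-|\k|}$. First I would write
\begin{align*}
\widehat P_{\m/M}(\x)-P_{\m/M}(\x)=\sum_{\k:\,|\k|\le\lceil r\rceil-1}\frac{\a_{\m,\k}-a_{\m,\k}}{\k!}\Big(\x-\frac{\m}{M}\Big)^{\k},
\end{align*}
and take absolute values. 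Using $\k!\ge 1$, the monomial bound $|(\x-\tfrac{\m}{M})^{\k}|\le M^{-|\k|}$, and the coefficient bound $|a_{\m,\k}-\a_{\m,\k}|\le M^{|\k|-r}$, each term is bounded by $M^{|\k|-r}\cdot M^{-|\k|}=M^{-r}$, so everything collapses to a single clean power $M^{-r}$ independent of the multi-index.

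The remaining step is purely combinatorial: count the number of multi-indices $\k\in\mathbb Z_{\ge 0}^d$ with $|\k|\le\lceil r\rceil-1$ and show it is at most $(d+1)^{\lceil r\rceil-1}$. I would note that such $\k$ are in bijection with (a subset of) the maps from $\{1,\dots,\lceil r\rceil-1\}$ to $\{0,1,\dots,d\}$ — think of distributing at most $\lceil r\rceil-1$ ``units'' among $d$ coordinates, where each unit is either assigned to one of the $d$ coordinates or discarded — giving at most $(d+1)^{\lceil r\rceil-1}$ choices. Multiplying the per-term bound $M^{-r}$ by this count yields $|\widehat P_{\m/M}(\x)-P_{\m/M}(\x)|\le(d+1)^{\lceil r\rceil-1}M^{-r}$, as claimed.

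This argument has no real obstacle; the only point requiring a moment's care is the geometric claim that membership of $\x$ in the $M$-patch of $\tfrac{\m}{M}$ implies $\|\x-\tfrac{\m}{M}\|_\infty\le\tfrac1M$, which follows because every simplex of $\mathcal P_M$ incident to the knot $\tfrac{\m}{M}$ lies in the cube $\tfrac{\m}{M}+[-\tfrac1M,\tfrac1M]^d$ (indeed in the union of the $2^d$ unit subcubes of side $\tfrac1M$ meeting at that knot). Everything else is a direct estimate, and the constant $(d+1)^{\lceil r\rceil-1}$ is exactly the crude count of admissible multi-indices, matching the bound $c_{r,d}\le 2(d+1)^{\lceil r\rceil-1}$ mentioned in Section \ref{ss:singlegrid}.
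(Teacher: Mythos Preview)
Your argument is correct and follows essentially the same route as the paper: expand the difference termwise, bound each term by $M^{|\k|-r}\cdot M^{-|\k|}=M^{-r}$ using $\k!\ge 1$ and $\|\x-\tfrac{\m}{M}\|_\infty\le\tfrac1M$, then crudely count the multi-indices by $(d+1)^{\lceil r\rceil-1}$. Your added explanations of the geometric containment and the combinatorial count are helpful but do not change the strategy.
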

\begin{proof}
\begin{align*}
\left| \widehat{P}_{\m / M}(\x) - P_{\m / M}(\x) \right| &\leq \sum_{\k : |\k| \leq \lceil r \rceil - 1} \dfrac{1}{\k !} \left|\a_{\m, \k} - a_{\m, \k}\right| \left|\left( \x - \dfrac{\m}{M}\right)^{\k} \right| \\
&\leq \sum_{\k : |\k| \leq \lceil r \rceil - 1} M^{|\k| - r} M^{-|\k|} \\
&\leq (d + 1)^{\lceil r \rceil - 1} M^{-r}.
\end{align*}
\end{proof}

Suppose that two $M$-knots $\tfrac{\m_1}{M}$ and $\tfrac{\m_2}{M}$ are adjacent and we have $\a_{\m_1, \kh}$, $|\kh| \leq \lceil r \rceil - 1$ satisfying \eqref{eq:sufcond}. Another convenient proposition we use further shows how to construct an accurate approximation for Taylor coefficients at $\tfrac{\m_2}{M}$.

\begin{proposition}\label{prop:adjacent}
Suppose that two $M$-knots $\tfrac{\m_1}{M}$ and $\tfrac{\m_2}{M}$ are adjacent. Suppose that approximated Taylor coefficients $\a_{\m_1, \kh}$, $|\kh| \leq \lceil r \rceil - 1$ at $\tfrac{\m_1}{M}$ satisfy \eqref{eq:sufcond}. Then we can find such $c_{\k, \kh}$ and $\at_{m_2, \k}$, $|\k|, |\kh| \leq \lceil r \rceil - 1$, that
\begin{enumerate}
    \item For all $\k: |\k| \leq \lceil r \rceil - 1$
        \begin{align*}
            \at_{m_2, \k} = \sum_{\kh: |\kh| \leq \lceil r \rceil - 1} c_{\k, \kh} \cdot \widehat a_{\m_1, \kh};
        \end{align*}
    \item For all $\k: |\k| \leq \lceil r \rceil - 1$ 
        \begin{align}\label{eq:adjapproxbound}
            |a_{\m_2, \k} - \at_{\m_2, \k}| < 4 M^{|\k| - r};
        \end{align}
    \item Coefficients $c_{\k, \kh}$ depend only on the relative position of $\tfrac{\m_1}{M}$ and $\tfrac{\m_2}{M}$.
\end{enumerate}
\end{proposition}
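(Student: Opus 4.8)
The plan is to take $\widetilde a_{\m_2,\k}$ to be the truncated Taylor expansion about $\tfrac{\m_1}{M}$ of the partial derivative $D^{\k}f$, evaluated at $\tfrac{\m_2}{M}$, but with the exact coefficients $a_{\m_1,\cdot}=D^{\cdot}f(\tfrac{\m_1}{M})$ replaced by the available approximations $\a_{\m_1,\cdot}$. Put $\mathbf v=\m_2-\m_1$; since the two $M$-knots are adjacent, $\mathbf v=\pm e_j$ is a signed coordinate vector (where $e_j$ is the $j$-th standard basis vector). I would set
\[
\widetilde a_{\m_2,\k}:=\sum_{\substack{\kh\ge\k\\ |\kh|\le\lceil r\rceil-1}}\frac{1}{(\kh-\k)!}\,\bigl(\tfrac{\mathbf v}{M}\bigr)^{\kh-\k}\,\a_{\m_1,\kh},
\]
where $\kh\ge\k$ is meant componentwise, so that $c_{\k,\kh}=\tfrac{1}{(\kh-\k)!}(\tfrac{\mathbf v}{M})^{\kh-\k}$ when $\kh\ge\k$ componentwise and $|\kh|\le\lceil r\rceil-1$, and $c_{\k,\kh}=0$ otherwise. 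With this choice statement~1 holds by definition, and statement~3 is immediate, since $c_{\k,\kh}$ depends on $\m_1,\m_2$ only through $\mathbf v=\m_2-\m_1$ (the scale $M$ being fixed globally).

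For statement~2 I would estimate the error by splitting it, via the triangle inequality, into a ``rounding'' part and a ``Taylor remainder'' part,
\[
\bigl|a_{\m_2,\k}-\widetilde a_{\m_2,\k}\bigr|\ \le\ E_{\mathrm{round}}+E_{\mathrm{Tay}},
\]
with $E_{\mathrm{round}}=\sum_{\kh}\tfrac{|(\mathbf v/M)^{\kh-\k}|}{(\kh-\k)!}\,|a_{\m_1,\kh}-\a_{\m_1,\kh}|$ and $E_{\mathrm{Tay}}=\bigl|a_{\m_2,\k}-\sum_{\kh}\tfrac{(\mathbf v/M)^{\kh-\k}}{(\kh-\k)!}a_{\m_1,\kh}\bigr|$. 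For $E_{\mathrm{round}}$, since $\mathbf v=\pm e_j$ only the multi-indices $\kh=\k+m e_j$, $m\ge 0$, contribute, with $|(\mathbf v/M)^{\kh-\k}|=M^{-m}$ and $(\kh-\k)!=m!$; plugging in the hypothesis \eqref{eq:sufcond} gives $E_{\mathrm{round}}\le M^{|\k|-r}\sum_{m\ge 0}\tfrac1{m!}=e\,M^{|\k|-r}<3M^{|\k|-r}$. For $E_{\mathrm{Tay}}$, note it is exactly the remainder of the degree-$(\lceil r\rceil-1-|\k|)$ one-dimensional Taylor expansion at $s=0$, evaluated at $s=\pm1/M$, of the scalar function $s\mapsto D^{\k}f(\tfrac{\m_1}{M}+s e_j)$; its top-order derivative is a partial derivative of $f$ of order $\lceil r\rceil-1$, hence $\alpha$-Hölder with seminorm $\le 1$ because $f\in F_{r,d}$ and $e_j$ is a unit direction. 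The integral form of the one-dimensional Taylor remainder then yields $E_{\mathrm{Tay}}\le\tfrac{1}{(\lceil r\rceil-1-|\k|)!}\,M^{-(r-|\k|)}\le M^{|\k|-r}$, the degenerate case $|\k|=\lceil r\rceil-1$ reducing to $|D^{\k}f(\tfrac{\m_2}{M})-D^{\k}f(\tfrac{\m_1}{M})|\le M^{-\alpha}=M^{|\k|-r}$. Adding the two contributions gives $|a_{\m_2,\k}-\widetilde a_{\m_2,\k}|<4M^{|\k|-r}$, which is \eqref{eq:adjapproxbound}.

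I expect the only genuinely delicate point to be keeping the constant in \eqref{eq:adjapproxbound} an absolute number independent of $r$ and $d$: this is what forces the step $\m_2-\m_1$ to be a single signed coordinate vector, so that the multivariate expansion collapses to a one-dimensional one, the rounding errors sum to at most $e$ rather than $e^{d}$, and the Taylor-remainder factor $1/(\lceil r\rceil-1-|\k|)!$ stays $\le 1$. Everything else is bookkeeping; in particular the ``$c_{\k,\kh}$ depend only on the relative position'' claim is automatic once they are written as rescaled monomials of $\mathbf v/M$. This per-step bound is exactly what will be iterated along the traversal of the $M$-knots of an $N$-cube in Section~\ref{ss:coefficientscoding}, with a re-quantization after each step to recover the sharper bound \eqref{eq:sufcond}.
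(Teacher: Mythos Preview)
Your proposal is correct and follows essentially the same approach as the paper: define $\widetilde a_{\m_2,\k}$ as the one-dimensional Taylor expansion of $D^{\k}f$ along the coordinate direction $\m_2-\m_1$ with the exact coefficients replaced by $\a_{\m_1,\cdot}$, then bound the error by the same split into a Taylor-remainder term ($\le M^{|\k|-r}$) and a rounding term ($<eM^{|\k|-r}<3M^{|\k|-r}$). The paper simply writes this out after the WLOG reduction $\m_2-\m_1=e_1$, whereas you keep the multi-index notation and are a bit more explicit about the H\"older remainder; otherwise the arguments are the same.
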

\begin{proof}
Remind that $M$-knots $\tfrac{\m_1}{M}$ and $\tfrac{\m_2}{M}$ are adjacent. Let us consider first component of $\m_1$ and $\m_2$ independently and assume without loss of generality that $\m_1 = (m_1, \overline{\m})$ and $\m_2 = (m_1 + 1, \overline{\m})$.

Standard bounds for a remainder of Taylor series partial sum imply, that for any $\k = (k_1, \dots, k_d)$ and $f \in F_{r, d}$
\begin{align*}
\left|D^{(k_1, \dots, k_d)}f\left(\dfrac{\m_2}{M}\right) - \sum_{n = 0}^{\lceil r \rceil - 1 - |\k|} \dfrac{D^{(k_1 + n, \dots, k_d)}f\left(\dfrac{\m_1}{M}\right)}{n!} \cdot \dfrac{1}{M^n} \right| \leq M^{|\k| - r}.
\end{align*}
In our notation
\begin{align}\label{eq:adjtaylorbound}
\left|a_{\m_2, (k_1, \dots, k_d)} - \sum_{n = 0}^{\lceil r \rceil - 1 - |\k|} \dfrac{a_{\m_1, (k_1 + n, \dots, k_d)}}{n!} \cdot \dfrac{1}{M^n} \right| \leq M^{|\k| - r}.
\end{align}
From the proposition that coefficients $\a_{\m_1, \k}$ satisfy \eqref{eq:sufcond} it follows that
\begin{align}\label{eq:substbound}
\begin{split}
\left|\sum_{n = 0}^{\lceil r \rceil - 1 - |\k|} \dfrac{\left(a_{\m_1, (k_1 + n, \dots, k_d)} - \a_{\m_1, (k_1 + n, \dots, k_d)}\right)}{n!} \cdot \dfrac{1}{M^n} \right| &\leq \sum_{n = 0}^{\lceil r \rceil - 1 - |\k|} \dfrac{M^{|\k| + n - r}}{n!} \cdot \dfrac{1}{M^n} \\
&= M^{|\k| - r} \sum_{n = 0}^{\lceil r \rceil - 1 - |\k|} \dfrac{1}{n!} \\
&< e M^{|\k| - r} < 3 M^{|\k| - r}.
\end{split}
\end{align}
Combining \eqref{eq:adjtaylorbound} and \eqref{eq:substbound} we obtain
\begin{align*}
\begin{split}
\left|a_{\m_2, (k_1, \dots, k_d)} - \sum_{n = 0}^{\lceil r \rceil - 1 - |\k|} \dfrac{\a_{\m_1, (k_1 + n, \dots, k_d)}}{n!} \cdot \dfrac{1}{M^n} \right| < 4 M^{|\k| - r}.
\end{split}
\end{align*}
It follows that if for each $\k = (k_1, \dots, k_d)$ we set
\begin{align}\label{eq:adjtaylorrepr}
    \at_{\m_2, (k_1, \dots, k_d)} = \sum_{n = 0}^{\lceil r \rceil - 1 - |\k|} \dfrac{\a_{\m_1, (k_1 + n, \dots, k_d)}}{n!} \cdot \dfrac{1}{M^n},
\end{align}
then $\at_{\m_2, \k}$ satisfy \eqref{eq:adjapproxbound}. It remains to note that coefficients in \eqref{eq:adjtaylorrepr} depend only on the relative position of $\tfrac{\m_1}{M}$ and $\tfrac{\m_2}{M}$, but not on $f \in F_{r, d}$, values $\a_{\m_1, \k}$ or $M$-knots $\tfrac{\m_1}{M}$ and $\tfrac{\m_2}{M}$ themselves.
\end{proof}

Now we are ready to describe how we find $\a_{\m, \k}$ for all $M$-knots $\tfrac{\m}{M}$ from a given $N$-cube $\mathbf{C}_{\n}$. We enumerate $M$-knots lying in $\mathbf{C}_{\n}$ with numbers $t = 1, \dots, (2 M / N + 1)^d$ and denote them $\tfrac{\m_{\n, t}}{M}$. We inductively construct $\a_{\m_{\n, t}, \k}$ satisfying \eqref{eq:sufcond} for all $M$-knots $\tfrac{\m_{\n, t}}{M}$. We choose such an enumeration, that two consequent $M$-knots are adjacent. 

We set $\a_{\m_{\n, 1}, \k} = a_{\m_{\n, 1}, \k}$. Such $\a_{\m_{\n, 1}, \k}$ clearly satisfy \eqref{eq:sufcond}. Suppose that we have constructed $\a_{\m_{\n, t}, \k}$ satisfying \eqref{eq:sufcond}. Since $M$-knots $\tfrac{\m_{\n, t}}{M}$ and $\tfrac{\m_{\n, t+1}}{M}$ are adjacent, we can apply \autoref{prop:adjacent} to get $\at_{m_{\n, t+1}, \k}$, $|\k| \leq \lceil r \rceil - 1$ satisfying \eqref{eq:adjapproxbound}. It follows that there exist such integers $B_{\n, \k, t}$, that $|B_{\n, \k, t}| \leq 3$ and
\begin{align*}
\left|a_{\m_{\n, t+1}, \k} - \at_{\m_{\n, t+1}, \k} - M^{|\k| - r} B_{\n, \k, t} \right| \leq M^{|\k| - r}.
\end{align*}
We set
\begin{align}\label{eq:coefformula}
\a_{\m_{\n, t+1}, \k} = \at_{\m_{\n, t+1}, \k} + M^{|\k| - r} B_{\n, \k, t}.
\end{align}
Then coefficients $\a_{\m_{\n, t+1}, \k}$ satisfy \eqref{eq:sufcond} as desired. See Fig.\ref{fig:encoding} for an illustration of algorithm of determining $\a_{\m_{\n, t+1}, \k}$.

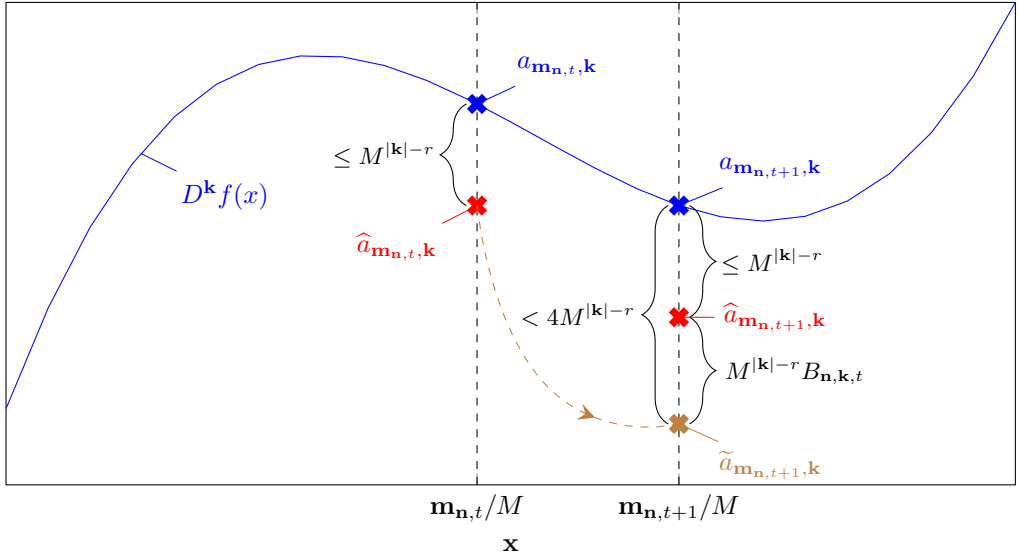
\begin{figure}
\centering
\begin{tikzpicture}
\begin{axis}
[
    xmin=0, xmax=15,
    ymin=-1.5, ymax=8,
    xlabel=$\x$,
    ytick=\empty,
    width=15cm,
    height=8cm,
    xtick={7,10},
    xticklabels={$\m_{\n, t} / M$,$\m_{\n, t+1} / M$},
]

\draw[name path=lower, color=blue, domain={0:15}] plot (\x, {47 * \x^3 / 2100 - 373 * \x^2 / 700 + 733 * \x / 210} ) node[pos=0.8, below] {$f(x) =x$};
\draw [dashed] (7,-1.5) -- (7,8);
\draw [decorate,decoration={brace,amplitude=10pt},xshift=-4pt,yshift=0pt]
(7,4) -- (7,6) node [black,midway,xshift=-1.1cm] {\footnotesize $\leq M^{|\k| - r}$};
\draw (7,4) node[cross=0.1cm,red, line width=0.1cm] {};
\node[coordinate, pin={[pin edge={red}]210:{\color{red}$\a_{\m_{\n, t}, \k}$}}] at (axis cs:7,4){};
\draw (7,6) node[cross=0.1cm,blue, line width=0.1cm] {};
\node[coordinate, pin={[pin edge={blue}]30:{\color{blue}$a_{\m_{\n, t}, \k}$}}] at (axis cs:7,6){};

\draw [dashed] (10,-1.5) -- (10,8);
\draw (10,-0.3) node[cross=0.1cm,brown, line width=0.1cm] {};
\draw [decorate,decoration={brace,amplitude=10pt},xshift=-4pt,yshift=0pt]
(10,-0.3) -- (10,4) node [black,midway,xshift=-1.2cm] {\footnotesize $< 4 M^{|\k| - r}$};
\draw (10,1.8) node[cross=0.1cm,red, line width=0.1cm] {};
\draw [decorate,decoration={brace,amplitude=10pt, mirror},xshift=4pt,yshift=0pt]
(10,-0.3) -- (10,1.8) node [black,midway,xshift=1.4cm] {\footnotesize $M^{|\k| - r} B_{\n, \k, t}$};
\draw [decorate,decoration={brace,amplitude=10pt, mirror},xshift=4pt,yshift=0pt]
(10,1.8) -- (10,4) node [black,midway,xshift=1.1cm] {\footnotesize $\leq M^{|\k| - r}$};
\node[coordinate, pin={[pin edge={red}]0:{\color{red}$\a_{\m_{\n, t+1}, \k}$}}] at (axis cs:10,1.8){};
\draw (10,4) node[cross=0.1cm,blue, line width=0.1cm] {};
\node[coordinate, pin={[pin edge={blue}]30:{\color{blue}$a_{\m_{\n, t+1}, \k}$}}] at (axis cs:10,4){};
\node[coordinate, pin={[pin edge={brown}]330:{\color{brown}$\at_{\m_{\n, t+1}, \k}$}}] at (axis cs:10,-0.3){};

\node[coordinate, pin={[pin edge={blue}]330:{\color{blue}$D^{\k} f(x)$}}] at (axis cs:2,176/35){};
\draw[brown, dashed] plot [smooth,tension=1]
        coordinates {(7,4) (8,0.5) (10,-0.3)}
        [arrow inside={end=stealth,opt={brown,scale=2}}{0.75}];

\end{axis}
\end{tikzpicture}
\caption{An illustration of determining approximated Taylor coefficients at $\tfrac{\m_{\n, t+1}}{M}$ from known approximated Taylor coefficients at $\tfrac{\m_{\n, t}}{M}$. The blue line is $D^{\k} f(x)$ and the blue crosses are its values $a_{\m_{\n, t}, \k}$ and $a_{\m_{\n, t+1}, \k}$ at $M$-knots $\tfrac{\m_{\n, t}}{M}$ and $\tfrac{\m_{\n, t+1}}{M}$ respectively. Red crosses are desired approximations $\a_{\m_{\n, t}, \k}$ and $\a_{\m_{\n, t+1}, \k}$ for $a_{\m_{\n, t}, \k}$ and $a_{\m_{\n, t+1}, \k}$ satisfying \eqref{eq:sufcond}. Given $\a_{\m_{\n, t}, \k}$, we first apply \autoref{prop:adjacent} to get $\at_{\m_{\n, t+1}, \k}$ satisfying \eqref{eq:adjapproxbound}. This step is illustrated by the brown dashed arrow and brown cross is $\at_{\m_{\n, t+1}, \k}$. Then we choose such $B_{\n, \k, t} \in \{-3, \dots, 3\}$, that $\a_{\m_{\n, t+1}, \k} = \at_{\m_{\n, t+1}, \k} + M^{|\k| - r} B_{\n, \k, t}$ satisfy \eqref{eq:sufcond}.}
\label{fig:encoding}
\end{figure}

For a single $\k$ we encode $(2M / N + 1)^d$ values $B_{\n, \k, t}$ by a single base-7 number $b_{\n, \k}$
\begin{align*}
b_{\n, \k} = \sum_{t=1}^{(2M / N + 1)^d} 7^{-t} \left(B_{\n, \k, t} + 3\right)
\end{align*}
Numbers $b_{\n, \k}$ and $\a_{\m_{\n, 1}, \k} = a_{\m_{\n, 1}, \k}$ are encoding weights for $\n$. There are $c_{r, d} \leq 2(d + 1)^{\lceil r \rceil - 1}$ encoding weights.

Now we describe how a network reconstruct all $\a_{\m_{\n, t}, \k}$ from encoding weights. Numbers $B_{\n, \k, t}$ can be reconstructed from $b_{\n, \k}$ by a ReLU network with $O\left((M/N)^d\right)$ weights and layers. We refer to \cite[5.2.2]{yaropt}, where similar reconstruction is described for ternary numbers. Given $\a_{\m_{\n, t}, \k}$ and $B_{\n, \k, t}$, we first compute $\at_{\m_{\n, t+1}, \k}$ with \eqref{eq:adjtaylorrepr} and then we compute $\a_{\m_{\n, t+1}, \k}$ with \eqref{eq:coefformula}. We need $O(1)$ weights and layers at each step, so
the total number of needed weights and layers is $O\left((M/N)^d\right)$.

For given $\x \in [0, 1]^d$ and $\q \in \{0, 1, 2\}^d$ we obtain encoding weights for $\n_{\q}(\x)$ by applying \autoref{prop:constant_interpolation}. Note that \autoref{prop:adjacent} implies that coefficients in \eqref{eq:adjtaylorrepr} depend only on the relative position of $M$-knots $\tfrac{\m_{\n, t}}{M}$ and $\tfrac{\m_{\n, t+1}}{M}$. It follows that if we choose similar enumeration of $M$-knots for all $N$-cubes $\mathbf{C}_{\n}$, $\tfrac{\n}{N} \in \mathbf  N_{\mathbf q}$, then we can use a network described in previous paragraph for all possible values of $\n_{\q}(\x)$.

Note that encoding weights $b_{\n, \k}$ can be represented as $\sim(M/N)^d$-bits numbers while encoding weights $\a_{\m_{\n, 1}, \k} = a_{\m_{\n, 1}, \k}$ can be arbitrary real numbers. Remind that described construction requires $|\a_{\m_{\n, 1}, \k} - a_{\m_{\n, 1}, \k}| \sim M^{|\k| - r}$. It follows that if we want to encode $\a_{\m_{\n, 1}, \k}$ by a finite number of bits as well, then we need $\sim \log M$ additional bits to achieve desired accuracy. 

\subsection{Getting rid of logarithmic factor}\label{ss:logfactor}

Remind that logarithmic factor arises in the construction described in \ref{ss:singlegrid} in case $p = \tfrac{2r}{d}$. This is because we construct $\ft_{\q}(\x)$ in form \eqref{eq:parttaylor} with $O\left(W\right)$ terms and we need $O(\log W)$ weights to implement an approximated Taylor sum arising in each term.

Note that for a particular $\x$ most terms in \eqref{eq:parttaylor} vanishes since $\phi(M (\x - \tfrac{\n_{\q}(x)}{N}) - \m) = 0$ and there is no need to compute $P_{\m / M + \n_{\q}(\x) / N}(\x)$ for such terms. If we perform Taylor sum calculation for only a constant number of non-vanishing terms, then the total number of needed weights reduces to $O(W + \log W)$. We can apply technique used in \ref{ss:overviewandpartitioning} for detecting nonvanishing terms from input $\x$.

We split all $M$-knots lying in $N$-cube $\mathbf{C}_{\mathbf{0}}$ into a disjoint union of $3^{d}$ sets
\begin{align*}
\mathbf{M}_{\mathbf{s}} = \{\tfrac{\m}{M}: \m \in \left(\mathbf s + (3\Z)^d\right) \cap \mathbf{C}_{\mathbf{0}}\},\quad \mathbf{s} \in \{0, 1, 2\}^d.
\end{align*}
$M$-patches associated with $M$-knots in $\mathbf{M}_{\mathbf{s}}$ are disjoint. We call their union the domain of $\mathbf{M}_{\mathbf{s}}$. If $\x - \tfrac{\n_{\q}(\x)}{N}$ lies in the domain of $\mathbf{M}_{\mathbf{s}}$, there is exactly one such $\tfrac{\m_{\q, \mathbf{s}}(\x)}{M}$, that $\x - \tfrac{\n_{\q}(\x)}{N}$ lies in the $M$-patch associated with $\tfrac{\m_{\q, \mathbf{s}}}{M}$. 
We can rewrite \eqref{eq:parttaylor} as
\begin{align}\label{eq:nologform}
\ft_{\q}(\x) &= \sum_{\mathbf{s} \in \{0, 1, 2\}^d} \left[\ft_{\q, \mathbf{s}}(\x) \sum_{\tfrac{\m}{M} \in \tfrac{\n_{\q}(\x)}{N} + \mathbf{M}_{\mathbf{s}}} \phi\left(M \left(\x - \tfrac{\n_{\q}(x)}{N}\right) - \m\right) \right].
\end{align}
Here $\ft_{\q, \mathbf{s}}(\x)$ is a function, which calculates $P_{\m_{\q, \mathbf{s}}(\x) / M + \n_{\q} / N}(\x)$ if $\x - \tfrac{\n_{\q}(\x)}{N}$ lies in the domain of $\mathbf{M}_{\mathbf{s}}$, and some garbage value otherwise. We also require that $\ft_{\q, \mathbf{s}}(\x)$ computes an approximation for a Taylor series partial sum only once. The total number of partial sums computed by network implementing $\ft_{\q}(\x)$ in form \eqref{eq:nologform} is therefore reduced to $3^{d}$. The total number of weights needed to implement $\ft_{\q}(\x)$ reduces from $O(W \log W)$ to $O(W)$.

To compute such $\ft_{\q, \mathbf{s}}(\x)$ we only need to determine approximated Taylor coefficients for $\tfrac{\m_{\q, \mathbf{s}}(\x)}{M} + \tfrac{\n_{\q}(\x)}{N}$ among all coefficients. For each $\tfrac{\m}{M} \in \mathbf{M}_{\mathbf{s}}$ we construct function $\widehat{w}_{\mathbf{s}, \m}(\x)$, which equals to 1 in the $M$-patch associated with $\tfrac{\m}{M}$ and vanishes in other patches of the domain of $\mathbf{M}_{\mathbf{s}}$. Knowing values $\widehat{w}_{\mathbf{s}, \m}(\x - \tfrac{\n_{\q}(\x)}{N})$ we clearly can get Taylor coefficients for $\tfrac{\m_{\q, \mathbf{s}}(\x)}{M} + \tfrac{\n_{\q}(\x)}{N}$ from all Taylor coefficients computed by network. 

\begin{remark}\label{rm:nolog}
Similar reasoning can be applied to the case $p = \tfrac{r}{d}$. In this case we do not consider an $M$-grid at all, but we still can split $N$-grid into $3^d$ disjoint sets and compute approximated Taylor sum once for each set. In this case weight assignment map is continuous and even linear on $f$.
\end{remark}

\section{Theorem \ref{th:constwidth}: proof details}\label{sec:proofconstwidth}
We follow the network construction used in the proof of Theorem \ref{th:deepphase} and described in Subsections \ref{ss:overviewandpartitioning},\ref{ss:singlegrid}. We want to show that this construction can be realized within a ReLU network of width $2d+10.$ As explained in Section \ref{sec:adaptivity}, we slightly modify the construction, so that we don't update the Taylor coefficients at new $M$-patches, but rather compute them afresh. This will give a slight increase in the size of the network. Accordingly, we define parameters $N,M$ in terms of the required accuracy $\epsilon$ rather than the number of weights: specifically, we set $M=\epsilon^{-1/r}$ and $N=\epsilon^{-1/(2r)}.$

Following \cite{yaropt}, we think of the width-$(2d+10)$ network as $2d+10$ ``channels'' that are interconnected and can exchange information. We reserve $d$ channels for passing forward the scalar components of the input vector $\mathbf x$ and one channel for accumulating the approximation $\widetilde{f}(\mathbf x)$. The other channels are used for intermediate computations.

The first step in computing the approximation $\widetilde f(\mathbf x)$ is the finite decomposition \ref{eq:qdecomp} of $\widetilde f$ over $\mathbf q$-subgrids.  The decomposition can be implemented in the width-$(2d+10)$ network in the serial fashion, so we only need to consider computation of a single term $\wt_{\q}(\x) \ft_{\q}(\x)$. 

The weight $\wt_{\q}(\x)$ is just a linear combination of $O(N^d)$ functions $\phi(N\mathbf x-\mathbf n),$ and $\phi$ can be computed by a constant-size chain of linear and ReLU operations (see \cite[Section~4.2]{yaropt}). Thus, $\wt_{\q}(\x)$ can be computed by a subnetwork using just 2 channels and depth $O(\epsilon^{-d/(2r)}).$ On the other hand, we will show below that $\ft_{\q}(\x)$ can be computed by a subnetwork using $d+8$ channels and depth $O(\epsilon^{-d/(2r)}\log(1/\epsilon)).$ We can then pass the values $\wt_{\q}(\x)$ and $\ft_{\q}(\x)$ to the third subnetwork computing an $O(\epsilon)$-approximation to the product $\wt_{\q}(\x) \ft_{\q}(\x)$. This approximate product can be computed by a width-4 subnetwork of depth $O(\log(1/\epsilon))$ (see \cite[Proposition~3]{yarsawtooth}). Thus the total computation of the term $\wt_{\q}(\x) \ft_{\q}(\x)$, and hence of the whole approximation $\widetilde f(\mathbf x)$ can be done with necessary accuracy $\epsilon$ within the width-$(2d+10)$ network of depth $L=O(\epsilon^{-d/(2r)}\log(1/\epsilon)).$ By inverting this relation, we get $\epsilon = O(L^{-2r/d}\log^{2r/d}L),$ as desired.

We return now to the computation of $\ft_{\q}(\x)$. It is based on the expansion \eqref{eq:parttaylor} and can be performed as described later in that subsection. We examine now indivudual steps and how they can be implemented in our fixed-depth network.
\begin{enumerate}
    \item The $N$-knot positions $\mathbf n_{\mathbf q}(\mathbf x)$ associated with $\mathbf x$ are computed using a linear combination of $O((M/N)^d)$ functions of the form $\phi(N\mathbf x-\mathbf n_k)$. This computation can be performed in a subnetwork of width 2 and depth $O(\epsilon^{-d/2r})$. We reserve $d$ channels to pass forward the scalar components of $\mathbf n_{\mathbf q}(\mathbf x)$. Additionally, we reserve one channel for passing forward the encoding weight corresponding to this $\mathbf n_{\mathbf q}(\mathbf x)$. The encoding weight gets transformed as it passes along the network and bits get decoded from it. Additional 3 channels are sufficient for bit decoding (see \cite{yaropt} for a description of the decoding procedure). 
    \item We traverse the $O((M/N)^d)$ $M$-knots of the $N$-patch corresponding to $\mathbf n_{\mathbf q}$ and decode from the encoding weight the Taylor coefficients of degree up $\lceil r\rceil-1$ at these knots. It is sufficient to know these coefficients with precision $O(\epsilon^r),$ so each Taylor coefficient can be encoded by $K_{\max}=O(\log(1/\epsilon))$ bits $\{b_k\}_{k=0}^{K_{\max}}$, and reconstructed by accumulating the linear combination $\sum_{k=0}^{K_{\max}} 2^{-k}b_k$. Thus, the total required number of bits in the encoding weight is $O(\epsilon^{-d/(2r)})\log(1/\epsilon)$. Also, all the necessary coefficients can be reconstructed using $O(\epsilon^{-d/(2r)})\log(1/\epsilon)$ layers of width 4.
    \item At each $M$-knot $\mathbf m/M+\mathbf n_{\mathbf q}(\mathbf x)/N$ in the $N$-patch, we compute the respective Taylor polynomial $P_{\mathbf m/M+\mathbf n_{\mathbf q}(\mathbf x)/N}(\mathbf x)=\sum_{\mathbf k:|\mathbf k|\le \lceil r\rceil-1}a_{\mathbf k}(\mathbf x-(\mathbf m/M+\mathbf n_{\mathbf q}(\mathbf x)/N))^{\mathbf k}$. The values of $\mathbf x$ and $\mathbf n_{\mathbf q}(\mathbf x)$ are provided from the reserved channels, and $\mathbf m$ is defined in the network weights. We don't need to know all the coefficients at once, since the polynomial can be computed serially, one monomial after another, and one multiplication after another. To ensure accuracy $\epsilon$, each multiplication requires depth $O(\log(1/\epsilon))$ and width $4$. The total polynomial can then be accumulated using a subnetwork of depth $O(\log(1/\epsilon))$ and width $5$. 
    \item Computation of the values $\phi\big(M (\x - \tfrac{\n_{\q}(x)}{N}) - \m\big)$ can be performed in 2 channels using $O(\epsilon^{-d/(2r)})$ layers in total.
    \item Once the factors are computed, each product  $\phi\big(M (\x - \tfrac{\n_{\q}(x)}{N}) - \m\big) P_{\m / M + \n_{\q}(\x) / N}(\x)$ can be computed with accuracy $O(\epsilon)$ in a subnetwork of width $4$ and with $O(\log(1/\epsilon))$ layers, which gives $O(\epsilon^{-d/(2r)}\log(1/\epsilon))$ layers in total.
\end{enumerate}
Summarizing, we see that the computation of $\widetilde f_{\mathbf q}(\mathbf x)$ can be implemented with accuracy $O(\epsilon)$ in a subnetwork occupying $d+8$ channels and spanning $O(\epsilon^{-d/(2r)}\log(1/\epsilon))$ layers, as claimed. 

\section{Theorem \ref{th:deepsecond}: proof}\label{sec:deepsecondproof}
We generally follow the proof of Theorem \ref{th:deepphase} given in Sections 
\ref{sec:phasediag}
and \ref{sec:proofdeepphase}, and adapt it to the new setting. We start by reducing the approximation by $\sigma$-networks to deep polynomial approximations. We show that the ReLU activation function can be efficiently approximated by iterated polynomials, which allows us to reproduce some parts of the proof of Theorem \ref{th:deepphase} simply by approximating the ReLU. However, other parts, in particular the decoder subnetwork and the selection of the encoding weight, will require more significant changes. 

\paragraph{Step 1: Reduction to polynomial approximation.}
 
\begin{lemma}
Suppose that the activation function $\sigma$ has a point $x_0$ where the second derivative $\tfrac{d^2\sigma}{dx^2}(x_0)$ exists and is nonzero. Then, for any multivariate polynomial $u$, there exists a network architecture such that the polynomial $u$ can be approximated with any accuracy on any bounded set by a $\sigma$-network with this architecture by suitably assigning the weights.    
\end{lemma}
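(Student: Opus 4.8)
The plan is to realize the polynomial $u$ as a finite composition of affine maps and an \emph{approximate squaring} gate, the latter extracted from the local second-order behaviour of $\sigma$ at $x_0$.

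First I would establish the key building block: a fixed, accuracy-independent $\sigma$-architecture that approximates $x\mapsto x^2$ on any bounded interval to any prescribed accuracy. Since $\sigma''(x_0)$ exists, Taylor's theorem with the Peano remainder gives $\sigma(x_0+t)=\sigma(x_0)+\sigma'(x_0)t+\tfrac12\sigma''(x_0)t^2+R(t)$ with $R(t)=o(t^2)$ as $t\to0$. For $h>0$ consider $S_h(x)=\dfrac{\sigma(x_0+hx)-2\sigma(x_0)+\sigma(x_0-hx)}{\sigma''(x_0)\,h^2}$, which is computed by a fixed-size $\sigma$-network (two hidden units plus the output bias), only the weights — through $h$ — depending on the target accuracy. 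A direct substitution cancels the constant and linear terms and leaves $S_h(x)=x^2+\dfrac{R(hx)+R(-hx)}{\sigma''(x_0)\,h^2}$; for $|x|\le A$ the remainder term is at most $\dfrac{2}{|\sigma''(x_0)|}\cdot\dfrac{\sup_{|t|\le hA}|R(t)|}{h^2}$, which tends to $0$ as $h\to0$ because $\sup_{|t|\le hA}|R(t)|=o(h^2)$ for fixed $A$. Hence $S_h\to x^2$ uniformly on $[-A,A]$.

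Next I would pass from squaring to multiplication via the polarization identity $ab=\tfrac14\big((a+b)^2-(a-b)^2\big)$: composing the square-approximation with the affine maps $(a,b)\mapsto a\pm b$ yields a fixed-architecture $\sigma$-subnetwork $M(a,b)$ approximating $(a,b)\mapsto ab$ to any accuracy, uniformly on bounded subsets of $\R^2$. Then I would write $u$ as a finite computation graph $\Gamma$ — of size depending only on $u$ and $d$ — with affine gates and binary-multiplication gates (e.g. each monomial computed as an iterated product, then summed with its coefficient), and replace every multiplication gate by a copy of $M$. For a bounded input set $B$, all exact intermediate values of $\Gamma$ stay in some bounded set; since every node of $\Gamma$ realizes a polynomial, hence a map Lipschitz on bounded sets, an $\eta$-error entering a multiplication gate fed to an $\eta$-accurate $M$ produces an $O(\eta)$-error at its output, with constants controlled by $\Gamma$ and $B$. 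As $\Gamma$ has finitely many gates, choosing the common accuracy parameter of the $M$-gates small enough bounds the total error on $B$ by $\varepsilon$, which is the claimed approximation.

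The hard part will be the first step: the hypothesis supplies only the \emph{pointwise} existence of $\sigma''(x_0)$, not $C^2$ regularity near $x_0$, so the Lagrange remainder is unavailable and one must work with the Peano form and verify that the resulting bound is uniform over bounded intervals of $x$ (which it is, as noted). Once the squaring gate is available, the passage through multiplication and the error accounting over the fixed finite graph $\Gamma$ are routine.
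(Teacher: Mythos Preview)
Your proposal is correct and follows essentially the same approach as the paper: the same symmetric second-difference network $(\sigma''(x_0))^{-1}h^{-2}\big(\sigma(x_0+hx)+\sigma(x_0-hx)-2\sigma(x_0)\big)$ to approximate $x^2$, the same polarization identity to get multiplication, and then composition with linear operations to build an arbitrary polynomial. Your treatment is in fact more careful than the paper's one-line sketch, since you explicitly address the Peano-remainder issue forced by the hypothesis of mere pointwise existence of $\sigma''(x_0)$ and spell out the error propagation through the fixed computation graph.
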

\begin{proof}
For $u(x)=x^2$, the desired $\sigma$-network is $$\widetilde u_\delta(x)=(\tfrac{d^2\sigma}{dx^2}(x_0))^{-1}\tfrac{1}{\delta^2}\big(\sigma(x_0+x\delta)+\sigma(x_0-x\delta)-2\sigma(x_0)\big)$$ with a small $\delta$. For any other polynomial, the network can be constructed by using $\widetilde u_\delta$, the polarization identity $xy=\tfrac{1}{2}((x+y)^2-(x-y)^2),$ and linear operations.  
\end{proof}
In view of this lemma, in the sequel we will treat $\sigma$-networks as if capable of exactly implementing any polynomial using some finite architecture. Also, we note that under our assumption on the activation functions and in contrast to ReLU-networks, multiplications can be implemented with any accuracy by fixed-size subnetworks.

\paragraph{Step 2: Fast polynomial approximation of thresholds and ReLU.}
Consider the polynomial $u(x)=\tfrac{1}{2}x(3-x^2)$, which in particular has the following properties:
\begin{enumerate}
\item $u(0)=0$ and $u(\pm 1)=\pm 1$;
\item $u$ is monotone increasing on $[-1,1]$;
\item $\tfrac{du}{dx}(\pm 1)=0$.
\end{enumerate}
Let $u_n$ be the $n$'th iterate of $u$:
\begin{equation}\label{eq:un}u_n=\underbrace{u\circ\ldots \circ u}_{n}.
\end{equation}
\begin{lemma}\label{th:un}\hfill 
\begin{enumerate}
\item $|u_n(x)-\operatorname{sgn}(x)|\le |\operatorname{sgn}(x)-x|^{2^{n/2}}$ for any $x\in[-1,1].$
\item $|xu_n(x)-|x||\le 2^{-n/2}$ for any $x\in[-1,1].$
\end{enumerate}
\end{lemma}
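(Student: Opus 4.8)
The plan is to exploit that $\pm1$ are superattracting fixed points of $u$ (since $u(\pm1)=\pm1$ and $\tfrac{du}{dx}(\pm1)=0$) together with the oddness of $u$, reducing both claims to one real‑variable estimate. First I would record the factorizations, obtained by dividing the cubics $1\mp u(x)$ by $(1\mp x)^2$:
\[
1-u(x)=\tfrac12(1-x)^2(2+x),\qquad 1+u(x)=\tfrac12(1+x)^2(2-x).
\]
Together with the monotonicity of $u$ these give $u([0,1])=[0,1]$ and $u([-1,1])=[-1,1]$. Since $u$ is odd it suffices to prove statement~1 for $x\in(0,1]$ (the case $x=0$ is trivial, $u_n(0)=0$); there $\operatorname{sgn}(x)=1$ and $u_n(x)\in[0,1]$, so $|u_n(x)-\operatorname{sgn}(x)|=1-u_n(x)$ and the goal becomes $1-u_n(x)\le(1-x)^{2^{n/2}}$.

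The core step is the single‑iterate contraction bound: for every $x\in[0,1]$,
\[
1-u(x)\le(1-x)^{\sqrt2}.
\]
By the factorization this is equivalent to $\psi(x):=\tfrac12(1-x)^{2-\sqrt2}(2+x)\le1$ on $[0,1]$. Since $\psi(0)=1$, it is enough to show $\psi$ is non‑increasing: the logarithmic derivative $\psi'(x)/\psi(x)=-(2-\sqrt2)/(1-x)+1/(2+x)$ has numerator $(2\sqrt2-3)-(3-\sqrt2)x$, which is negative for all $x\in[0,1]$ because $2\sqrt2<3$. Now set $\varepsilon_k:=1-u_k(x)\in[0,1]$; applying the contraction bound at the point $u_k(x)\in[0,1]$ yields $\varepsilon_{k+1}\le\varepsilon_k^{\sqrt2}$, hence $\varepsilon_n\le\varepsilon_0^{(\sqrt2)^n}=(1-x)^{2^{n/2}}$, which is statement~1.

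For statement~2 I would write $|x|=x\operatorname{sgn}(x)$, so that $xu_n(x)-|x|=x\bigl(u_n(x)-\operatorname{sgn}(x)\bigr)$ and, by statement~1,
\[
\bigl|xu_n(x)-|x|\bigr|=|x|\,\bigl|u_n(x)-\operatorname{sgn}(x)\bigr|\le|x|\,|\operatorname{sgn}(x)-x|^{2^{n/2}}.
\]
On $x\in[0,1]$ the right‑hand side equals $x(1-x)^m$ with $m=2^{n/2}\ge1$, and the substitution $x\mapsto-x$ reduces the case $x\in[-1,0]$ to the same expression; it then remains only to bound $\max_{t\in[0,1]}t(1-t)^m$, which is attained at $t=1/(m+1)$ with value $m^m/(m+1)^{m+1}<1/(m+1)<1/m=2^{-n/2}$.

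The genuinely routine parts are the two factorization identities, the sign check for the numerator of $\psi'$, and the elementary maximization of $t(1-t)^m$. The one place that deserves care is the choice of exponent $\sqrt2$ in the single‑step bound: the naive estimate $1-u(x)\le\tfrac32(1-x)^2$ does square the error but loses to the constant $\tfrac32$ when $x$ is near $0$, whereas the sharp inequality $1-u(x)\le(1-x)^{\sqrt2}$ iterates with no accumulating constant and gives the clean double‑exponential rate $2^{n/2}$; establishing that sharp bound is really the crux of the lemma.
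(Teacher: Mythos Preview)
Your proof is correct and follows essentially the same route as the paper's. The paper makes the substitution $y=1-x$, obtains $\widetilde u(y)=\tfrac12 y^2(3-y)$ (equivalent to your factorization $1-u(x)=\tfrac12(1-x)^2(2+x)$), asserts the key bound $\widetilde u(y)\le y^{\sqrt2}$ as ``easy to check,'' and iterates using monotonicity; your version is simply more explicit in verifying that key inequality via the logarithmic derivative of $\psi$ and in carrying out the maximization of $t(1-t)^m$ for statement~2.
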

\begin{proof}
1. Make the change of variables $x=v(y)=1-y$ and let $\widetilde u=v^{-1}\circ u\circ v$. Then $\widetilde u(y)=\tfrac{1}{2}y^2(3-y).$ It is easy to check that $\widetilde u(y)\le y^{\sqrt{2}}$ for any $y\in[0,1].$ Since both $\widetilde u(y)$ and $y^{\sqrt{2}}$ are monotone increasing on $[0,1],$ there is a similar inequality for their $n$'th iterates: $\widetilde u_n(y)\le y^{2^{n/2}}.$ This gives the desired bound for $x\in[0,1]$. The bound for $x\in[-1,0]$ follows by symmetry. 

2. By Statement 1, $|xu_n(x)-|x||\le |x||\operatorname{sgn}(x)-x|^{2^{n/2}}\le 2^{-n/2}$ for $x\in[-1,1].$
\end{proof}
The lemma implies, in particular, that a size-$O(n)$ $\sigma$-network can provide an approximation of accuracy $2^{-n/2}$ for the functions $|x|$ and $x_+$ on the segment $[-1,1]$ . The ReLU $x_+$ is approximated by $\tfrac{1}{2}(xu_n(x)+x)$.

\paragraph{Step 3: Reduction to $\ft_{\q}$.} In the original proof of faster rates for ReLU given in Section \ref{sec:proofdeepphase}, the first step was to represent the approximation $\widetilde f$ by a finite expansion \eqref{eq:qdecomp} over $3^d$ subgrids indexed by $\mathbf q \in \{0,1,2\}^d$: 
\begin{align*}
\ft(\x) &= \sum_{\q \in \{0,1,2\}^d} \wt_{\q}(\x) \ft_{\q}(\x).
\end{align*}
In the original proof, the ``filtering functions'' $\wt_{\q}$ were linear combinations of $O(N^d)$ shifted and rescaled piecewise linear ``spike'' functions $\phi$ (see Prop. \ref{prop:linear_interpolation}). The function $\phi$ can be constructed using several linear and ReLU operations (see \cite[Section~4.2]{yaropt}).

We observe now that, using Lemma \ref{th:un}, we can very efficiently approximate the spike functions $\phi$ and then the full filtering functions $\wt_{\q}$ by polynomials, simply by approximating each ReLU by the polynomial $\tfrac{1}{2}(xu_n(x)+x)$. Indeed, such an approximation of $\phi$ has accuracy $O(2^{-n/2})$ for a size-$O(n)$ $\sigma$-network (propagation of the error in the computation can be controlled in the standard way, using the Lipshitz continuity of ReLU). We need to remember, however, that the approximation in Lemma \ref{th:un} is valid only on the segment $[-1,1],$ while the shifted and rescaled spike $\phi(N\mathbf x-\mathbf n)$ requires ReLUs to act on a domain of size $O(N)$ if $\mathbf x\in[0,1]^d.$ The domain adaptation can be achieved simply by rescaling the ReLU using the identity $(Nx)_+=Nx_+$. As a result, by adding approximations for all the spikes, we can approximate $\wt_{\q}$ by a $\sigma$-network of size $O(nN^d)$ and depth $O(n)$ with uniform accuracy $O(2^{-n/2}N^{d+1})$. Let $N=W^{(1-\delta)/d}$ with some small $\delta>0$, and $n=c\log_2 W$ with some $c>2\tfrac{(1-\delta)(d+1)+r}{d}$. Then the accuracy of the $\wt_{\q}$ network is within the desired bound $\epsilon=O(W^{-r/d})$, while the size of the $\wt_{\q}$ network is $O(W^{1-\delta}\log W)$, also within the desired bound $W$.  

This shows that our task is essentially reduced to implementing the functions $\ft_{\q}$. We examine now the ReLU implementation of $\ft_{\q}$ summarized into 5 steps in the end of Section \ref{ss:singlegrid}. We observe that steps 3-5 (computation of the weighted sum of Taylor approximations in a $N$-patch) can be easily implemented with the activation function $\sigma$ instead of ReLU, by invoking again Lemma \ref{th:un} where necessary. In contrast, steps 1-2 require  more significant modifications since they directly involve encoding weights that need to be handled with a  high precision ($\sim W^{pd/r-1}$ bits). We first describe the suitable modification of step 2 (bit extraction), and then of step 1 (finding $\mathbf n_{\mathbf q}(\mathbf x)$ and the respective encoding weight).

\paragraph{Step 4: Bit extraction.}
The standard bit extraction procedure (see \cite{bartlett1998almost} and Fig. \ref{fig:approxdiscrete}) decodes a binary sequence from the encoding weight using threshold activation functions ($\lfloor\cdot\rfloor$) or their approximations by ReLU. In our present setting of polynomial approximation, we use instead a polynomial dinamical system. Specifically, consider the polynomial $v(x)=2-3x^2$. Consider the disjoint intervals $I_0=[\tfrac{1}{2},1], I_1=[-1,-\tfrac{1}{2}]$ and observe that they are contained in the interval $[-1,1]$ which is in turn contained in either of the images $v(I_0), v(I_1)$. Consider a sequence $w_1,\ldots,w_n$ defined by $w_k=v(w_{k-1})$ with some initial value $w_1.$   
\begin{lemma}\label{lm:binarypol}
For any binary sequence $b_1,\ldots,b_n\in \{0,1\}$, there exists an interval $I\subset [-1,1]$ of length at least $6^{-n}$ such that for any initial value $w_1\in I$ we have $w_k\in I_{b_k}$ for all $k=1,\ldots,n$.   
\end{lemma}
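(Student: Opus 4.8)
The plan is to construct $I$ by backward iteration of the quadratic map $v(x)=2-3x^2$ along the prescribed itinerary $b_1,\ldots,b_n$, controlling the rate at which the interval shrinks under each pullback by a uniform bound on $|v'|$ over $I_0\cup I_1$. For $k=n,n-1,\ldots,1$ I would set
\[
A_k=\bigl\{x\in I_{b_k}\ :\ v(x)\in I_{b_{k+1}},\ \ldots,\ v^{\,n-k}(x)\in I_{b_n}\bigr\},
\]
so that $A_n=I_{b_n}$ and $A_k=I_{b_k}\cap v^{-1}(A_{k+1})$. Since $w_j=v^{\,j-1}(w_1)$, a point $w_1$ satisfies $w_j\in I_{b_j}$ for all $j=1,\ldots,n$ exactly when $w_1\in A_1$. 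It therefore suffices to exhibit inside $A_1$ an interval of length at least $6^{-n}$, and take $I$ to be that interval.

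I would prove by downward induction on $k$ that $A_k$ contains an interval of length at least $\tfrac12\cdot 6^{-(n-k)}$. The base case $k=n$ is immediate, as $A_n=I_{b_n}$ has length $\tfrac12$. For the step, suppose $A_{k+1}$ contains an interval $K$ of length $\ell\ge\tfrac12\cdot 6^{-(n-k-1)}$; note $K\subset I_{b_{k+1}}\subset[-1,1]$. The facts about $v$ that I would use are: on each of $I_0=[\tfrac12,1]$ and $I_1=[-1,-\tfrac12]$ the derivative $v'(x)=-6x$ has constant sign, so $v$ restricted to $I_0$ or to $I_1$ is a strictly monotone bijection onto its image; that image equals $[-1,\tfrac54]$ in both cases, which contains $[-1,1]\supset K$; and $|v'(x)|=6|x|\le 6$ on $I_0\cup I_1$. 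Hence the inverse branch $(v|_{I_{b_k}})^{-1}(K)$ is a genuine subinterval of $I_{b_k}$, it lies in $v^{-1}(A_{k+1})$ and therefore in $A_k$, and because $v$ expands lengths on $I_{b_k}$ by a factor at most $6$ this subinterval has length at least $\ell/6\ge\tfrac12\cdot 6^{-(n-k)}$. This closes the induction; at $k=1$ it gives an interval of length at least $\tfrac12\cdot 6^{-(n-1)}=3\cdot 6^{-n}\ge 6^{-n}$, as required.

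The argument is a routine symbolic-dynamics pullback, so I do not expect a serious obstacle. The two points needing care are: (i) checking that $v$ maps each of $I_0$ and $I_1$ onto a superset of $[-1,1]$ — this is exactly what ensures that the required inverse branch exists and lands inside $I_{b_k}$, so that the recursion $A_k=I_{b_k}\cap v^{-1}(A_{k+1})$ keeps a full interval rather than being cut down by the constraint $x\in I_{b_k}$; and (ii) the distortion bound $|v'|\le 6$ on $I_0\cup I_1$, which is what pins the constant $6^{-n}$. One actually gets the slightly better bound $3\cdot 6^{-n}$, and each $A_k$ is in fact an interval, but neither refinement is needed.
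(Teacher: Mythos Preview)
Your proof is correct and follows essentially the same approach as the paper: constructing $I$ by backward pullback $I^{(k-1)}=v^{-1}(I^{(k)})\cap I_{b_{k-1}}$ and bounding the shrinkage by $|v'|\le 6$. You supply more detail than the paper's terse argument (monotonicity of $v$ on each $I_j$, surjectivity onto $[-1,5/4]\supset[-1,1]$, the resulting $3\cdot 6^{-n}$ bound), but the underlying idea is identical.
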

\begin{proof}
The interval $I$ can be constructed by sequentially forming pre-images, $I^{(k-1)}=v^{-1}(I^{(k)})\cap I_{b_{k-1}}$, where $ k=n,n-1,\ldots,2,$ and $I^{(n)}=I_{b_{n}}.$ Then $I=I^{(1)};$ the lower bound on the length of $I$ follows since $|\tfrac{dv}{dx}|\le 6$ on $[-1,1].$ 
\end{proof}
The lemma shows that we can decode a length-$n$ binary sequence by a $\sigma$-network of size $O(n)$ starting from an encoding weight defined with precision $6^{-n}$. In contrast to the original ($\lfloor\cdot\rfloor$-based) bit extraction, the values decoded in the present polynomial procedure contain some uncertainty: we only know that $w_k$ belong to one of the intervals $I_0$ or $I_1$. However, this uncertainty is not important: first, we can reduce it to an arbitrary magnitude by small-size subnetworks implementing a polynomial $u_n$ from Lemma \ref{th:un}; second, by Proposition \ref{prop:adjacent} and Eq.\eqref{eq:sufcond}, some level of uncertainty in the Taylor coefficients $\a_{\m, \k}$ is tolerable.

\paragraph{Step 5: Computation of the encoding weight corresponding to given input $\mathbf x$.}
In the proof for ReLU networks, the position $\mathbf n_{\mathbf q}(\mathbf x)$ of the $N$-knot containing the given point $\mathbf x$, and the respective encoding weight, were determined   {exactly} thanks to the ability of ReLU networks to exactly represent functions piecewise linear on the standard triangulation (see Proposition \ref{prop:constant_interpolation}). This is no longer possible with general activation functions $\sigma$ or polynomials; any $\sigma$-network trying to determine the encoding weight will inevitably do it with some error. However, though the precision requirement for encoding weights is high,  we can use part 1 of Lemma \ref{th:un} to bring this error to an acceptable level without substantially increasing the network size. 

Indeed, consider a particular $N$-knot $\tfrac{\mathbf n}{N}\in\mathbf N_{\mathbf q}$ and first construct a map $z_{\mathbf n}(\mathbf x)$ such that $z_{\mathbf n}(\mathbf x)\in [\tfrac{1}{2},1]$ for $\mathbf x$ belonging to the corresponding $N$-patch, while $z_{\mathbf n}(\mathbf x)\in [-1,-\tfrac{1}{2}]$ for $\mathbf x$ belonging to the other $N$-patches. Arguing as in Step 3, such a map can be implemented by a $\sigma$-network of size $O(\log W)$, by approximating the respective ReLU map. 

Next, let $z_{\mathbf n, n}=u_n\circ z_{\mathbf n},$ where $u_n$ is given in Eq.\eqref{eq:un}. Using Lemma \ref{th:un} with $n=O(\log D)$, we can ensure that $|z_{\mathbf n, n}(\mathbf x)-1|<7^{-D}$ on the $\mathbf n$'th patch while $|z_{\mathbf n, n}(\mathbf x)+1|<7^{-D}$ on the other patches. Here, $D$ corresponds to the number of iterations in Lemma \ref{lm:binarypol} and is proportional to the depth of the decoding subnetwork, i.e. $D\sim (M/N)^d\sim W^{pd/r-1}$ so that $\log D = O(\log W).$

We can now combine all the maps $z_{\mathbf n, n}$ into the map $Z(\mathbf x)=\tfrac{1}{2}\sum_{\mathbf n:\mathbf n/N\in\mathbf N_{\mathbf q}}(z_{\mathbf n, n}(\mathbf x)+1)w_{\mathbf n},$ where $w_{\mathbf n}$ is the desired encoding weight in the $\mathbf n$'th patch. By construction, for any $\mathbf x$ in the $\mathbf n$'th patch we have $|Z(\mathbf x)-w_{\mathbf n}|=O(N^d 7^{-D})$, which satisfies the accuracy requirement $6^{-D}$ of Lemma \ref{lm:binarypol}. On the other hand, the size of the $\sigma$-network implementing $Z(\mathbf x)$ is $O(N^d\log W).$ Choosing $N\sim W^{(1-\delta)/d}$ with arbitrarily small $\delta>0,$ this size fits the available budget $W.$

\section{Expressiveness of networks with Lipschitz activation functions and slowly growing weights}\label{sec:weightbounds}
In this section we clarify why, as mentioned in Section \ref{sec:otheractiv}, under mild assumptions on the growth of network weights, networks with any bounded Lipschitz activation function (in particular, the standard sigmoid $\sigma(x)=1/(1+e^{-x})$) can only achieve the approximation rates $p\le \tfrac{2r}{d}.$ This follows from existing upper bounds on the covering numbers for such networks, in particular \cite[Theorem 14.5]{anthony2009neural}. 

Specifically, consider a neural network with the following properties. Suppose that the network neurons have (possibly different) Lipschitz activation functions $\sigma$ such that $|\sigma(x)|\le b$ and $|\sigma(x)-\sigma(y)|\le a|x-y|$ for all $x,y\in\mathbb R$. Suppose that there is a constant  $V>1/a$ such that for any weight vector $\mathbf w$ associated with a particular neuron, its $l^1$-norm $\|\mathbf w\|_1$ is bounded by $V$. Assume that the network has $L\ge 2$ layers, with connections only between adjacent layers, and has $W$ weights. Assume finally that the neurons in the first layer have non-decreasing activation functions. Let $F$ denote the family of functions on $[0,1]^d$ implementable by such a network.

For any finite subset $S\subset [0,1]^d$ consider the restriction $F|_S$ as a subset of $\mathbb R^{|S|}$ equipped with the uniform norm $\|\cdot\|_\infty$.  We define the \emph{covering number} $N_\infty(\epsilon, F, S)$ as the smallest number of $\epsilon$-balls in $\mathbb R^{|S|}$ covering the set $F|_S$. Then, for any integer $m>0$, we define the covering number $N_\infty(\epsilon, F, m)=\max_{S\subset [0,1]^d, |S|=m}N_\infty(\epsilon, F, m).$ We then have the following bound.

\begin{theorem}[Theorem 14.5 of \cite{anthony2009neural}] $N_\infty(\epsilon,F,m)\le \big(\tfrac{4embW(aV)^L}{\epsilon (aV-1)}\big)^W.$
\end{theorem}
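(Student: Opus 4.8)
The statement is a covering-number bound for networks with bounded Lipschitz activations and $\ell^1$-bounded weights, and the plan is to prove it by a layer-by-layer Lipschitz propagation argument, reducing the problem to covering the finite-dimensional parameter space and then invoking a standard volumetric covering bound on a box in $\mathbb R^W$. First I would fix a finite set $S\subset[0,1]^d$ with $|S|=m$ and think of the map $\Phi\colon \mathbf w\mapsto f_{\mathbf w}|_S\in\mathbb R^m$ sending a weight vector to the vector of network outputs on $S$; the goal is to bound the $\|\cdot\|_\infty$-covering number of the image $\Phi(\mathcal W)$, where $\mathcal W$ is the parameter domain. Since each neuron's weight vector has $\ell^1$-norm at most $V$, and the activation outputs lie in $[-b,b]$ (except the inputs themselves, which lie in $[0,1]\subset[-b,b]$ as long as $b\ge 1$, or one treats the first layer separately using monotonicity), the total domain $\mathcal W$ is contained in a box, and what matters is the diameter of $\mathcal W$ relative to the Lipschitz constant of $\Phi$ in $\|\cdot\|_\infty$.

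The key quantitative step is to estimate how a perturbation of the weights propagates to a perturbation of the output. Consider two weight vectors $\mathbf w, \mathbf w'$ and track the discrepancy $\delta_\ell$ between the activation vectors at layer $\ell$ under the two parameterizations, measured in $\|\cdot\|_\infty$. At each neuron, the pre-activation is $\sum_k w_k z_k + h$; changing the weights of this neuron contributes at most $\sum_k|w_k-w_k'|\,|z_k| + |h-h'| \le (\|\mathbf w\|_1+1)$ times the max perturbation of the relevant weights, which is bounded using $|z_k|\le b$ (and $b\ge 1$ for the input coordinates), while propagating the incoming perturbation $\delta_{\ell-1}$ contributes at most $\|\mathbf w\|_1 \cdot \delta_{\ell-1}\le V\delta_{\ell-1}$. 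After composing with the Lipschitz activation (constant $a$), one gets a recursion roughly of the form $\delta_\ell \le a(V\delta_{\ell-1} + (V+1)\Delta)$ where $\Delta$ is the $\ell^\infty$-distance between the full weight vectors. Unrolling over $L$ layers yields $\delta_L \le \Delta\cdot C (aV)^L/(aV-1)$ for an explicit constant $C$ absorbing $b$; this is exactly the $(aV)^L/(aV-1)$ factor appearing in the theorem. Thus $\Phi$ is Lipschitz from $(\mathcal W,\|\cdot\|_\infty)$ to $(\mathbb R^m,\|\cdot\|_\infty)$ with constant $\sim b(aV)^L/(aV-1)$.

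Given this, an $\epsilon$-cover of $\Phi(\mathcal W)$ in $\|\cdot\|_\infty$ is obtained from an $\epsilon'$-cover of $\mathcal W$ in $\|\cdot\|_\infty$ with $\epsilon' = \epsilon/(\text{Lipschitz constant})$. The domain $\mathcal W$ sits inside a box in $\mathbb R^W$ whose side length is controlled (each weight is effectively bounded by $V$, or one notes that only ratios/differences up to $V$ matter after accounting for the bounded signals), so the number of $\ell^\infty$-balls of radius $\epsilon'$ needed to cover it is at most $(\text{side}/\epsilon')^W$, i.e. $\big(\tfrac{4 e m b W (aV)^L}{\epsilon(aV-1)}\big)^W$ after bookkeeping of the constants — the factors $m$ and $W$ in the numerator arise from being slightly generous in bounding side lengths and the number of distinct weights that can independently influence a given output coordinate, and the $e$ is the usual slack in volumetric covering estimates.

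The main obstacle I expect is getting the constants to match \emph{exactly} the form $\tfrac{4embW(aV)^L}{\epsilon(aV-1)}$ rather than merely up to universal constants: this requires being careful about the first layer (where inputs are in $[0,1]$ rather than bounded by $b$, and monotonicity is used), about whether biases count toward $\|\mathbf w\|_1$ or are handled separately, and about the precise way the per-layer recursion telescopes to produce $(aV)^L/(aV-1)$ instead of, say, $((aV)^L-1)/(aV-1)$ or $L(aV)^L$. Since the theorem is quoted verbatim as \cite[Theorem 14.5]{anthony2009neural}, the cleanest route for this paper is simply to cite that reference and not reprove it; but the sketch above is how one would verify it, and the delicate point is entirely in the constant-tracking, not in any conceptual difficulty.
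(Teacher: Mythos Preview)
The paper does not prove this statement at all: it is quoted verbatim from \cite[Theorem~14.5]{anthony2009neural} and used as a black box, exactly as you note in your final paragraph. So in that sense your ``cite and move on'' conclusion matches the paper precisely.

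As for the sketch you give of how one \emph{would} prove it: the Lipschitz-in-parameters / volumetric-cover approach is a legitimate route to a covering-number bound of this general shape, but it is not the argument in \cite{anthony2009neural}, and there is one place where your sketch does not quite close. A direct $\ell^\infty$-cover of the parameter box in $\mathbb{R}^W$ yields a bound of the form $(C V (aV)^L/\epsilon)^W$ with \emph{no dependence on the sample size $m$}. The factor $m$ in the quoted bound is not a slack constant one can manufacture by ``being slightly generous''; it arises because Anthony--Bartlett cover the \emph{function class} layer by layer (not the parameter space), bounding at each layer the empirical $\ell^\infty$-covering number of $\{w\cdot z:\|w\|_1\le V\}$ on $m$ points via a pseudo-dimension argument (their Theorem~12.2), which is where the $em$ comes from, and then composing across layers via the Lipschitz constant $a$, which produces the geometric factor $(aV)^L/(aV-1)$. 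Your recursion for the propagated error $\delta_\ell$ is morally the same telescoping, but the base covering step is different, and that difference is exactly the $m$.
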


To obtain the desired bound on approximaton rates for H\"older balls $F_{r,d}$, we can now lower-bound $N_\infty(\epsilon,F,m)$ using the $\epsilon$-capacity of H\"older balls. Specifically, observe that the H\"older ball $F_{r,d}$ contains a set $\Phi_\epsilon$ of at least $M_\epsilon=2^{c_{r,d}\epsilon^{-d/r}}$ functions separated by $\|\cdot\|_\infty$-distance $4\epsilon$ (with some constant $c_{r,d}>0$). These functions can be constructed by a standard argument in which we choose in $[0,1]^d$ a grid $S_\epsilon$ of size $c_{r,d} \epsilon^{-d/r}$ (with a spacing $\sim \epsilon^{1/r}$), and then place a properly rescaled spike function with the sign $+$ or $-$ at each point of the grid. The functions of $\Phi_\epsilon$ are mutually $4\epsilon$-separated when restricted to the grid $S_\epsilon$. If our family $F$ of network-implementable functions can $\epsilon$-approximate any function from the balls $F_{r,d}$, then any $\epsilon$-net for $F|_{S_\epsilon}$ is a $2\epsilon$-net for $\Phi_\epsilon|_{S_\epsilon}$, and thus must contain at least $M_\epsilon$ elements. Hence, $M_\epsilon\le N_\infty(\epsilon,F,S_\epsilon)\le N_\infty(\epsilon,F,c_{r,d}\epsilon^{-d/r}),$ i.e. 
\begin{equation}\label{eq:ccrd}
    c_{r,d}\epsilon^{-d/r}\le W\log_2 \big(\tfrac{4e c_{r,d}\epsilon^{-d/r-1}bW(aV)^L}{aV-1}\big).
\end{equation}
Assuming that  $1/\epsilon, W, L, V$ grow while the other parameters are held constant, this bound implies that $$\epsilon\ge c_{r,d,a,b}(WL)^{-r/d}\ln^{-r/d} V$$ with some $c_{r,d,a,b}>0.$

Now suppose that $V$ is a function of $W$, i.e. the magnitude of the weights is allowed to depend on the network size. Suppose that the network achieves the approximation rate $p$, i.e. 
\begin{equation}\label{eq:ccw}
   \epsilon \le C_{r,d,a,b} W^{-p}.
\end{equation} 
Since $L\le W$, comparing Eq.\eqref{eq:ccrd} with Eq.\eqref{eq:ccw}, we then find that 
\begin{equation}\label{eq:clb}
   \ln V\ge c'_{r,d,a,b} W^{pd/r-2}.
\end{equation}
Thus, the rates $p>\tfrac{2r}{d}$ require $V$ to very rapidly grow with $W$. This observation agrees with the main result of Section \ref{sec:otheractiv} -- Theorem \ref{th:sin} -- describing approximation with arbitrary rates $p$ by networks with a periodic activation function. In the proof of this theorem, the network weights are defined with the help of rapidly growing constants $a_k$ given in Eq.\eqref{eq:la}. In particular, we have $\log a_K\sim 2^K$ with $K\sim W^{1/2},$ which agrees with the lower bound \eqref{eq:clb}.

\section{Theorem \ref{th:sin}: sketch of proof}\label{sec:sketchsin}

We can assume without loss of generality  that the period $T=2$ and $\max_{x\in\mathbb R}\sigma(x)=-\min_{x\in\mathbb R}\sigma(x)=1$ (these values can always be effectively adjusted in each neuron by rescaling the input and output weights). We divide the proof into three steps. 

\paragraph{Step 1: reduction to patch-encoders and patch-classifiers.} Recall the concepts of coarser partition on the scale $\tfrac{1}{N}$ and the finer partition on the scale $\tfrac{1}{M}$ used in the proofs of Theorem \ref{th:deepphase} and \ref{th:constwidth}. In those theorems, both $N$ and $M$ were $\sim W^a$ with some constant powers $a$. In contrast, we choose now $N=1$, and we'll set $M$ to grow much faster (roughly exponentially) with $W$: this will be possible thanks to the much more efficient decoding available with the $\sin$ activation.

Specifically, note first that we can implement an almost perfect approximation of the parity function $\theta: x\mapsto (-1)^{\lfloor x\rfloor}$ using a constant size networks, by computing $a\sigma(x)$ with a large $a$ and then thresholding the result at 1 and $-1$ using ReLU operations (the approximation only fails in small neighborhoods of the integer points). If the cube $[0,1]^d$ is partitioned into cubic $M$-patches, we can apply rescaled versions of $\theta$ coordinate-wise to create a binary dictionary of these patches. Specifically, we can construct a network of size $\sim d\log_2 M$ that maps a given $\mathbf x\in [0,1]^d$ to a size-$K$ binary sequence encoding the place of the patch $\Delta_M\ni\mathbf x$ in the cube $[0,1]^d$, with $K\sim d\log_2 M$. We call this network the \emph{patch-encoder}.  

Given a function $f\in F_{r,d},$ we approximate it by a function $\widetilde f$ which is constant in each $M$-patch. Suppose for simplicity and without loss of generality that the smoothness $r\le 1,$ then this approximation has accuracy $\epsilon\sim M^{-r}$. Let $\widetilde f_{\Delta_M}$ be the value that the approximation returns on the patch $\Delta_M$. It is sufficient to define $\widetilde f_{\Delta_M}$ with precision $\sim M^{-r}$. Consider the binary expansion of $f_{\Delta_M}$ that provides this precision: $\widetilde f_{\Delta_M}=-1+\sum_{k=0}^R \widetilde f_{\Delta_M,k}2^{-k},$ where $R\sim r\log_2 M$ and $\widetilde f_{\Delta_M,k}\in\{0,1\}$. Suppose that for each $k$ we can construct a network that maps each patch $\Delta_M$ to the corresponding bit $\widetilde f_{\Delta_M,k}$. Summing these \emph{patch-classifiers} with coefficients $2^{-k}$, we then reconstruct the full approximation $\widetilde f$. 

We have thus reduced the task to efficiently implementing an arbitrary binary classifier on the $M$-partition of $[0,1]^d.$ The patch-encoder constructed above efficiently encodes each $M$-patch by a binary $K$-bit sequence. We can then think of the classifier as an assignment $A:\{0,1\}^K\to\{0,1\}$ that must be implemented by our network. We show below in Step 2 that this can be done by a size-$O(K)$ network, with the assignment encoded in a single weight $w_A$. The full number of network weights (including the patch-encoder and the patch-classifiers on all $R$ scales) can then be bounded by $W=O(KR),$ i.e. $W=O(rd\log_2^2 M)$. The relations $\epsilon\sim M^{-r}$ and $W\sim rd\log_2^2 M$ then yield $\epsilon\sim 2^{-c'W^{1/2}}$ (with $c'\sim\sqrt{r/d}$), as claimed in Eq.\eqref{eq:ratesin}. 

Note, however, that the proof strategy that we have described requires the network to have $R$ $f$-dependent encoding weights (one per each patch-classifier), while Statement 2 of the theorem claims a unique $f$-dependent weight. In Step 3, we will resolve this issue by showing that these $R$ weights can be decoded from a single weight with a subnetwork of size $O(R)$.

To make these arguments fully rigorous, we need to handle the issue of our approximation to the parity function $\theta$ becoming invalid near the boundaries of the patches.  This is done in Section \ref{sec:proofsin} using partitions of unity; the resulting complications do not affect the asymptotic. 

\paragraph{Step 2: implementation of a patch-classifier.} We explain now how an arbitrary assignment $A:\{0,1\}^K\to\{0,1\}$ can be implemented by a network of size $O(K)$ with a single encoding weight $w_A$. Let us define two sequences, $a_k$ and $l_k$:
\begin{equation}\label{eq:la}l_1=\tfrac{1}{2}, \quad a_1=2,\quad l_k=\min(\tfrac{l_{k-1}}{2},\tfrac{l_{k-1}}{a_kc_\sigma}),\quad a_k=\tfrac{4}{l_{k-1}},\end{equation}
where $c_\sigma$ is the Lipschitz constant of $\sigma$. Consider iterations $g_1\circ g_2\circ\ldots \circ g_K(w_*),$
in which each $g_k$ can be either the identity function $g_k(w)=w$, or $g_k(w)=\sigma (a_k w),$ with  some initial value $w_*$. For each $\mathbf z\in\{0,1\}^K$, let us define $H_{K, w_*}(\mathbf z)$ as the $\operatorname{sgn}$ of the value obtained by substituting the respective functions:
$$
H_{K, w_*}(\mathbf z)=\operatorname{sgn}\circ\begin{cases}\operatorname{Id},& z_1=0,\\ 
\sigma(a_1\cdot), & z_1=1\end{cases}\circ
\begin{cases}\operatorname{Id},& z_2=0,\\ 
\sigma(a_2\cdot), & z_2=1\end{cases}\circ
\ldots\circ
\begin{cases}\operatorname{Id},& z_K=0,\\ 
\sigma(a_K\cdot), & z_K=1\end{cases} (w_*) $$
\begin{lemma}\label{lem:sindecoding}
For any assignment $A:\{0,1\}^K\to \{0,1\}$ there exists $w_A\in\mathbb R$ such that $H_{K,w_A}(\mathbf z)=A(\mathbf z)$ for all $\mathbf z\in \{0,1\}^K$.
\end{lemma}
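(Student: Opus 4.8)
The plan is to prove, by induction on $K\ge 1$, a statement producing not a single good value $w_A$ but an entire \emph{interval} of them — this slack being what lets the induction close. Throughout I identify $\{0,1\}$ with $\{-1,1\}$ and read $\operatorname{sgn}$ as the $\{-1,1\}$-valued sign, and I use the normalization $T=2$, $\max_x\sigma(x)=-\min_x\sigma(x)=1$ (so $\sigma$ vanishes at the integers, is positive on each $(2m,2m+1)$, negative on each $(2m+1,2m+2)$, and, being periodic and continuous, attains both values $\pm1$ within every period). With $a_k,l_k,c_\sigma$ as in \eqref{eq:la}, the inductive claim $(\star_K)$ is: for every $A\colon\{0,1\}^K\to\{-1,1\}$ there is an interval $J_A\subseteq[-1,1]$ with $|J_A|\ge l_K$ such that $H_{K,w}(\mathbf z)=A(\mathbf z)$ for all $w\in J_A$ and all $\mathbf z$. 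The lemma follows by taking $w_A\in J_A$. The base case $K=1$ is checked by hand: since $H_{1,w}(z_1)$ equals $\operatorname{sgn}(w)$ for $z_1=0$ and $\operatorname{sgn}(\sigma(2w))$ for $z_1=1$, and $a_1=2$, the assignment $(+,+)$ is realized on $(0,\tfrac12)$, $(+,-)$ on $(\tfrac12,1)$, $(-,+)$ on $(-1,-\tfrac12)$, and $(-,-)$ on $(-\tfrac12,0)$ — intervals of length $l_1=\tfrac12$ inside $[-1,1]$.

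For the inductive step, fix $K\ge2$ and $A\colon\{0,1\}^K\to\{-1,1\}$. In $H_{K,w}$ the operation applied \emph{first} to $w$ is $g_K$, namely $\operatorname{Id}$ if $z_K=0$ and $w\mapsto\sigma(a_Kw)$ if $z_K=1$; hence $H_{K,w}(\mathbf z)=H_{K-1,w}(z_1,\dots,z_{K-1})$ when $z_K=0$ and $H_{K,w}(\mathbf z)=H_{K-1,\sigma(a_Kw)}(z_1,\dots,z_{K-1})$ when $z_K=1$, where $H_{K-1}$ uses the scalars $a_1,\dots,a_{K-1}$ and so is covered by $(\star_{K-1})$. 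Put $A_b(z_1,\dots,z_{K-1})=A(z_1,\dots,z_{K-1},b)$ and let $J_{A_0},J_{A_1}\subseteq[-1,1]$ be the intervals of length $\ge l_{K-1}$ supplied by $(\star_{K-1})$. It now suffices to exhibit an interval $J_A\subseteq J_{A_0}$ with $|J_A|\ge l_K$ and $\sigma(a_KJ_A)\subseteq J_{A_1}$: then each $w\in J_A$ handles the branch $z_K=0$ because $w\in J_{A_0}$, and the branch $z_K=1$ because $\sigma(a_Kw)\in J_{A_1}$.

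To build $J_A$: by \eqref{eq:la} the interval $a_KJ_{A_0}$ has length $\ge a_Kl_{K-1}=4$, i.e.\ it spans two full periods of $\sigma$, and in particular contains a full period $[t,t+2]$ together with a unit-length buffer on each side. On $[t,t+2]$ the function $\sigma$ attains $+1$ and $-1$, so $\sigma([t,t+2])=[-1,1]$ by the intermediate value theorem; pick $p\in[t,t+2]$ with $\sigma(p)=c$, where $c$ is the midpoint of $J_{A_1}$ (so $c\in(-1,1)$). Set $\delta=\min\big(c_\sigma,\tfrac12 l_{K-1}\big)$; then $\delta/c_\sigma\le1$, so $[p-\delta/c_\sigma,\,p+\delta/c_\sigma]\subseteq a_KJ_{A_0}$, while the Lipschitz bound gives $\sigma\big([p-\delta/c_\sigma,\,p+\delta/c_\sigma]\big)\subseteq[c-\delta,c+\delta]\subseteq J_{A_1}$ (using $\delta\le\tfrac12 l_{K-1}\le\tfrac12|J_{A_1}|$). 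Hence $J_A:=\tfrac1{a_K}[p-\delta/c_\sigma,\,p+\delta/c_\sigma]$ is contained in $J_{A_0}$, satisfies $\sigma(a_KJ_A)\subseteq J_{A_1}$, and has length $\tfrac{2\delta}{a_Kc_\sigma}=\min\big(\tfrac{2}{a_K},\tfrac{l_{K-1}}{a_Kc_\sigma}\big)=\min\big(\tfrac12 l_{K-1},\tfrac{l_{K-1}}{a_Kc_\sigma}\big)=l_K$, where the middle equality uses $a_K=4/l_{K-1}$ and the last one is the definition \eqref{eq:la}. This proves $(\star_K)$. (The minor boundary issues — the zeros of $\sigma$, and open versus closed intervals — are immaterial, since only the existence of a single $w_A$ is at stake.)

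The main obstacle will be exactly the geometric step in the previous paragraph, which is also where the specific constants of \eqref{eq:la} get consumed: that rescaling by the rapidly (doubly-exponentially) growing factor $a_k$ turns any interval of length $\ge l_{k-1}$ into one spanning two full periods of $\sigma$, on which periodicity and the intermediate value theorem allow $\sigma$ to be ``steered'' onto an arbitrary target subinterval of $[-1,1]$ of length $\ge l_{k-1}$, while the Lipschitz bound on $\sigma$ bounds the size of the steering preimage from below by precisely $l_k$. Everything else is the bookkeeping of this outside-in recursion on $K$ together with the explicit two-bit base case.
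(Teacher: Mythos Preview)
Your proof is correct and follows essentially the same approach as the paper: a strengthened induction on $K$ producing an interval of valid $w_A$'s of length $\ge l_K$, with the inductive step splitting $A$ on the last bit, using periodicity and the intermediate value theorem to hit the center of the target interval $J_{A_1}$, and the Lipschitz bound to ensure the preimage has length $l_K$. The only cosmetic difference is that you work in the rescaled coordinates $a_K\cdot$ (finding $p\in a_KJ_{A_0}$) whereas the paper stays in the original coordinates (finding $w'$ near the center of $I_{K-1}^{(0)}$); the arithmetic is identical.
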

\begin{proof}
Proof by induction on $K$, but of a slightly sharper statement: the desired $w_A$ not only exist, but fill (at least) an interval $I_{K}\subset [-1,1]$ of length $l_K$.

The base $K=1$ follows immediately from the 2-periodicity of $\sigma$ and the  hypothesis that $\sigma(x)>0$ for $x\in [0,1]$ while $\sigma(x)<1$ for $x\in[1,2]$. Suppose we have proved the statement for $K-1$. Given an assignment $A:\{0,1\}^K\to \{0,1\}$, consider it as a pair of assignments $A_0:\{0,1\}^{K-1}\to \{0,1\},A_1:\{0,1\}^{K-1}\to \{0,1\}.$ By the induction hypothesis, we can find two intervals $I_{K-1}^{(0)}$ and $I_{K-1}^{(1)}$ of length $l_{K-1}$ such that $H_{K-1,w_0}(\mathbf z)=A_0(\mathbf z)$  and $H_{K-1,w_1}(\mathbf z)=A_1(\mathbf z)$ for all $w_0\in I_{K-1}^{(0)},w_1\in I_{K-1}^{(1)}$ and $\mathbf z\in \{0,1\}^{K-1}$. Consider the set 
\begin{equation}\label{eq:iw}I=\{w\in\mathbb R:w\in I_{K-1}^{(0)}\text{ and }\sigma(a_K w)\in I_{K-1}^{(1)}\}.\end{equation}
Then for any $w\in I$, we have the desired property $H_{K,w}(\mathbf z)=A(\mathbf z),\forall \mathbf z\in \{0,1\}^K$. We need to show now that $I$ contains an interval of length $l_K$. Observe that, by the relation $a_K=\tfrac{4}{l_{K-1}}$ from Eq.\eqref{eq:la}, the length $l_{K-1}$ of the interval $I^{(0)}_{K-1}$ is twice as large as the period $\tfrac{2}{a_K}$ of the function $\sigma(a_K\cdot).$  Using the assumption that $\max \sigma(x)=-\min\sigma(x)=1$, we see that the function $\sigma(a_K\cdot)$ attains both values 1 and -1 on its period. It follows then by continuity of $\sigma$ that there exists a point $w'$ at a distance not more than $\tfrac{l_{K-1}}{4}$ from the center of $I^{(0)}_{K-1}$ such that $\sigma(a_Kw')$ attains any given value from the interval $[-1,1]$. Let this value be the center  $w''$ of the interval $I_{K-1}^{(1)}$. Since the function $\sigma(a_K\cdot)$ is Lipschitz with constant $a_Kc_\sigma$, we have $|\sigma(a_Kw)-\sigma(a_Kw')| < \tfrac{l_{K-1}}{2}$ for any $w$ such that $|w-w'|<\tfrac{l_{K-1}}{2a_Kc_{\sigma}}$. Then it follows from the definition $l_K=\min(\tfrac{l_{K-1}}{2},\tfrac{l_{K-1}}{a_Kc_\sigma})$ that the length-$l_K$ interval centered at $w'$ is contained in $I$ given by Eq.\eqref{eq:iw}.
\end{proof}
This lemma shows that the network can implement any classifier $A$ if the network can somehow branch into applying either $\operatorname{Id}$ or $\sigma(a_k\cdot)$ depending on the signal bit $b\in\{0,1\}$ that is output by the patch-encoder subnetwork. This branching can be easily implemented by forming the linear combination $(1-b)x+b\sigma(a_kx)$, and also noting that a product of any $x\in\{0,1\}$ and $y\in[-1,1]$ admits the ReLU implementation $xy= \max(0,2x+y-1)-x$.

\paragraph{Remark.} The construction in Lemma \ref{lem:sindecoding} can be interpreted as a dichotomy-based lookup if we think of the assignment $A$ as a binary sequence of size $S=2^K.$ In each of the network steps we divide the sequence in half, ultimately locating the desired bit in $K\sim\log_2 S$ steps. We can compare this with the less efficient bit extraction procedure of \cite{bartlett1998almost} (for which it is however sufficient to only have the ReLU activation in the network). In this latter procedure, the bits are extracted from the encoding weight one-by-one, and so the lookup requires $\sim S$ steps. 

\paragraph{Step 3: ensuring a unique $f$-dependent weight.} Steps 1 and 2 have shown that the desired network can be constructed using at most $R$ $f$-dependent weights, say $w^*_1,\ldots,w^*_R$. We observe now that these values can be approximated arbitrarily well by a serial application of the rescaled activation $\sigma$: 

\begin{lemma}
Let $w^*_1,\ldots,w^*_R\in[-1,1].$ Fix $a>0$ and consider the sequence $w_1',\ldots,w_R'$ defined by $w_k'=\sigma(aw'_{k-1})$ with some initial $w_1'$. Then we can find an initial $w_1'\in[-1,1]$ such that $|w^*_k-w_k'|<\tfrac{2}{a}$ for all $k=1,\ldots,R.$ 
\end{lemma}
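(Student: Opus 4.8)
\emph{Proof plan.} Up to the absence of any quantitative width bound, this is the same kind of backward pre-image construction as in Lemma \ref{lm:binarypol} and in the induction step of Lemma \ref{lem:sindecoding}: the plan is to build a nested family of nonempty intervals and read an admissible $w_1'$ off it. Throughout, recall that $\sigma$ has (normalized) period $2$, is continuous, and attains $\max\sigma=1$ and $\min\sigma=-1$.

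Three preliminary observations come first. (i) I may assume $a\ge 1$: if $a<1$, then $|w^*_k-w'_k|\le 2<\tfrac2a$ holds automatically for all $k$, since $w^*_k\in[-1,1]$ and $w'_k=\sigma(aw'_{k-1})\in[-1,1]$. (ii) The map $v\mapsto\sigma(av)$ is continuous and $\tfrac2a$-periodic, so over any closed interval of length $\tfrac2a$ its image equals $[-1,1]$ (by the intermediate value theorem, the continuous $\sigma$ runs between its attained extrema $-1$ and $1$ over a full period); over a half-open interval of length $\tfrac2a$ the image still contains all of $[-1,1]$ with at most the single value attained at the missing endpoint removed. (iii) For $a\ge 1$ and any $w^*\in[-1,1]$, the window $U(w^*):=(w^*-\tfrac2a,w^*+\tfrac2a)\cap[-1,1]$ contains an interval of length at least $\tfrac2a$; this is a short case check on which of $w^*\pm\tfrac2a$ lie in $[-1,1]$, with $a\ge1$ used only in the case where both are clipped.

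Next I construct, by downward induction on $k=R,R-1,\ldots,1$, nonempty open intervals $I_k\subset U(w^*_k)$ such that $\sigma(aI_k)\subset I_{k+1}$ for $k<R$. For $k=R$ take any nonempty open subinterval of $U(w^*_R)$. Given $I_{k+1}$: by (iii) the window $U(w^*_k)$ contains a subinterval of length at least one period of $v\mapsto\sigma(av)$, whose image under this map is, by (ii), all of $[-1,1]$ except at most one point; since the open interval $I_{k+1}$ is not a single point and the preimage of the open set $I_{k+1}$ under $v\mapsto\sigma(av)$ is open, there is an interior point $v_0$ of $U(w^*_k)$ with $\sigma(av_0)\in I_{k+1}$, and by continuity a sufficiently small open interval $I_k\ni v_0$ satisfies $I_k\subset U(w^*_k)$ and $\sigma(aI_k)\subset I_{k+1}$. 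Finally, pick any $w_1'\in I_1\subset[-1,1]$: forward iteration gives $w_k'=\sigma(aw_{k-1}')\in\sigma(aI_{k-1})\subset I_k\subset(w^*_k-\tfrac2a,w^*_k+\tfrac2a)$ for $k=2,\ldots,R$, while $w_1'\in I_1\subset(w^*_1-\tfrac2a,w^*_1+\tfrac2a)$, i.e. $|w^*_k-w'_k|<\tfrac2a$ for all $k$, as claimed.

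Because the argument needs only nonemptiness of the intervals $I_k$, and no lower bound on their lengths, neither the Lipschitz constant $c_\sigma$ of $\sigma$ nor the careful length tracking of Lemmas \ref{lm:binarypol} and \ref{lem:sindecoding} enters here. The one fiddly part is the endpoint bookkeeping at $\pm1$ — it shows up in item (iii) and in the half-open case of item (ii) — and I expect this to be the main, entirely routine, obstacle.
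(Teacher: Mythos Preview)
Your proof is correct. The underlying idea is the same as the paper's, but the organization differs. The paper runs a short forward induction on $R$: assuming the result for $R-1$ supplies an initial value $\widetilde w\in[-1,1]$ handling $w^*_2,\ldots,w^*_R$, one then simply picks $w_1'$ with $\sigma(aw_1')=\widetilde w$ \emph{exactly} and $|w_1'-w^*_1|<\tfrac2a$, using that $\sigma(a\cdot)$ is onto $[-1,1]$ over any interval of length one period. So the paper tracks a single point via exact preimages, whereas you track nested open intervals $I_k$ by backward induction, consciously mirroring Lemmas~\ref{lm:binarypol} and~\ref{lem:sindecoding}. Your version is a bit longer but handles the endpoint bookkeeping at $\pm1$ and the degenerate case $a<1$ explicitly (the paper glosses over both); the paper's version is shorter precisely because no interval-width tracking is needed here---only existence matters, so the simpler point-based argument suffices.
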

\begin{proof}
We use induction on $R$. The base $R=1$ is trival. Suppose we have already proved the statement for $R-1$. Let $\widetilde{w}$ be the corresponding initial value for the sequence $\widetilde{w},\sigma(a\widetilde{w}),\sigma(a\sigma(a\widetilde{w})),\ldots$ approximating the sequence $w^*_2,\ldots,w^*_R$. The function $\sigma(a\cdot)$ has period $\tfrac{2}{a},$ and attains all values from $[-1,1]$ on any interval of this length. It follows that we can find $w_1'$ such that $\sigma(aw_1')=\widetilde{w}$ and $|w^*_1-w_1'|<\tfrac{2}{a}$. This gives the desired $w_1'$.
\end{proof}
Thus, by taking some sufficiently large ($W$-dependent) $a$, we can  generate all the $R$ encoding weights $w^*_k$ with sufficient accuracy from a single weight by using an $f$-independent network of complexity $O(R)$, which is within the desired bound \eqref{eq:ratesin}.

In Figure \ref{fig:deepFourier} we show the overall network layout.  

\begin{figure}
\centering
\input{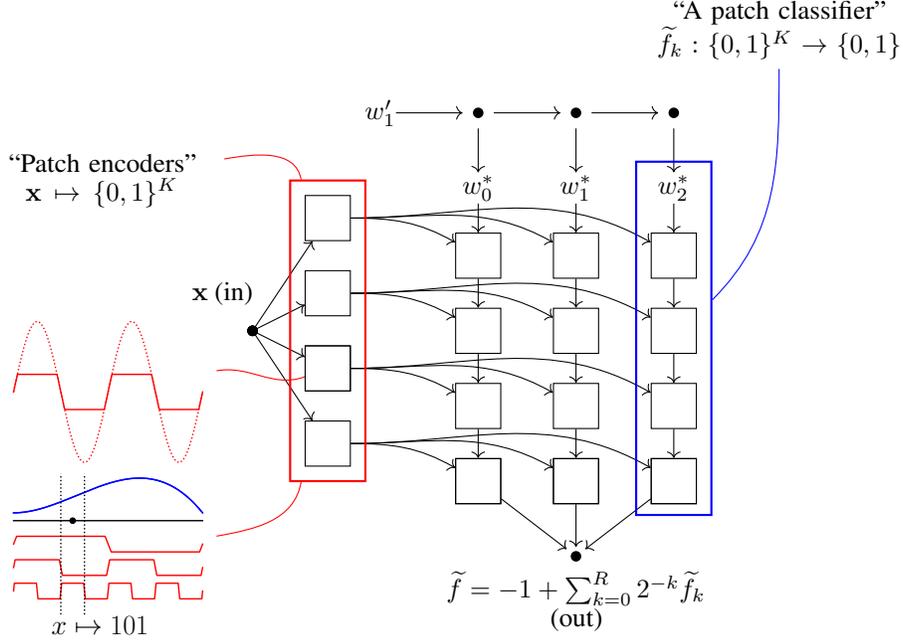}
\caption{The network layout overview for the ``deep Fourier'' approximation (see Section \ref{sec:sketchsin}).}
\label{fig:deepFourier}
\end{figure}

\paragraph{Information in the encoding weight.} Let us make a rough estimate of the amount of information contained in the constructed encoding weight. First, we can estimate it as $R$ (the number of patch classifiers) times the information in the weight $w_A$ corresponding to a single patch classifier (see Lemma \ref{lem:sindecoding}). We have $R\sim r\log_2 M\sim \log_2 \epsilon.$ The information in $w_A$ can be roughly estimated as $-\log_2 l_K$, where $l_K$ is the length of the interval $I_K$ appearing in the proof of Lemma \ref{lem:sindecoding}. From relations \eqref{eq:la}, for small $l_{k-1}$, by combining $l_k=\tfrac{l_{k-1}}{a_kc_\sigma}$ and $a_k=\tfrac{4}{l_{k-1}}$ we get $l_k=\tfrac{l^2_{k-1}}{4c_\sigma}$, which leads to $\log_2 l_K\sim 2^K.$ Since $K\sim  d\log_2 M\sim \tfrac{d}{r}\log_2 \epsilon,$ this gives $-\log_2 l_K\sim \epsilon^{-d/r}$. Summarizing, the total information can be roughly estimated as $\epsilon^{-d/r}\log(1/\epsilon).$

\section{Theorem \ref{th:sin}: proof details}\label{sec:proofsin}
Examining the sketch of proof given in Section \ref{sec:sketchsin}, we see that the only significant gap in the given argument is the treatment of boundaries of the patches. Namely, recall that we use approximations to the parity function $\theta(x)=(-1)^{\lfloor x\rfloor}.$ The approximations can be defined by a finite expression in terms of linear, ReLU and $\sin$ operations: 
$$\widetilde\theta_a(x)=\min(1, \max(-1, a\sigma( x))).$$
By taking $a$ large, we can make $\widetilde\theta_a$ to equal $\theta$ outside some small neighborhood of $\mathbb Z$. Now, recall that we choose patches $\Delta_M$ as cubes $[\tfrac{m_1}{M}, \tfrac{m_1+1}{M}]\times[\tfrac{m_2}{M}, \tfrac{m_2+1}{M}]\times\ldots[\tfrac{m_d}{M}, \tfrac{m_d+1}{M}].$ Assume without loss of generality that $M=2^U$ with some integer $U$. The patch-encoding functions $g_{u,k}:\mathbf x\mapsto\widetilde\theta_a(2^u x_k)$ (with $u=1,2,\ldots,U$ and $k=1,2,\ldots,d$) map the cubes $\Delta_M$ to the values $\pm 1$ everywhere except near the boundaries of these cubes. If we could slightly ``shrink'' the cubes $\Delta_M$ so that they were disjoint, we could adjust $a$ in $\widetilde\theta_a$ so that the functions $g_{u,k}$  were perfectly equal to $\pm 1$ on the whole cubes. The remaining construction of patch-classifying networks in Section \ref{sec:otheractiv} then becomes fully functional and yields the desired asymptotic relation \eqref{eq:ratesin}.

Thus, we need to show how to reduce the problem to the case of disjoint patches. This can be done by using suitable filtering functions, similarly to the proofs of Theorems \ref{th:deepphase} and \ref{th:constwidth}. Fix some $a_0>1$ and consider the functions $\Psi_0, \Psi_1:\mathbb R\to [0,1]$ defined by
$$\Psi_0(x)=\tfrac{1}{2}(1+\widetilde\theta_{a_0}(2Mx)) ,\quad \Psi_1=1-\Psi_0.$$
The functions $\Psi_0$ and $\Psi_1$ form a a two-element partition of unity. Furthermore, since $a_0>1,$ there is $\delta>0$ such that
\begin{align}
\Psi_0(x)=0&\text{ for }x\in(\tfrac{3}{4M}-\delta,\tfrac{3}{4M}+\delta)+\mathbb Z/M,\label{eq:psi}\\ 
\Psi_1(x)=0&\text{ for } x\in(\tfrac{1}{4M}-\delta,\tfrac{1}{4M}+\delta)+\mathbb Z/M.\label{eq:psi2}
\end{align}
Taking the product of the partitions of unity over the $d$ coordinates, we can write for  $\widetilde f:[0,1]^d\to\mathbb R$:
$$\widetilde f=\sum_{\mathbf q\in\{0,1\}^d}(\prod_{s=1}^d\Psi_{q_s})\widetilde f.$$
Thanks to Eqs.\eqref{eq:psi},\eqref{eq:psi2}, for each $\mathbf q\in\{0,1\}^d$, the filtering function $\prod_{s=1}^d\Psi_{q_s}$ vanishes in $[0,1]^d$ outside an $\tfrac{1}{M}$-grid of disjoint cubic patches, exactly as desired. We can then look for the approximation $\widetilde f$ in the form 
$$\widetilde f=\sum_{\mathbf q\in\{0,1\}^d}(\prod_{s=1}^d\Psi_{q_s})\widetilde f_{\mathbf q},$$
where $\widetilde f_{\mathbf q}$ has the required values only on the patches $[0,1]^d\setminus\operatorname{supp}(\prod_{s=1}^d\Psi_{q_s})$ and can be constructed as described in Section  \ref{sec:otheractiv}.

Having implemented these approximations $\widetilde f_{\mathbf q}$, the final approximation is obtained by implementing approximate products with the filters $\Psi_{q_s}$ and performing summation over $\mathbf q\in\{0,1\}^d$. As shown in \cite[Proposition~3]{yarsawtooth}, multiplication with accuracy $\epsilon$ requires a ReLU subnetwork with $O(\log(1/\epsilon))$ connections. This is asymptotically negligible compared to our bound for the total complexity of the patch-classifiers (which is $O(\log^2(1/\epsilon))$).  

\end{document}